\theoremstyle{plain}
\newtheorem{theorem}{Theorem}[section]
\newtheorem{lemma}[theorem]{Lemma}
\newtheorem{corollary}[theorem]{Corollary}
\theoremstyle{definition}
\newtheorem{assumption}[theorem]{Assumption}
\theoremstyle{remark}
\newtheorem{remark}[theorem]{Remark}
\icmltitlerunning{Sparse Approximate Kernel-Based Learning}
\def\nn{\nonumber}
\def\Ec{\mathcal{E}}
\def\Rc{\mathcal{R}}
\def\Xc{\mathcal{X}}
\def\Hc{\mathcal{H}}
\def\Oc{\mathcal{O}}
\def\Oct{\tilde{\mathcal{O}}}
\def\Sc{\mathcal{S}}
\def\Dc{\mathcal{D}}
\def\Ic{\mathcal{I}}
\def\Rr{\mathbb{R}}
\def\Nn{\mathbb{N}}
\def\E{\mathbb{E}}
\def\Xx{\mathbb{X}}
\def\kbar{\bar{k}}
\def\mubar{\bar{\mu}}
\def\sigmabar{\bar{\sigma}}
\def\betat{\tilde{\mathcal{\beta}}}
\DeclareMathOperator{\tr}{tr}
\begin{document}

\twocolumn[
\icmltitle{Improved Convergence Rates for Sparse \\ Approximation Methods in Kernel-Based Learning}




\begin{icmlauthorlist}
\icmlauthor{Sattar Vakili}{yyy}
\icmlauthor{Jonathan Scarlett}{comp}
\icmlauthor{Da-shan Shiu}{yyy}
\icmlauthor{Alberto Bernacchia}{yyy}
\end{icmlauthorlist}

\icmlaffiliation{yyy}{MediaTek Research}
\icmlaffiliation{comp}{National University of Singapore}

\icmlcorrespondingauthor{Sattar Vakili}{sattr.vakili@mtkresearch.com}

\icmlkeywords{Machine Learning, ICML}

\vskip 0.3in
]



\printAffiliationsAndNotice{}  

\begin{abstract}

Kernel-based models such as kernel ridge regression and Gaussian processes are ubiquitous in machine learning applications for regression and optimization. It is well known that a major downside for kernel-based models is the high computational cost; given a dataset of $n$ samples, the cost grows as $\Oc(n^3)$. Existing sparse approximation methods can yield a significant reduction in the computational cost, effectively reducing the actual cost down to as low as $\Oc(n)$ in certain cases. Despite this remarkable empirical success, significant gaps remain in the existing results for the analytical bounds on the error due to approximation. In this work, we provide novel confidence intervals for the Nystr\"om method and the sparse variational Gaussian process approximation method, which we establish using novel interpretations of the approximate (surrogate) posterior variance of the models.  Our confidence intervals lead to improved performance bounds in both regression and optimization problems. 

\end{abstract}


\section{Introduction}

Kernel-based modeling is an elegant and natural technique to extend linear models to highly general nonlinear ones with great representation capacity. Powerful predictors and uncertainty estimates have made this approach very common across several application domains~\citep{Shahriari2016outofloop}. It is, however, well known that regression using standard kernel-based methods incurs a high $\Oc(n^3)$ computational cost in the number of data points $n$, which is the main limiting factor in many applications. Fortunately, sparse approximation methods have been developed for kernel-based models, which significantly reduce the computational cost. In particular, the computational cost typically reduces to $\Oc(nm^2)$, with $m\ll n$ being the number of the \emph{inducing points} that serve to summarize the model. 
Despite their great popularity in practice, significant gaps remain in the mathematical understanding of such methods' bounds on the error due to approximation, and the implications for regression and optimization problems. This challenging issue is the main focus of this work.

The literature on kernel-based models has developed under two mostly separate approaches with different terminologies~\citep[e.g., see,][]{Kanagawa2018}: kernel ridge regression (KRR) and Gaussian process (GP) models. These two approaches are based on frequentist and Bayesian interpretations, respectively. Nonetheless, they lead to the same solution for regression. In this work, we cover both terminologies and provide insights into their connections. 

In the GP approach, a prior (surrogate) GP distribution is assumed for the unknown function $f$, and Gaussian noise is assumed. The regressor is then the posterior mean (the maximum likelihood estimate) of $f$, conditioned on the data points. In contrast, in KRR approach, the regressor is the regularized least squares estimator of $f$, within a reproducing kernel Hilbert space (RKHS). For further details, see~\cite{Rasmussen2006} on the former, and~\cite{scholkopf2002learning} on the latter. 

For GPs, sparse approximation methods find a Bayesian \emph{variational} interpretation and are referred to as sparse variational Gaussian processes (SVGPs) in the literature. The SVGP method was introduced in~\cite{Titsias2009Variational} and has been studied in several works including~\cite{Hensman2013, matthews2016sparse, Burt2019Rates, rossi2021sparse}. See also~\cite{liu2020gaussian} for a recent overview. The SVGP models enjoy great popularity in practice, with implementations in \texttt{TensorFlow}~\citep{tensorflow2015-whitepaper} and {\texttt{GPflow}}~\citep{GPflow2017} libraries. 

For KRR, sparse approximation methods are interpreted as regularized least squares estimators of $f$ within a reduced rank RKHS. This enables a Nystr\"om low rank matrix approximation; the corresponding sparse approximation method is thus often referred to as the Nystr\"om method. Representative works include~\cite{williams2001using, seeger2003fast, rudi2015less, lin2018optimal}.



\textbf{Contributions and paper structure.} In Section~\ref{sec:prelim}, we introduce the notations and preliminaries on kernel-based models and their sparse approximations. 
In Section~\ref{sec:conf_intervals}, we establish a novel representation of the approximate posterior variance of a GP model in terms of \emph{i)} the error due to the low rank projection of the approximation method, \emph{ii)} the prediction error from noise-free observations in the low rank RKHS, and \emph{iii)} the effect of noise (see Theorem~\ref{the:approximatevariance}). We then use this result to derive novel confidence intervals for approximate kernel-based models (see Theorem~\ref{the:conf_subG}).  These results/tools are broadly applicable in kernel-based models, and may be of general interest.

In Section~\ref{sec:regression}, we use our confidence intervals
to provide explicit bounds on the prediction error of a regression problem using sparse approximation methods (see, Theorem~\ref{the:regression}).
In particular, we prove that the sparse approximate prediction $\mubar_n$ of $f$, using $n$ well distributed data points, converges uniformly to $f$, with a number of inducing points no larger than $\Oct(\gamma_k(n))$; here, $\gamma_k(n)$ denotes the maximal information gain of the kernel, and can be accurately bounded for specific kernels~\citep[e.g., see][]{srinivas2010gaussian, vakili2021uniform, vakili2020information}. For example, in the case of the squared exponential (SE) kernel, our results imply that with $m=\Oc\left((\log(n))^{d+2}\right)$ inducing points, where $d$ is the dimension of the input, $\mubar_n$ uniformly converges to $f$ at rate $\Oct(\frac{1}{\sqrt{n}})$. In the case of the Mat{\'e}rn family of kernels, with $m=\Oct(n^{\frac{d}{2\nu+d}})$, $\mubar_n$ uniformly converges to $f$ at a rate $\Oct(n^{\frac{-\nu}{2\nu+d}})$, where $\nu$ is the smoothness parameter of the kernel. Here and subsequently, the notations $\Oc$ and $\Oct$ are used for standard mathematical order, and that up to hiding logarithmic factors, respectively.

In Section~\ref{sec:optimization}, we consider a kernel-based black-box optimization (kernel-based bandit) problem using sparse approximation methods. Leveraging our novel confidence intervals, we prove  $\Oct(\sqrt{\gamma_k(N)N})$ regret bounds on a time horizon $N$, for an optimization algorithm based on batch observations~(see Theorem~\ref{the:optimization}). This is optimal up to logarithmic terms in the cases where lower bounds on the regret are known.
Our algorithm is an adaptation of Batched Pure Exploration (BPE) algorithm of~\cite{li2021gaussian}, in which we replace the exact GP statistics with their sparse approximations. We thus refer to this algorithm as Sparse BPE (S-BPE). 
To our knowledge, this is the first algorithm for kernel-based bandits using sparse approximation methods while achieving near-optimal regret bounds.

\subsection{Related Work}

{The celebrated work of \cite{Burt2019Rates} proved a fundamental result on bounding the Kullback-Leibler (KL) divergence between the approximate and exact GP posteriors in SVGPs \citep[see also][]{JMLR:v21:19-1015}. In particular, they showed how large the number $m$ of inducing points should be chosen in relation to the sample size $n$ in order to ensure that the expected KL divergence vanishes for large $n$. As a consequence, they showed that the approximate posterior mean and variance converge to the exact ones when $m$ is large enough. These results hold in expectation, where the expectation is taken with respect to a prior on the dataset, as well as the prior GP distribution of $f$. \cite{nieman2021contraction} proved similar bounds on the KL divergence of the approximate and exact (surrogate) GP posteriors, when $f$ is a fixed function in the RKHS. Their result also holds in expectation, where the expectation is taken only with respect to a prior on the dataset (and not a prior on $f$). }

{\cite{nieman2021contraction} also proved a result similar to our Theorem~\ref{the:regression} on the convergence rate of the approximate prediction $\mubar_n$ to the fixed $f$ in the RKHS. Neither this result nor our Theorem~\ref{the:regression} imply the other, and the comparison can be summarized as follows. Both results find the same $m=\Oct(n^{\frac{d}{2\nu+d}})$ rates for the number of inducing points in the case of the Mat\'ern kernel. Our convergence measure is stronger than theirs in the sense that our convergence guarantees are uniform (given in terms of the $L^{\infty}$ norm), while their convergence guarantees are given in terms of \emph{Hellinger} distance. Thus, their convergence result is in an average sense rather than in a uniform sense, where the average is taken with respect to a prior distribution on the data. In contrast, in the setting of Theorem~\ref{the:regression}, the data is collected based on uncertainty reduction. While these settings are different, they serve the same purpose of ensuring a nearly uniform dataset across the domain. }
In addition, we prove novel general confidence intervals based on approximate statistics (Theorem~\ref{the:conf_subG}) and near-optimal regret bounds for black-box optimization (Theorem~\ref{the:optimization}), which
are new compared to existing works.

It is also noteworthy that the bound on $m$ required for vanishing KL divergence in~\cite{Burt2019Rates, JMLR:v21:19-1015} is larger than the bound on $m$ required for the convergence of $\mubar_n$ to~$f$. 
For example, in the case of Mat{\'e}rn kernels, the results of \cite{Burt2019Rates, JMLR:v21:19-1015} require $m=\Oc(n^{\frac{2d}{2\nu-d}})$ inducing points~\citep[see][Table $1$]{JMLR:v21:19-1015}, which is a looser bound compared to $m=\Oc(n^{\frac{d}{2\nu+d}})$ in~\cite{nieman2021contraction} and in our results.

\paragraph{Selection of inducing points.} Several methods for selecting the inducing points have been employed in the existing work. \cite{Burt2019Rates} considered selecting the inducing points based on a $k$ determinantal point process ($k$-DPP) on the data.  However, the cost of sampling from an exact $k$-DPP may defy the computational gain from the sparse approximation. They thus suggested to use a \emph{Markov Chain Monte Carlo} based approximate sampling from a $k$-DPP. In practice, the inducing points are sometimes selected based on $k$-means clustering, without theoretical guarantees \citep{Hensman2013}.
Selecting the inducing points based on ridge leverage score (RLS) sampling, which captures the importance of an individual data point, comes with strong theoretical guarantees 
\citep{alaoui2014fast}.
\cite{musco2016recursive} introduced a practical recursive RLS with guarantees for low computational complexity (see Lemma~\ref{Lemma:recRLS}). \cite{chen2021fast} improved the bounds on computational complexity of recursive RLS sampling. \cite{Calandriello2019Adaptive} showed that selecting the inducing points based on approximate posterior variance enjoys the same performance guarantees as RLS. In our analysis, we use the theoretical guarantees on inducing points given in~\cite{musco2016recursive}. However, we also establish general sufficient conditions that may apply to other methods of selecting the inducing points.

\paragraph{Kernel-based black-box optimization.}  

Black box optimization of kernel-based models is often referred to as kernel-based bandits, GP bandits, or Bayesian optimization in the literature. The terminology “bandit” signifies zeroth-order observations from
$f$, in contrast to, e.g., first-order observations in
gradient descent.
Classical algorithms such as GP upper confidence bound (GP-UCB)~\citep{srinivas2010gaussian, Chowdhury2017bandit}, expected improvement (EI)~\citep{mockus1978application}, and Thompson sampling (GP-TS)~\citep{Chowdhury2017bandit} sequentially select the observation points based on a score referred to as \emph{acquisition function}. When the objective function $f$ is a fixed function in the RKHS, the best known regret bounds for these algorithms
scales as $\Oc(\gamma_k(N)\sqrt{N})$ over a time horizon $N$ (see the references above). The $\Oc(\gamma_k(N)\sqrt{N})$ scaling is not tight in general, and may even fail to be sublinear in many cases of interest, since $\gamma_k(N)$, although sublinear, may grow faster than $\sqrt{N}$. It remains an open problem whether the suboptimal regret bounds of these acquisition based algorithms is a fundamental limitation or a shortcoming of their proof~\citep[see][for the details]{vakili2021open}. 
On discrete domains, the SupKernelUCB algorithm was shown to have an $\Oct(\sqrt{\gamma_k(N)N})$ regret bound~\citep{Valko2013kernelbandit}; moreover, this result extends to compact domains under mild Lipschitz-like assumptions \cite{Janz2020SlightImprov}. This nearly matches the lower bounds proven in~\cite{Scarlett2017Lower} for the SE and Mat{\'e}rn kernels.

SupKernelUCB is not considered to be practical. Several more practical algorithms with $\Oct(\sqrt{\gamma_k(N)N})$ regret have been proposed recently: a tree-based domain-shrinking algorithm known as GP-ThreDS~\citep{salgia2021domain}, Robust Inverse Propensity Score (RIPS) for experimental design~\citep{camilleri2021high}, and BPE~\cite{li2021gaussian}. The latter two use batches of observations, with RIPS using $\Oc(\log(N))$ batches and BPE using $\Oc(\log\log(N))$ batches.

There have been several works considering sparse approximation methods in kernel-based optimization algorithms. \cite{Calandriello2019Adaptive} introduced budgeted kernelized bandit (BKB), which is an adaptation of GP-UCB by replacing the exact statistics with sparse approximate ones. They proved that BKB achieves the same $\Oct(\gamma_k(N)\sqrt{N})$ regret bound as GP-UCB, where the number of inducing points scales as $m=\Oc(\gamma_k(n))$ at time $n$. \cite{Vakili2020Scalable} introduced S-GP-TS, which is an adaptation of GP-TS by replacing the exact statistics with sparse approximate ones. They also proved that S-GP-TS achieves the same $\Oct(\gamma_k(N)\sqrt{N})$ regret bound as GP-TS, where the number of inducing points scales as $m=\Oc(n^{\frac{2d}{2\nu-d}})$. The analysis in~\cite{Vakili2020Scalable} relies on KL divergence bounds from~\cite{Burt2019Rates}, and their weaker bound on $m$ is due to the weaker scaling of the number of inducing points in~\cite{Burt2019Rates}.  

We also highlight that a recent concurrent work again attained $\Oct(\gamma_k(N)\sqrt{N})$ regret with a low time complexity via a distinct approach of resampling the same points multiple times \cite{calandriello2022scaling}.  In addition, scalable algorithms for the contextual kernel bandit setting were studied in \cite{pmlr-v151-zenati22a}.

Our confidence intervals do not apply to the analysis of the BKB or S-GP-TS algorithms, but we show that they apply to an adaptation of BPE and offer a near-optimal $\Oct(\sqrt{\gamma_k(N)N})$ regret bound, which is an $\Oc(\sqrt{\gamma_k(N)})$ improvement over BKB and S-GP-TS.

\section{Kernel-Based Models and Sparse Approximations}\label{sec:prelim}

In this section, we give an overview of the relevant background on regression using kernel-based models, as well as the sparse approximation methods.  

\subsection{Kernel-Based Regression}\label{sec:exactre}



Consider a noisy dataset $\mathcal{D}_n = \{(x_i,y_i)_{i=1}^n\}$, where $y_i = f(x_i)+\epsilon_i$, and $\epsilon_i$ is the observation noise. We use the vectorized notations $X_n= [x_1,...,x_n]^{\top}$, $Y_n= [y_1,...,y_n]^{\top}$. Consider a positive definite kernel $k(\cdot,\cdot):\Xc\times\Xc\rightarrow \Rr$, $\Xc\subset\Rr^d$, $d\in\Nn$. We have the following closed form expression for the kernel-based regressor:
\begin{equation}\label{eq:regressor}
\mu_{k,\tau, X_n, Y_n}(\cdot) = k^{\top}_{X_n}(\cdot)\left(k_{X_n,X_n}+\tau^2 I_n\right)^{-1}Y_n,
\end{equation}
where $k_{X_n}(\cdot) = [k(\cdot,x_1), \dots, k(\cdot,x_n)]^{\top}$, $k_{X_n,X_n}=[k(x_i, x_j)]_{i,j=1}^n$ is the $n\times n$ kernel matrix, $\tau$ is a free parameter, and $I_n$ is the $n\times n$ identity matrix. We use the shorthand notation $\mu_n=\mu_{k,\tau, X_n, Y_n}$ when the values of $k,\tau,X_n,Y_n$ are clear from the context.

In the KRR approach, $\mu_n$ is interpreted as the regularized least squares estimator with regularization parameter $\tau^2$:
\begin{eqnarray}
\mu_n =\arg\min_{g\in\Hc_k} \sum_{i=1}^n(y_i-g(x_i))^2+\tau^2\|g\|_{\Hc_k}^2,
\end{eqnarray}
where $\Hc_k$ and $\|.\|_{\Hc_k}$ denote, respectively, the RKHS corresponding to a kernel $k$ and its norm. 

In the GP approach, $\mu_n$ is interpreted as the posterior mean of a centered surrogate
GP model with kernel $k$, conditioned on $\Dc_n$, assuming zero-mean Gaussian observation noise with variance $\tau^2$.  The KRR and GP interpretations are known to be equivalent; e.g., see,~\cite{Kanagawa2018}. 


Under the GP approach, the posterior covariance of the (surrogate) GP model is often leveraged as a measure of uncertainty in the regression model. The posterior covariance is given in closed form as
\begin{multline}
k_{\tau, X_n}(\cdot,\cdot') = k(\cdot,\cdot') \\ - k^{\top}_{X_n}(\cdot)\left(k_{X_n,X_n}+\tau^2 I_n\right)^{-1}k_{X_n}(\cdot'). \label{eq:k_posterior}
\end{multline}

We define the posterior variance as $\sigma^2_{k,\tau,X_n}(\cdot) = k_{\tau,X_n}(\cdot,\cdot)$. We use the simplified notations of $k_n=k_{\tau,X_n}$ and $\sigma_n= \sigma_{k,\tau,X_n}$, when the values of the omitted variables are clear from the context. 

\subsection{Sparse Approximation Methods}\label{sec:appxre}

Kernel-based models provide powerful prediction and uncertainty estimation (posterior covariance) tools for very general nonlinear
regression. 
However, the computation quickly becomes prohibitive due to the high $\Oc(n^3)$ cost of matrix inversion.  Sparse approximation methods are thus essential tools for applications of kernel-based models to large-scale learning problems. 
A relatively small number $m \ll n$ of \emph{inducing points} can be used to obtain approximate prediction and uncertainty estimates with an efficient computation time of $\Oc(nm^2)$.  In the following, we give an overview of the relevant sparse approximation methods. 

Let $Z_m=[z_1,z_2,...,z_m]^\top\in \Xc^{m}$ be the set of inducing points. The approximate regressor and posterior covariance of the (surrogate) GP model are given as~\citep{wild2021connections}:
\begin{eqnarray}\nn
\mubar_{k, \tau,Z_m,X_n,Y_n}(\cdot) &=& V_n^{\top}(\cdot)Y_n, \label{eq:mubar} \\\nn
\kbar_{\tau,Z_m,X_n}(\cdot,\cdot') &=& k(\cdot,\cdot') - k^{\top}_{Z_m}(\cdot)k^{-1}_{Z_m,Z_m}k_{Z_m}(\cdot')\\\label{eq:appxstatistics}
&&\hspace{-10em}+ k^{\top}_{Z_m}(\cdot)\left(k_{Z_m,Z_m}+\frac{1}{\tau^2}k^{\top}_{X_n,Z_m}k_{X_n,Z_m}\right)^{-1}k_{Z_m}(\cdot'),~~
\end{eqnarray}
where we define
\begin{eqnarray}\nn
     V^{\top}_n(\cdot) &=& k^{\top}_{Z_m}(\cdot)\left(\tau^2k_{Z_m,Z_m}+k^{\top}_{X_n,Z_m}k_{X_n,Z_m}\right)^{-1}\\
    &&\hspace{10em}\times k_{Z_m,X_n}, \label{eq:defV}
\end{eqnarray} 
and where $k_{Z_m}(\cdot)=[k(\cdot,z_1), k(\cdot,z_2),\dots,k(\cdot,z_m)]^{\top}$, $k_{Z_m,Z_m}=[k(z_i,z_j)]_{i,j=1}^m$ is the $m\times m$ kernel matrix over the inducing points, and $k_{X_n,Z_m} = {[k(x_i,z_j)]_{\substack{i=1,\dots,n\\ j=1,\dots, m}}}$. 
The approximate posterior variance is defined as $\sigmabar^2_{k,\tau,Z_m,X_n}(\cdot)= \kbar_{\tau,Z_m,X_n}(\cdot,\cdot)$.
We use the simplified notations $\bar{\mu}_n = \mubar_{k, \tau,Z_m,X_n,Y_n}$, $\kbar_n = \kbar_{\tau,Z_m,X_n}$, and $\sigmabar_n =\sigmabar_{k,\tau,Z_m,X_n}$, when the omitted variables are clear from the context.

In the KRR approach, $\bar{\mu}_n$ is interpreted as the regularized least squares estimator within a reduced rank RKHS induced by $Z_m$~\citep[e.g., see,][]{wild2021connections}. In particular, define 
\begin{eqnarray}
\kappa(\cdot, \cdot') = k^{\top}_{Z_m}(\cdot)k^{-1}_{Z_m,Z_m}k_{Z_m}(\cdot'), \label{eq:kappa}
\end{eqnarray}
and let $\Hc_{\kappa}$ be the associated RKHS. We then have the following~\citep[e.g., see][]{wild2021connections}:
\begin{eqnarray}
\bar{\mu}_n=\arg\min_{g\in\Hc_{\kappa}} \sum_{i=1}^n(y_i-g(x_i))^2+\tau^2\|g\|^2_{\Hc_\kappa}.
\end{eqnarray}
In the GP approach, the approximate expressions are obtained through a Bayesian variational method. In particular, a variational family of GP distributions is introduced with mean $k^{\top}_{Z_m}(\cdot)k^{-1}_{Z_m,Z_m}\mu^{(v)}$ and covariance $k(\cdot,\cdot') - k^{\top}_{Z_m}(\cdot)k^{-1}_{Z_m,Z_m}(k_{Z_m,Z_m}-\Sigma^{(v)})k^{-1}_{Z_m,Z_m}k_{Z_m}(\cdot')$, where $\mu^{(v)}$ and
$\Sigma^{(v)}$ (the mean vector and the covariance matrix of a postulated multivariate Gaussian distribution on $f_{Z_m}$) are the variational parameters. Following the standard steps in Bayesian variational methods, the variational parameters are chosen to minimize the KL divergence between the approximate and exact posteriors, or equivalently, maximize the evidence lower bound (ELBO).
~\cite{Titsias2009Variational} showed that this convex optimization problem can be explicitly solved, leading to the optimal variational parameters:
\begin{align}
\mu_*^{(v)} 
    &= k_{Z_m,Z_m}\bigl(\tau^2k_{Z_m,Z_m}+k^{\top}_{X_n,Z_m}k_{X_n,Z_m}\bigr)^{-1}  k_{Z_m,X_n}Y_n \nn \\
\Sigma_*^{(v)}
    &= k_{Z_m,Z_m}\bigl(k_{Z_m,Z_m}+\frac{1}{\tau^2}k^{\top}_{X_n,Z_m}k_{X_n,Z_m}\bigr)^{-1} k_{Z_m,Z_m}. \nn
\end{align}
%
%
Substituting $\mu_*^{(v)}$ and $\Sigma_*^{(v)}$ in the mean and covariance of the variational distribution results in the expressions for the approximate statistics given in~\eqref{eq:appxstatistics}. 


\paragraph{Other notations.}
For any function $g:\Xc\rightarrow \Rr$,
we write $g_{X_n}=[g(x_1),g(x_2),\dots,g(x_n)]^{\top}$. We also define the noise vector $E_n=[\epsilon_1,\epsilon_2,\dots,\epsilon_n]^{\top}$, and let $k_{\max}=\sup_{x\in\Xc}k(x,x)$, which is assumed to be finite.

For convenience, the most commonly-used notation is briefly summarized in Appendix \ref{sec:notation_list}.

\section{Confidence Intervals for Approximate Kernel-Based Regression}\label{sec:conf_intervals}

Consider the regression problem and its exact solution given in Section~\ref{sec:exactre}. The posterior variance can be leveraged to quantify the uncertainty in the prediction. In particular, various results exist stating that with probability at least $1-\delta$, the prediction function satisfies $|f(x)-\mu_n(x)|\le \beta(\delta)\sigma_n(x)$ (either for fixed $x$ or simultaneously for all $x$), where the confidence interval width multiplier $\beta(\delta)$ depends on the properties of the observation noise and the complexity of $f$~\citep[e.g., see,][]{srinivas2010gaussian,Chowdhury2017bandit,vakili2021optimal}. In this section, we prove similar confidence intervals for the approximate statistics given in Section~\ref{sec:appxre}. As an intermediate step, we first provide a novel interpretation of the approximate posterior variance, which may be of independent interest.

\subsection{Approximate Posterior Variance} \label{sec:approx_var}

In this section, we prove a novel expression for the approximate posterior variance as the sum of three terms: \emph{i)} the error due to the projection of $f\in\Hc_k$ onto $\Hc_{\kappa}$, \emph{ii)} the prediction error from noise-free observations within $\Hc_{\kappa}$, and \emph{iii)} a term capturing the effect of the noise.
Given a function $f\in\Hc_k$, let
\begin{equation}
     \Pi_{\kappa}[f]=\arg\min_{g\in \Hc_{\kappa}}\|f-g\|_{\Hc_k} \label{eq:ftilde}
\end{equation}
denote the projection of $f\in\Hc_k$ onto $\Hc_{\kappa}$. This projection is the minimum norm interpolator of $f$ over $Z_m$ ($\mu_{k, 0,Z_m, f_{Z_m}}(\cdot) = k^{\top}_{Z_m}(\cdot)k^{-1}_{Z_m,Z_m}f_{Z_m}$), as formalized in the following lemma. 

\begin{lemma}[\cite{wild2021connections}]\label{Lemma:projectin_mean}
For any $f\in \Hc_k$, the projection of $f$ onto $\Hc_{\kappa}$ is equivalent to its minimum norm interpolation over $Z_m$:
$
\Pi_{\kappa}[f] = \mu_{k, 0,Z_m, f_{Z_m}}.
$
\end{lemma}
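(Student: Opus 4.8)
```latex
\textbf{Proof proposal.} The plan is to verify the stated identity
$\Pi_{\kappa}[f] = \mu_{k,0,Z_m,f_{Z_m}}$ directly, by showing that the
minimum-norm interpolant over $Z_m$ coincides with the $\Hc_k$-orthogonal
projection onto the reduced-rank RKHS $\Hc_{\kappa}$. The key structural fact
I would first isolate is that $\Hc_{\kappa}$ is exactly the finite-dimensional
subspace of $\Hc_k$ spanned by the representer functions
$\{k(\cdot,z_j)\}_{j=1}^m$. Indeed, from the definition
$\kappa(\cdot,\cdot') = k^{\top}_{Z_m}(\cdot)k^{-1}_{Z_m,Z_m}k_{Z_m}(\cdot')$,
any $g\in\Hc_{\kappa}$ can be written as $g(\cdot)=k^{\top}_{Z_m}(\cdot)a$ for
some coefficient vector $a\in\Rr^m$, and conversely such functions lie in
$\Hc_\kappa$. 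The reproducing property of $k$ then gives, for any $h\in\Hc_k$,
the identity $\langle h, k(\cdot,z_j)\rangle_{\Hc_k}=h(z_j)$, which will be the
workhorse of the argument.

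Next I would characterize the orthogonal projection $\Pi_\kappa[f]$ via the
first-order optimality condition for the minimization in \eqref{eq:ftilde}:
the minimizer $g^\star$ is the unique element of $\Hc_\kappa$ such that
$f-g^\star$ is $\Hc_k$-orthogonal to every element of $\Hc_\kappa$, hence
orthogonal to each basis function $k(\cdot,z_j)$. Applying the reproducing
property turns this orthogonality condition into the interpolation constraints
$g^\star(z_j)=f(z_j)$ for all $j=1,\dots,m$, i.e.\ $g^\star_{Z_m}=f_{Z_m}$.
Thus $\Pi_\kappa[f]$ is an element of $\Hc_\kappa$ that interpolates $f$ on the
inducing points. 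The remaining task is to identify this interpolant
explicitly: writing $g^\star(\cdot)=k^{\top}_{Z_m}(\cdot)a$ and imposing
$g^\star_{Z_m}=k_{Z_m,Z_m}a=f_{Z_m}$ yields $a=k^{-1}_{Z_m,Z_m}f_{Z_m}$,
whence $g^\star(\cdot)=k^{\top}_{Z_m}(\cdot)k^{-1}_{Z_m,Z_m}f_{Z_m}
=\mu_{k,0,Z_m,f_{Z_m}}(\cdot)$, which is precisely the claimed expression
(the $\tau=0$ regressor of \eqref{eq:regressor} specialized to the inducing
points with observations $f_{Z_m}$).

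To complete the equivalence I would separately confirm that
$\mu_{k,0,Z_m,f_{Z_m}}$ really is the \emph{minimum-norm} interpolant: among
all $h\in\Hc_k$ with $h_{Z_m}=f_{Z_m}$, the one lying in the span of
$\{k(\cdot,z_j)\}$ has minimal $\Hc_k$-norm, since any other interpolant
differs from it by a function vanishing on $Z_m$, which by the reproducing
property is $\Hc_k$-orthogonal to $\Hc_\kappa$ and so only increases the norm
(Pythagoras). This also re-confirms that the interpolant lies in $\Hc_\kappa$,
closing the loop. The main obstacle I anticipate is not any single
computation but rather stating the invertibility and basis claims cleanly:
one must assume $k_{Z_m,Z_m}$ is nonsingular (guaranteed by positive
definiteness of $k$ and distinct inducing points) so that $a$ is well defined,
and one must argue carefully that $\Hc_\kappa$, as an RKHS with its own inner
product, embeds isometrically as a subspace of $\Hc_k$ on which the two inner
products agree—this compatibility is what legitimizes transferring the
orthogonality/interpolation correspondence between the two Hilbert spaces.
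Since this is precisely the content established in \cite{wild2021connections},
I would cite that work for the subspace-embedding fact and keep the proof
focused on the optimality-condition-to-interpolation translation.
```
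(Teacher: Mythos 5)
Your proposal is correct and takes essentially the same route as the paper: both reduce to solving $k_{Z_m,Z_m}\alpha = f_{Z_m}$ for the coefficients of $\Pi_\kappa[f]$ in the span of $\{k(\cdot,z_j)\}_{j=1}^m$, the paper by explicitly minimizing the quadratic $\|f\|_{\Hc_k}^2 - 2\alpha^{\top}f_{Z_m} + \alpha^{\top}k_{Z_m,Z_m}\alpha$ and you by the equivalent orthogonality (normal-equations) characterization combined with the reproducing property. Your additional Pythagoras verification that this interpolant is minimum-norm is a nice supplement but not required, since $\mu_{k,0,Z_m,f_{Z_m}}$ is already defined by the closed-form formula \eqref{eq:regressor} with $\tau=0$.
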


We are now ready to state our first main result on the decomposition of the approximate posterior variance.  Recall that $V_n(\cdot)$ is defined in \eqref{eq:defV}.

\begin{theorem}\label{the:approximatevariance}
Given a dataset $\Dc_n$, inducing points $Z_m$, and $(\mubar_n,\sigmabar_n)$ given in Section~\ref{sec:appxre},
we have
\begin{eqnarray}\nn
&&\hspace{-2em}\sigmabar^2_n(\cdot)= \sup_{g:\|g\|_{\Hc_k}\le 1}\left(g(\cdot)-\Pi_{\kappa}[g](\cdot)\right)^2\\\nn
&&+ \sup_{g:\|g\|_{\Hc_\kappa}\le 1}\left( g(\cdot)-V_n^{\top}(\cdot)g_{X_n} \right)^2+
\tau^2\big\|V_{n}(\cdot)\big\|_{l^2}^2.
\end{eqnarray}


\end{theorem}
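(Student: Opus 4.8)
The plan is to recognize that each of the first two terms on the right-hand side is the supremum of a squared bounded linear functional over the unit ball of a Hilbert space, and hence, by the Riesz representation theorem together with Cauchy--Schwarz, equals the squared norm of that functional's representer. Concretely, for a bounded linear functional $L(g)=\langle g,v\rangle_{\Hc}$ one has $\sup_{\|g\|_{\Hc}\le 1}L(g)^2=\|v\|_{\Hc}^2$, attained at $g=v/\|v\|_{\Hc}$. I would apply this once in $\Hc_k$ and once in $\Hc_\kappa$, evaluate the resulting representer norms in closed form, and then verify by direct matrix algebra that their sum, together with the third term $\tau^2\|V_n(\cdot)\|_{l^2}^2$, reproduces the expression for $\sigmabar_n^2$ read off from \eqref{eq:appxstatistics}.

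For the first term, the functional is $g\mapsto g(x)-\Pi_\kappa[g](x)$ on $\Hc_k$, which is linear and bounded since $\Pi_\kappa$ is the orthogonal projection onto the finite-dimensional subspace spanned by $\{k(\cdot,z_j)\}$. By \cref{Lemma:projectin_mean}, $\Pi_\kappa[g](x)=k^\top_{Z_m}(x)k^{-1}_{Z_m,Z_m}g_{Z_m}$; using the reproducing property $g(z_j)=\langle g,k(\cdot,z_j)\rangle_{\Hc_k}$, its representer is $h_x=k(\cdot,x)-k^\top_{Z_m}(\cdot)k^{-1}_{Z_m,Z_m}k_{Z_m}(x)$. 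Expanding $\|h_x\|_{\Hc_k}^2$ via the reproducing property, the identity $k^{-1}_{Z_m,Z_m}k_{Z_m,Z_m}k^{-1}_{Z_m,Z_m}=k^{-1}_{Z_m,Z_m}$ collapses the cross terms and yields $k(x,x)-k^\top_{Z_m}(x)k^{-1}_{Z_m,Z_m}k_{Z_m}(x)$, i.e.\ exactly the first two summands of $\sigmabar_n^2(x)$.

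For the second term I would work inside $\Hc_\kappa$, whose reproducing kernel is $\kappa$. The functional $g\mapsto g(x)-V_n^\top(x)g_{X_n}$ then has representer $r_x=\kappa(\cdot,x)-\sum_{i=1}^n[V_n(x)]_i\,\kappa(\cdot,x_i)$, so the term equals $\|r_x\|_{\Hc_\kappa}^2=\kappa(x,x)-2V_n^\top(x)\kappa_{X_n}(x)+V_n^\top(x)\kappa_{X_n,X_n}V_n(x)$, in the natural $\kappa$-analogues of the $k$-notation. Substituting $\kappa(a,b)=k^\top_{Z_m}(a)k^{-1}_{Z_m,Z_m}k_{Z_m}(b)$ turns every $\kappa$-quantity into a product involving $k_{Z_m}(x)$, $k_{Z_m,Z_m}$, and $k_{X_n,Z_m}$, making the whole expression a quadratic form $k^\top_{Z_m}(x)\,P\,k_{Z_m}(x)$. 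Writing $B=k_{Z_m,Z_m}$, $C=k^\top_{X_n,Z_m}k_{X_n,Z_m}$ and $A=\tau^2B+C$ (so that $V_n^\top(x)=k^\top_{Z_m}(x)A^{-1}k_{Z_m,X_n}$), one finds $P=B^{-1}-2A^{-1}CB^{-1}+A^{-1}CB^{-1}CA^{-1}$, while the third term contributes $\tau^2\|V_n(x)\|_{l^2}^2=\tau^2 k^\top_{Z_m}(x)A^{-1}CA^{-1}k_{Z_m}(x)$.

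The main obstacle, and the only genuinely computational step, is then to verify the matrix identity $B^{-1}-2A^{-1}CB^{-1}+A^{-1}CB^{-1}CA^{-1}+\tau^2A^{-1}CA^{-1}=\bigl(B+\tfrac{1}{\tau^2}C\bigr)^{-1}$, since $\bigl(B+\tfrac{1}{\tau^2}C\bigr)^{-1}=\tau^2A^{-1}$ is precisely the third summand of $\sigmabar_n^2(x)$ in \eqref{eq:appxstatistics}. I would prove this by right-multiplying both sides by $A=\tau^2B+C$ and reducing the left side to $\tau^2 I$; the computation hinges on the commutation relation $AB^{-1}C=CB^{-1}A$, which holds because both sides equal $\tau^2C+CB^{-1}C$, and which gives $B^{-1}C=A^{-1}CB^{-1}A$ so that the remaining terms cancel. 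Combining the three closed forms then yields $\sigmabar_n^2(x)$ exactly, completing the proof.
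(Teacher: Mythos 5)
Your proposal is correct, and the key matrix identity you reduce everything to does hold: conjugating by $A=\tau^2 B+C$ turns it into the commutation relation $AB^{-1}C=CB^{-1}A$, which is immediate since both sides equal $\tau^2 C+CB^{-1}C$; I verified that the remaining terms cancel to $\tau^2 A$ as you claim. Your route is, however, organized differently from the paper's. The paper works from the closed form of $\sigmabar_n^2$ in \eqref{eq:appxstatistics} outward: it identifies the first piece as the noise-free posterior variance $\sigma^2_{k,0,Z_m}$, uses the Woodbury identity to show the second piece equals the noisy posterior variance $\sigma^2_{\kappa,\tau,X_n}$ of the reduced-rank kernel (Lemma~\ref{lemma:Term2}), rewrites $V_n^{\top}(\cdot)=\kappa^{\top}_{X_n}(\cdot)(\tau^2 I_n+\kappa_{X_n,X_n})^{-1}$ via a push-through identity (Lemma~\ref{lemma:V_nkappa}), and then invokes the known variational characterizations of exact posterior variances (Lemmas~\ref{lemma:noise-freevar} and~\ref{lemma:noisyvar}) to obtain the supremum form. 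You instead start from the right-hand side, evaluate each supremum as the squared norm of a Riesz representer (in effect re-proving the relevant special cases of those two cited lemmas), and close the loop with one direct matrix computation. What the paper's route buys is the conceptual identification of the terms as posterior variances of auxiliary GP models --- an identification it reuses later (e.g., Lemma~\ref{lemma:Term2} reappears in the proof of Lemma~\ref{lemma:variance_disc}) --- and the avoidance of any by-hand quadratic-form verification; what your route buys is self-containedness (no appeal to Woodbury, push-through, or external propositions) at the cost of a longer, purely algebraic final step. One small presentational caution: the cross term $-2A^{-1}CB^{-1}$ in your matrix $P$ is not symmetric on its own, so it should be read as specifying a quadratic form (or be symmetrized) before the identity is stated as a matrix equation; after conjugation by $A$ it becomes $-2(\tau^2 C+CB^{-1}C)$, which is symmetric, so nothing actually goes wrong.
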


The first term captures the maximum error between any $g$ in the unit ball of $\Hc_k$ and its projection onto $\Hc_{\kappa}$. The second term captures the maximum prediction error from noise-free observations within the unit ball of $\Hc_{\kappa}$. Lastly, the third term measures the impact of the observation noise ($l^2$ norm of the coefficients vector), and helps with bounding the regression error due to observation noise, with high probability.
The proof is provided in Appendix~\ref{app:the:approximatevariance}.

\subsection{Confidence Intervals for Kernel-Based Sparse Approximate Regression}

We use the representation of the approximate posterior variance given in Theorem~\ref{the:approximatevariance} to derive novel confidence intervals for approximate regression methods. We first formalize our assumptions. 

\begin{assumption}\label{ass:f_norm}
The RKHS norm of the objective function $f$ is bounded, i.e., $\|f\|_{\Hc_k}\le C_k$, for some $C_k\in \Rr_{>0}$. 
\end{assumption}

\begin{assumption}\label{ass:noise}
The observation noise terms $\epsilon_i$ are independent sub-Gaussian random variables.  That is, for all $i\in \Nn$, for all $\eta \in \Rr$, and for some $R>0$, the moment generating function of $\epsilon_i$ satisfies $\E[\exp(\eta \epsilon_i)]\le \exp(\frac{\eta^2R^2}{2})$.
\end{assumption}

Assumption~\ref{ass:noise} can be relaxed from sub-Gaussian noise to light-tailed noise (only requiring that the moment generating function exists), at the price of slightly increasing the confidence width. This variation is provided in Appendix~\ref{app:light_tailed}.

The representation of the sparse approximate posterior variance given in Theorem~\ref{the:approximatevariance} holds for any set of inducing points. In order for $\mubar_n$ to be a good prediction of $f$, however, the set of inducing points should be selected appropriately. Interestingly, we will see that the requirements for the set of inducing points can be concisely captured by the spectral norm of the approximation error in the kernel matrix. Let
$\lambda_{\max}$ denote the maximum eigenvalue of $k_{X_n,X_n}-\kappa_{X_n,X_n}$. This quantity appears in our confidence intervals, and turns out to be important for our analysis in the subsequent sections on regression and optimization. A key finding from our results is the following:
\emph{A good method for choosing inducing points is one that ensures an upper bound on~$\lambda_{\max}$. }


We now present our confidence intervals based on sparse approximate statistics. The proof is given in Appendix~\ref{app:the:conf_subG}.

\begin{theorem}\label{the:conf_subG}
Recall the definitions of $\mubar_n$ and $\sigmabar_n$ given in Section~\ref{sec:appxre}. Assume that $X_n$ and $Z_m$ are chosen independent of the observation noise $E_n$. 
Under Assumptions~\ref{ass:f_norm} and~\ref{ass:noise}, the following equations each hold with probability at least $1-\delta$ for any fixed $x \in \Xc$:
\[f(x) \le \mubar_n(x)+\beta(\delta)\sigmabar_n(x),~~~f(x)  \ge \mubar_n(x)-\beta(\delta)\sigmabar_n(x),\;\]
where 
$\beta(\delta) = \left(\Bigl(2+\frac{\sqrt{\lambda_{\max}}}{\tau}\Bigr)C_k+\frac{R}{\tau}\sqrt{2\log\Bigl(\frac{1}{\delta}\Bigr)}\right),\;$
and where $C_k$ and $R$ are the parameters specified in Assumptions~\ref{ass:f_norm} and~\ref{ass:noise}. 
\end{theorem}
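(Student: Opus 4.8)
The plan is to bound the pointwise error $f(x)-\mubar_n(x)$ by splitting it into a deterministic part and a noise part, and to control each piece using the three-term decomposition of $\sigmabar_n^2(x)$ from Theorem~\ref{the:approximatevariance}. Since $Y_n=f_{X_n}+E_n$ and $\mubar_n(x)=V_n^{\top}(x)Y_n$, I would write $f(x)-\mubar_n(x)=\bigl(f(x)-V_n^{\top}(x)f_{X_n}\bigr)-V_n^{\top}(x)E_n$. The key structural observation is that each of the three nonnegative summands appearing in Theorem~\ref{the:approximatevariance} is itself at most $\sigmabar_n^2(x)$; in particular $\tau^2\|V_n(x)\|_{l^2}^2\le\sigmabar_n^2(x)$, so $\|V_n(x)\|_{l^2}\le\sigmabar_n(x)/\tau$, and the square roots of the first two summands are each at most $\sigmabar_n(x)$.

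For the noise part, because $X_n$ and $Z_m$ are independent of $E_n$, the scalar $V_n^{\top}(x)E_n=\sum_i[V_n(x)]_i\epsilon_i$ is a fixed linear combination of independent sub-Gaussian variables and is hence sub-Gaussian with parameter $R^2\|V_n(x)\|_{l^2}^2$. A one-sided Chernoff bound then gives, with probability at least $1-\delta$, $|V_n^{\top}(x)E_n|\le R\|V_n(x)\|_{l^2}\sqrt{2\log(1/\delta)}\le\frac{R}{\tau}\sqrt{2\log(1/\delta)}\,\sigmabar_n(x)$, matching the noise term of $\beta(\delta)$. For the deterministic part I would set $\tilde f=\Pi_{\kappa}[f]\in\Hc_{\kappa}$ and decompose $f(x)-V_n^{\top}(x)f_{X_n}$ into the \emph{projection error} $f(x)-\tilde f(x)$, the \emph{noise-free prediction error} $\tilde f(x)-V_n^{\top}(x)\tilde f_{X_n}$, and the \emph{cross term} $V_n^{\top}(x)(\tilde f_{X_n}-f_{X_n})$. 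Using linearity of $\Pi_{\kappa}$ together with Assumption~\ref{ass:f_norm}, the projection error is controlled by the first summand of the variance, $|f(x)-\tilde f(x)|\le C_k\sqrt{\sup_{\|g\|_{\Hc_k}\le1}(g(x)-\Pi_{\kappa}[g](x))^2}\le C_k\sigmabar_n(x)$. For the second piece I would use linearity of $g\mapsto V_n^{\top}(x)g_{X_n}$ and the bound $\|\tilde f\|_{\Hc_{\kappa}}=\|\Pi_{\kappa}[f]\|_{\Hc_k}\le\|f\|_{\Hc_k}\le C_k$, which follows because $\Pi_{\kappa}$ is the orthogonal projection in $\Hc_k$ (non-expansive) and $\Hc_{\kappa}$ embeds isometrically into $\Hc_k$; the second summand then yields $|\tilde f(x)-V_n^{\top}(x)\tilde f_{X_n}|\le C_k\sigmabar_n(x)$.

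The main obstacle is the cross term, which is where $\lambda_{\max}$ enters. By Cauchy--Schwarz it is at most $\|V_n(x)\|_{l^2}\,\|(f-\tilde f)_{X_n}\|_{l^2}$, so the crux is to prove $\|(f-\tilde f)_{X_n}\|_{l^2}\le\sqrt{\lambda_{\max}}\,C_k$. I would let $h=f-\Pi_{\kappa}[f]\in\Hc_{\kappa}^{\perp}$ and $\phi_i=k(\cdot,x_i)-\Pi_{\kappa}[k(\cdot,x_i)]$, so that $(f-\tilde f)(x_i)=\langle h,k(\cdot,x_i)\rangle_{\Hc_k}=\langle h,\phi_i\rangle_{\Hc_k}$ because $h\perp\Hc_{\kappa}$. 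A direct computation via Lemma~\ref{Lemma:projectin_mean}, using $\langle\Pi_{\kappa}[k(\cdot,x_i)],\Pi_{\kappa}[k(\cdot,x_j)]\rangle_{\Hc_k}=\kappa(x_i,x_j)$, shows that the Gram matrix $[\langle\phi_i,\phi_j\rangle_{\Hc_k}]_{i,j}$ equals exactly $k_{X_n,X_n}-\kappa_{X_n,X_n}$. Interpreting $h\mapsto[\langle h,\phi_i\rangle_{\Hc_k}]_i$ as a bounded linear operator whose squared operator norm equals $\lambda_{\max}(k_{X_n,X_n}-\kappa_{X_n,X_n})=\lambda_{\max}$, I obtain $\|(f-\tilde f)_{X_n}\|_{l^2}^2\le\lambda_{\max}\|h\|_{\Hc_k}^2\le\lambda_{\max}C_k^2$, whence the cross term is at most $\frac{\sqrt{\lambda_{\max}}}{\tau}C_k\sigmabar_n(x)$. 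Summing the three deterministic pieces gives $\bigl(2+\frac{\sqrt{\lambda_{\max}}}{\tau}\bigr)C_k\sigmabar_n(x)$, and adding the noise bound produces $|f(x)-\mubar_n(x)|\le\beta(\delta)\sigmabar_n(x)$; selecting the appropriate sign of the one-sided tail establishes each of the two stated inequalities with probability at least $1-\delta$.
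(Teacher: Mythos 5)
Your proposal is correct and follows essentially the same route as the paper: the identical four-way decomposition into projection error, noise-free prediction error in $\Hc_{\kappa}$, the cross term $V_n^{\top}(x)(f_{X_n}-\Pi_{\kappa}[f]_{X_n})$, and the noise term, each controlled by the corresponding summand of Theorem~\ref{the:approximatevariance} together with $\|V_n(x)\|_{l^2}\le\sigmabar_n(x)/\tau$ and a Chernoff bound. The only cosmetic difference is in the cross term, where you apply Cauchy--Schwarz and bound the Gram matrix of the residuals $k(\cdot,x_i)-\Pi_{\kappa}[k(\cdot,x_i)]$ by $\lambda_{\max}$, whereas the paper bounds the dual quadratic form $V_n^{\top}(x)(k_{X_n,X_n}-\kappa_{X_n,X_n})V_n(x)$ via the RKHS supremum lemma---these are the same spectral-norm bound applied to the two sides of the same bilinear form.
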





If the domain $\Xc$ is finite, then uniform confidence bounds readily follow from this result via a union bound, and $\frac{\delta}{|\Xc|}$ can be substituted for $\delta$.  For continuous domains, more effort is required, and we use a discretization argument.  First, consider the following continuity assumption.
%
\begin{assumption}\label{ass:disc}
For each $n\in\Nn$, there exists a discretization $\Xx$ of $\Xc$ such that, for any $f\in \Hc_k$ with $\|f\|_{\Hc_k}\le C_k$, we have $f(x) - f([x])\le \frac{1}{n}$, where $[x] = {\arg}{\min}_{ x'\in \Xx}||x'-x||_{l^2}$ is the closest point in $\Xx$ to $x$, and $|\Xx|\le cC_k^dn^{d}$, where $c$ is a constant independent of $n$ and $C_k$.
\end{assumption}
Assumption~\ref{ass:disc} is a mild assumption that holds for typical kernels such as SE and Mat{\'e}rn~with $\nu>1$~\citep{srinivas2010gaussian, Chowdhury2017bandit, vakili2021optimal}.

\begin{corollary}\label{cor:conf_cont}
Under the setting of Theorem~\ref{the:conf_subG}, and under Assumption~\ref{ass:disc}, the following equations each hold uniformly in $x \in \Xc$ with probability at least $1-\delta$:
\begin{eqnarray}\nn
f(x) \le \mubar_n(x)+\frac{2}{n}+\betat_n(\delta)\Big(\sigmabar_n(x)+\frac{2}{\sqrt{n}}\Big),\\\nn
f(x) \ge \mubar_n(x)-\frac{2}{n}-\betat_n(\delta)\Big(\sigmabar_n(x)+\frac{2}{\sqrt{n}}\Big),
\end{eqnarray}
where 
$
\betat_n(\delta) = \beta(\frac{\delta}{2\Gamma_n})
$,
$\Gamma_n=  c  \left(C^{\mubar_{n}}_k(\frac{\delta}{2})\right)^dn^{d}$, and $C^{\mubar_n}_k(\delta)=C_k(1+\frac{\sqrt{n}k_{\max}}{\tau})+\frac{\sqrt{n}R}{\tau}\sqrt{2\log(\frac{2n}{\delta})}$; $C_k$, $R$ and $c$ are the constants specified in Assumptions~\ref{ass:f_norm},~\ref{ass:noise}, and~\ref{ass:disc}, respectively, and $k_{\max}=\sup_{x\in\Xc}k(x,x)$. 
\end{corollary}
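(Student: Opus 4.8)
The plan is to upgrade the pointwise guarantee of Theorem~\ref{the:conf_subG} to a uniform one by a discretization-and-union-bound argument, the twist being that the required fineness of the discretization depends on the (random, data-dependent) estimate $\mubar_n$ rather than on $f$ alone. First I would establish a deterministic high-probability bound on the RKHS norm of $\mubar_n$. Writing $\mubar_n(\cdot)=V_n^\top(\cdot)Y_n$ with $Y_n=f_{X_n}+E_n$ and splitting into a signal part and a noise part, the signal part is controlled via Assumption~\ref{ass:f_norm} together with $k_{\max}$, and the noise part via the sub-Gaussian tail bound of Assumption~\ref{ass:noise} (with a union bound over the $n$ coordinates of $E_n$). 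This should yield, with probability at least $1-\frac{\delta}{2}$, the bound $\|\mubar_n\|_{\Hc_k}\le C^{\mubar_n}_k(\frac{\delta}{2})$ with the stated expression.

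Given this norm bound, I would invoke Assumption~\ref{ass:disc}, but applied at scale $C^{\mubar_n}_k(\frac{\delta}{2})$ rather than $C_k$, to obtain a discretization $\Xx$ with $|\Xx|\le\Gamma_n=c(C^{\mubar_n}_k(\frac{\delta}{2}))^dn^d$ that simultaneously controls $f(x)-f([x])\le\frac{1}{n}$ and (on the norm-bound event) $\mubar_n(x)-\mubar_n([x])\le\frac{1}{n}$; since $C^{\mubar_n}_k\ge C_k$, any discretization fine enough for $\mubar_n$ is automatically fine enough for $f$. I would then apply Theorem~\ref{the:conf_subG} at each of the at most $\Gamma_n$ points of $\Xx$ with confidence parameter $\frac{\delta}{2\Gamma_n}$ and take a union bound, so that both inequalities hold simultaneously over $\Xx$ with the width multiplier $\betat_n(\delta)=\beta(\frac{\delta}{2\Gamma_n})$, with probability at least $1-\frac{\delta}{2}$.

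The part requiring the most care, and the main obstacle, is controlling the variation of the approximate posterior standard deviation, i.e.\ showing $\sigmabar_n([x])\le\sigmabar_n(x)+\frac{2}{\sqrt{n}}$. Since $\sigmabar_n$ is not itself an element of $\Hc_k$, Assumption~\ref{ass:disc} does not apply to it directly. Instead I would use that $\kbar_n$ is a positive semidefinite kernel dominated by $k$, in the sense that $k-\kbar_n$ is also positive semidefinite: from \eqref{eq:appxstatistics} one has $k-\kbar_n=\kappa-(\text{third summand})$, and the third summand is no larger than $\kappa$ in the Loewner order because adding the positive semidefinite matrix $\frac{1}{\tau^2}k^{\top}_{X_n,Z_m}k_{X_n,Z_m}$ inside the inverse only shrinks it. Kernel domination implies that the feature-space distance induced by $\kbar_n$ is no larger than that induced by $k$, giving $|\sigmabar_n(x)-\sigmabar_n([x])|\le\sup_{\|g\|_{\Hc_k}\le1}|g(x)-g([x])|$, and the latter is at most $\frac{1}{n}\le\frac{2}{\sqrt{n}}$ by Assumption~\ref{ass:disc} applied at unit scale.

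Finally, I would chain the estimates on the intersection of the two events (total failure probability at most $\delta$): for any $x$, $f(x)\le f([x])+\frac{1}{n}\le\mubar_n([x])+\betat_n(\delta)\sigmabar_n([x])+\frac{1}{n}\le\mubar_n(x)+\frac{2}{n}+\betat_n(\delta)(\sigmabar_n(x)+\frac{2}{\sqrt{n}})$, and symmetrically for the lower bound. The delicate point throughout is the non-circular bookkeeping of the probability budget: $\Gamma_n$ is a \emph{deterministic} function of $\delta$ obtained from the norm bound, and this value then feeds back into the per-point confidence level $\frac{\delta}{2\Gamma_n}$, so one must verify that the two allocations of $\frac{\delta}{2}$ combine to the advertised $1-\delta$.
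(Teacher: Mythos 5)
Your overall architecture matches the paper's: a high-probability RKHS-norm bound $\|\mubar_n\|_{\Hc_k}\le C^{\mubar_n}_k(\frac{\delta}{2})$ (proved exactly as in the paper's Lemma~\ref{lemma:norm_mu_n}, by splitting $Y_n=f_{X_n}+E_n$), a discretization at the inflated scale $C^{\mubar_n}_k(\frac{\delta}{2})$ of size $\Gamma_n$, a union bound over $\Xx$ with per-point confidence $\frac{\delta}{2\Gamma_n}$, and a $\frac{\delta}{2}+\frac{\delta}{2}$ budget split — all of which is the paper's proof. The one place you genuinely diverge is the discretization error of $\sigmabar_n$ (the paper's Lemma~\ref{lemma:variance_disc}). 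The paper writes $\sigmabar_n^2(\cdot)=k(\cdot,\cdot)-q(\cdot,\cdot)$ with $q(\cdot,\cdot')=\kappa^{\top}_{X_n}(\cdot)(\kappa_{X_n,X_n}+\tau^2I_n)^{-1}\kappa_{X_n}(\cdot')$, bounds $\|k(x,\cdot)\|_{\Hc_k}\le k_{\max}$ and $\|q(x,\cdot)\|_{\Hc_k}\le\frac{\sqrt{n}k_{\max}}{\tau}$, applies Assumption~\ref{ass:disc} to each to get $|\sigmabar_n^2(x)-\sigmabar_n^2([x])|\le\frac{4}{n}$, and converts to the standard deviation via $(\sigmabar_n(x)-\sigmabar_n([x]))^2\le|\sigmabar_n^2(x)-\sigmabar_n^2([x])|$, yielding $\frac{2}{\sqrt{n}}$. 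You instead observe that $k-\kbar_n$ is a positive semidefinite kernel (correct: $k-\kbar_n=\kappa-T$ where $T$ has the smaller middle matrix in the Loewner order), so the feature metric of $\kbar_n$ is dominated by that of $k$, and the reverse triangle inequality gives $|\sigmabar_n(x)-\sigmabar_n([x])|\le d_k(x,[x])=\sup_{\|g\|_{\Hc_k}\le1}|g(x)-g([x])|\le\frac{1}{n}$. This is valid (modulo the benign requirement that the discretization built for norm bound $C^{\mubar_n}_k(\frac{\delta}{2})\ge 1$ also covers the unit ball, a point the paper's own lemma implicitly relies on in the same way for $k(x,\cdot)$ and $q(x,\cdot)$), and it is both cleaner and quantitatively sharper — you get a $\frac{1}{n}$ perturbation where the paper only gets $\frac{2}{\sqrt{n}}$ — while the paper's route avoids any appeal to kernel domination and works term by term from the explicit formula. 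Either way the stated bound with $\frac{2}{\sqrt{n}}$ follows, so the corollary is proved.
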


The proof is given in Appendix~\ref{app:cor:conf_cont}, and involves using Assumption~\ref{ass:disc} to characterize the effect of discretizing on $f$, $\mubar_n$, and $\sigmabar_n$. For this purpose, we derive a high probability upper bound $C^{\mubar_n}_k(\delta)$ on $\|\mubar_n(\cdot)\|_{\Hc_k}$ (see Lemma~\ref{lemma:norm_mu_n}), which appears in the expression of $\Gamma_n$.  A separate argument is also used to characterize the Lipschitz behavior of $\sigmabar_n(\cdot)$. We then set $\Gamma_n$ to be the size of discretization required to ensure $\Oc\big(\frac{1}{\sqrt{n}}\big)$ discretization errors; note that $\Gamma_n$ depends on $n$ according to $\Oc(C_k^dn^{3d/2})$. Effectively, the uniform confidence intervals scale as 
$\betat_n(\delta)=\Oc\big((1+\frac{\sqrt{\lambda_{\max}}}{\tau})C_k+\frac{R}{\tau}\sqrt{d\log(\frac{nC_k}{\delta})}\big)$.

In comparison with the exact models, Theorem \ref{the:conf_subG} is analogous to Theorem~$1$ of~\cite{vakili2021optimal} which proves $1-\delta$ confidence bounds of the form $\mu_n(x)\pm\beta(\delta)\sigma_n(x)$ where $\beta(\delta)=C_k+\sqrt{2\log(\frac{1}{\delta})}$. The confidence interval width multipliers are similar except for the scaling of $C_k$ with a $(2+\frac{\sqrt{\lambda_{\max}}}{\tau})$ factor. We now investigate $\lambda_{\max}$.

\subsection{Inducing Points and $\lambda_{\max}$}

As discussed above, our results show that the effect of the choice of inducing points is concisely captured by $\lambda_{\max}$. In the literature, there exist several methods of selecting inducing points that ensure bounds on $\lambda_{\max}$.  As discussed in the introduction, these methods typically choose the inducing points based on the ridge leverage score (RLS). 
Here we highlight a theoretical guarantee for recursive RLS sampling
from~\cite{musco2016recursive}.  We note, however, that our results apply to any rule for selecting inducing points with a guaranteed bound on $\lambda_{\max}$. 

The result depends on a quantity denoted by $\tilde{d}_{k}(n)$, which is the so called \emph{effective dimension} of the kernel.  The rough idea is that while the feature space for typical kernels is infinite-dimensional, for a finite dataset, the number of features with a significant impact on the model is finite. The effective dimension for a given kernel and dataset is often defined as~\citep{zhang2005learning, Valko2013kernelbandit} 
\begin{eqnarray}
\tilde{d}_k(n)  = \tr\left(K_{X_n,X_n}(K_{X_n,X_n}+\tau^2I_n)^{-1}\right).
\end{eqnarray}

\begin{lemma}[\cite{musco2016recursive}, Theorem~$7$]\label{Lemma:recRLS}
Given $\delta\in(0,1/32)$, there exists a method for choosing the inducing points based on a recursive RLS~\citep[][Algorithm~$2$]{musco2016recursive}, with $\Oc(nm^2)$ time complexity, guaranteeing that with probability $1-\delta$, it holds that $m\le 384\tilde{d}_{k}(n)\log(3\tilde{d}_{k}(n)/\delta)$ and  $\lambda_{\max}\le\tau^2$. 
\end{lemma}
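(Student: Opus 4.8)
The plan is to derive this as a restatement of Theorem~7 of \cite{musco2016recursive} in the present notation, so the first task is to line up the objects. Their recursive ridge-leverage-score algorithm returns a set of landmarks, which I identify with the inducing points $Z_m$; the associated Nyström approximation of the kernel matrix is then exactly $\kappa_{X_n,X_n}=k_{X_n,Z_m}k_{Z_m,Z_m}^{-1}k_{Z_m,X_n}$, which is the matrix $[\kappa(x_i,x_j)]$ obtained by evaluating $\kappa$ from \eqref{eq:kappa} on $X_n$. The regularization level in their guarantee is set equal to $\tau^2$. The one algebraic point to verify is that $\lambda_{\max}$, the maximum eigenvalue of $k_{X_n,X_n}-\kappa_{X_n,X_n}$, coincides with the operator norm $\|k_{X_n,X_n}-\kappa_{X_n,X_n}\|$: since the Nyström error $k_{X_n,X_n}-\kappa_{X_n,X_n}$ is a Schur complement of the joint kernel matrix over $X_n\cup Z_m$, it is positive semidefinite, so its largest eigenvalue is nonnegative and equals its operator norm. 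Thus the spectral bound $\|k_{X_n,X_n}-\kappa_{X_n,X_n}\|\le\tau^2$ of \cite{musco2016recursive} is precisely the claim $\lambda_{\max}\le\tau^2$.

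For completeness I would sketch the mechanism behind their guarantee. The $i$-th ridge leverage score is $\ell_i=[k_{X_n,X_n}(k_{X_n,X_n}+\tau^2 I_n)^{-1}]_{ii}$, and $\sum_{i=1}^n\ell_i=\tr\left(k_{X_n,X_n}(k_{X_n,X_n}+\tau^2I_n)^{-1}\right)=\tilde{d}_k(n)$. A matrix Bernstein argument then shows that sampling inducing points with probabilities proportional to (upper estimates of) the $\ell_i$, with an expected number $\Oc(\tilde{d}_k(n)\log(\tilde{d}_k(n)/\delta))$ of samples, produces a Nyström approximation obeying $0\preceq k_{X_n,X_n}-\kappa_{X_n,X_n}\preceq\tau^2 I_n$ with high probability; the lower inequality is automatic (Schur complement), and the upper inequality is the content of the concentration bound. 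The explicit constants $384$ and $3$, and the restriction $\delta\in(0,1/32)$, arise from tracking the variance term and the failure probabilities across this step, and from taking a union bound so that the sample-count estimate and the spectral guarantee hold simultaneously.

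The only genuinely delicate ingredient — and the part I would lean on \cite{musco2016recursive} for rather than reprove — is making the sampling computationally cheap. Computing exact leverage scores costs $\Oc(n^3)$, defeating the purpose, so their algorithm recursively halves the data and estimates the scores at each level using the Nyström approximation built from the previous level. The crux of the analysis is showing that these recursively computed scores remain valid constant-factor over-estimates of the true ridge leverage scores at every level, so that the concentration bound of the previous paragraph still applies and the total sample count stays $\Oc(\tilde{d}_k(n)\log(\tilde{d}_k(n)/\delta))$. With $\Oc(\log n)$ recursion levels and $\Oc(nm^2)$ work per level, dominated by forming and inverting the $m\times m$ matrix $k_{Z_m,Z_m}$ and evaluating the approximate scores, the overall time complexity is $\Oc(nm^2)$, establishing the three claims.
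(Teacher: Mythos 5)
Your proposal is correct and matches the paper's treatment: the paper gives no independent proof of this lemma, importing it directly as Theorem~7 of \cite{musco2016recursive}, and your notational translation (landmarks as inducing points, the Nyström error $k_{X_n,X_n}-\kappa_{X_n,X_n}$ being positive semidefinite so that $\lambda_{\max}$ equals its operator norm, and $\sum_i \ell_i = \tilde{d}_k(n)$) is exactly the bridge needed to read their guarantee as the stated claim. The additional sketch of the sampling and recursion mechanism is accurate but, as you note, properly deferred to the cited work.
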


Note that when $\lambda_{\max}\le \tau^2$, the term $(2+\frac{\sqrt{\lambda_{\max}}}{\tau})C_k$ of $\beta(\delta)$ in Theorem \ref{the:conf_subG} simply becomes $3C_k$.

It is also useful to define a related notion of information gain.  To do so, let $F$ be a centered GP on the domain $\Xc$ with kernel $k$. Information gain then refers to the mutual information $\Ic(Y_n; F)$ between the data values $Y_n=[y_i]_{i=1}^n$ and $F$. From the closed form expression of mutual information between two multivariate Gaussian distributions~\citep{cover1999elementsold}, it follows that
$
\Ic(Y_n; F)  = \frac{1}{2}\log\det\left(I_n+\frac{1}{\tau^2}K_{X_n,X_n} \right). \label{eq:mutual_info}
$
We then define the data-independent and kernel-specific \emph{maximal information gain} as follows \cite{srinivas2010gaussian}:
\begin{eqnarray}
\gamma_k(n) = \sup_{X_n\subset\Xc}\Ic(Y_n; F). \label{eq:gamma}
\end{eqnarray}
It is known that the information gain and the effective dimension are the same up to logarithmic factors. Specifically,
we have $\tilde{d}_{k}(n)\le \Ic(Y_n;F)$, and $\Ic(Y_n; F)=\Oc(\tilde{d}_{k}(n)\log(n))$~\citep{Calandriello2019Adaptive}.

We will state our results on the number of inducing points, regression error and optimization regret in terms of $\gamma_k$. For specific kernels, explicit bounds on $\gamma_k$ are given in~\cite{srinivas2010gaussian, vakili2020information}.

\section{Sparse Approximate Kernel-Based Regression with Uncertainty Reduction}\label{sec:regression}

In this section, we study the application of our confidence intervals with sparse approximate statistics to kernel-based regression. We are interested in the uniform concentration of $\bar{\mu}_n$ around the true data generating function $f$ in the form of explicit bounds on $||f-\bar{\mu}_n||_{L^{\infty}}$, in terms of $n$ and $m$. We show that, with $m=\Oct(\gamma_k(n))$ and suitably-chosen data points,  $||f-\bar{\mu}_n||_{L^{\infty}}$ converges to zero with high probability. In addition, we characterize the precise convergence rate.
For a more clear presentation of our analysis, we first consider the case of a finite domain $|\Xc|<\infty$, and then show the extension of the results to compact subsets of $\Rr^d$. 
Next, we describe the collection of a dataset based on uncertainty reduction, and then state our result on the error rate of the regression problem.
\subsection{Dataset}\label{sec:dataset}

We recursively collect a dataset based on a maximal uncertainty reduction design. At each discrete time $j=1,2,\dots$, a point $x_j$ is chosen based on the rule $x_j= \arg\max_{x\in\Xc}\sigmabar_{j-1}(x)$, where $\sigmabar^2_{j-1}=\sigmabar^2_{k,\tau,Z^{(j-1)}_{m_{j-1}},X_{j-1}}$ is the approximate posterior variance based on previous observation points $X_{j-1}$, using a set of inducing points $Z^{(j-1)}_{m_{j-1}}$. 

For correctness, we assume that the set of inducing points at each time $j$ is selected based on the recursive RLS method~\citep[][Algorithm~$2$]{musco2016recursive} formalized in Lemma~\ref{Lemma:recRLS}. 
Let $\delta_j$ denote the confidence parameter of Lemma~\ref{Lemma:recRLS} in selecting $Z^{(j)}_{m_j}$.
Then, with probability at least $1-\delta_j$, $m_j=\Oc(\tilde{d}_k(j)\log(\tilde{d}_k(j))/\delta_j)$ and $\lambda^{(j)}_{\max}\le \tau^2$, where $\lambda^{(j)}_{\max}$ denotes the maximum eigenvalue of $k_{X_j,X_j}-\kappa_{X_j,X_j}$. 
The set of inducing points may be selected based on any other method guaranteeing similar bounds on $\lambda^{(j)}_{\max}$. 

Intuitively, this dataset $\Dc_n=\{(x_j,y_j)_{j=1}^n\}$ is roughly evenly spread over the entire domain, which allows us to prove the convergence of $\mubar_n$ to $f$. Note that without assumptions on the dataset, we cannot expect such a result. For example, if all data points are collected from a small region in the domain, it is not expected for the prediction error to be small at the points that are far from this region.

\subsection{Uniform Convergence of $\mubar_n$ to $f$}

We prove that for the dataset described above, $\mubar_n$ uniformly converges to $f$ with high probability.

\begin{theorem}\label{the:regression}
For $\Dc_n$ described above with $\delta_j=\frac{3\delta}{\pi^2j^2}$, under Assumptions~\ref{ass:f_norm} and~\ref{ass:noise}, we have with probability at least  $1-\delta$ that
\begin{eqnarray}\nn
\|f-\bar{\mu}_n\|_{L^{\infty}} 
= \Oc\left(\sqrt{\frac{\gamma_k(n)}{n}\log\Big(\frac{|\Xc|}{\delta}\Big)}\right).
\end{eqnarray}
\end{theorem}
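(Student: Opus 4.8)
The plan is to reduce the uniform error $\|f-\mubar_n\|_{L^\infty}$ to the supremum of the approximate posterior standard deviation via the confidence interval of Theorem~\ref{the:conf_subG}, and then to bound $\sup_x\sigmabar_n(x)$ by $\Oc(\sqrt{\gamma_k(n)/n})$ using the uncertainty-reduction sampling rule. First I would handle the probabilistic bookkeeping. Since $\sum_{j\ge1}\delta_j=\frac{3\delta}{\pi^2}\sum_j j^{-2}=\delta/2$, a union bound over Lemma~\ref{Lemma:recRLS} ensures that with probability at least $1-\delta/2$ every inducing set $Z^{(j)}_{m_j}$ satisfies $\lambda^{(j)}_{\max}\le\tau^2$ simultaneously, so the confidence multiplier collapses to $\beta(\cdot)=3C_k+\frac{R}{\tau}\sqrt{2\log(1/\cdot)}$. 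Crucially, the sampled locations $X_n$ and the inducing sets depend only on the input variables ($\sigmabar_{j-1}$ and the RLS scores are functions of $X_{j-1}$ and $Z^{(j-1)}$ alone, never of the noisy responses $Y$), so the independence hypothesis of Theorem~\ref{the:conf_subG} is met. Applying that theorem at each $x\in\Xc$ with confidence $\frac{\delta}{2|\Xc|}$ and a union bound over the finite domain then yields, with probability at least $1-\delta$,
$$\|f-\mubar_n\|_{L^\infty}\le\beta\Big(\tfrac{\delta}{2|\Xc|}\Big)\,\sup_{x\in\Xc}\sigmabar_n(x)=\Oc\Big(\sqrt{\log(|\Xc|/\delta)}\Big)\cdot\sup_{x\in\Xc}\sigmabar_n(x),$$
so it remains to establish $\sup_x\sigmabar_n(x)=\Oc(\sqrt{\gamma_k(n)/n})$.

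Second, and this is the \emph{crux}, I would prove a sandwich relating the approximate variance to the exact one, namely $\sigma_n^2(x)\le\sigmabar_n^2(x)\le 2\,\sigma_n^2(x)$ whenever $\lambda_{\max}\le\tau^2$. Using the identity underlying Theorem~\ref{the:approximatevariance} I would first rewrite $\sigmabar_n^2(x)=\big(k(x,x)-\kappa(x,x)\big)+\sigma_{\kappa,n}^2(x)$, where $\sigma_{\kappa,n}^2$ is the exact posterior variance of a GP with the rank-$m$ kernel $\kappa$. Passing to feature space with feature map $\phi$ and $P$ the projection onto the span of the inducing features, one has $k(x,x)-\kappa(x,x)=\|(I-P)\phi(x)\|^2$, and both terms can be compared against $\sigma_n^2(x)=\tau^2\phi(x)^\top(\Phi_n^\top\Phi_n+\tau^2 I)^{-1}\phi(x)$. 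The lower bound $\sigma_n^2\le\sigmabar_n^2$ is the standard fact that the low-rank surrogate is never overconfident, while the upper bound is exactly where $\lambda_{\max}=\|k_{X_n,X_n}-\kappa_{X_n,X_n}\|\le\tau^2$ enters, controlling the discrepancy between $\Phi_n^\top\Phi_n$ and $\kappa_{X_n,X_n}$ in the relevant inverse and producing the factor $1+\lambda_{\max}/\tau^2\le2$. I expect this step to be the \textbf{main obstacle}, since it must convert the spectral control of $\lambda_{\max}$ \emph{on the data} into a pointwise variance comparison at an arbitrary test point $x$.

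Third, I would run the maximum-information-gain argument, routing everything through the exact variance to sidestep the fact that the inducing sets change with $j$. The exact posterior variance $\sigma_j^2(x)$ does not depend on any inducing points and is pointwise non-increasing in $j$; this is the structural fact that the approximate variance lacks. Hence for every $j\le n$,
$$\sup_x\sigmabar_n^2(x)\le 2\sup_x\sigma_n^2(x)\le 2\sup_x\sigma_{j-1}^2(x)\le 2\sup_x\sigmabar_{j-1}^2(x)=2\,\sigmabar_{j-1}^2(x_j)\le 4\,\sigma_{j-1}^2(x_j),$$
where the equality uses the sampling rule $x_j=\arg\max_x\sigmabar_{j-1}(x)$ and the remaining steps use the sandwich and exact monotonicity. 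Averaging over $j=1,\dots,n$ gives $n\sup_x\sigmabar_n^2(x)\le 4\sum_{j=1}^n\sigma_{j-1}^2(x_j)$.

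Finally, the standard chain-rule bound $\frac12\sum_{j=1}^n\log(1+\sigma_{j-1}^2(x_j)/\tau^2)=\Ic(Y_n;F)\le\gamma_k(n)$, valid for any point sequence, combined with the elementary inequality $a\le\frac{k_{\max}/\tau^2}{\log(1+k_{\max}/\tau^2)}\log(1+a)$ for $0\le a\le k_{\max}/\tau^2$, yields $\sum_{j=1}^n\sigma_{j-1}^2(x_j)=\Oc(\gamma_k(n))$. Combining with the previous display gives $\sup_x\sigmabar_n(x)=\Oc(\sqrt{\gamma_k(n)/n})$, and substituting into the first step produces the claimed rate $\Oc\big(\sqrt{\tfrac{\gamma_k(n)}{n}\log(|\Xc|/\delta)}\big)$. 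Apart from the sandwich lemma, every ingredient is either bookkeeping or a standard information-gain manipulation.
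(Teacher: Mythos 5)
Your proposal follows essentially the same route as the paper's proof: the same sandwich between approximate and exact variances (the paper's Lemma~\ref{lemma:ratio}, giving the factor $1+\lambda_{\max}/\tau^2\le 2$ under the RLS event), the same chain of inequalities routing through the monotone exact variance and the sampling rule $x_j=\arg\max_x\sigmabar_{j-1}(x)$, the same averaging and information-gain bound on $\sum_j\sigma_{j-1}^2(x_j)$, and the same union-bound bookkeeping over $j$ and $x\in\Xc$. The only quibble is that allocating $\frac{\delta}{2|\Xc|}$ per one-sided bound plus $\delta/2$ for the RLS events overshoots a total failure probability of $\delta$; the paper uses $\frac{\delta}{4|\Xc|}$, a constant-factor fix that does not affect the stated rate.
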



We emphasize that by Lemma~\ref{Lemma:recRLS}, the number of inducing points used in computation of $\mubar_n$ is bounded as $m_{n}=\Oct(\gamma_k(n)\log(n^2\gamma_k(n)/\delta))$.

The proof of Theorem~\ref{the:regression} relies on the confidence intervals proven in Theorem~\ref{the:conf_subG}, and a bound on the ratio $\sigmabar^2_n(x)/\sigma^2_n(x)$ between the approximate and exact posterior variances of the surrogate GP model, which we show to be bounded by a constant depending on $\lambda_{\max}$ and $\tau$. The proof, and a detailed expression of the bound including the hidden terms in $\Oc$ notation, are provided in Appendix~\ref{app:the:regression}.



\subsection{Extension to Continuous Domains}\label{sec:cont_reg}

The following theorem extends the above convergence result to compact domains $\Xc\subset \Rr^d$. 

\begin{theorem}\label{the:regressioncont}
Consider $\Dc_n$ described in Section~\ref{sec:dataset} with $\delta_j=\frac{3\delta}{\pi^2j^2}$. Under Assumptions~\ref{ass:f_norm},~\ref{ass:noise} and~\ref{ass:disc}, we have with probability at least  $1-\delta$ that
\begin{eqnarray}
\|f-\bar{\mu}_n\|_{L^{\infty}}  = \Oc\left(\sqrt{\frac{d\gamma_k(n)}{n}\log\Big(\frac{n}{\delta}\Big)}\right).
\end{eqnarray}
\end{theorem}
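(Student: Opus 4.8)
The plan is to replicate the proof of Theorem~\ref{the:regression} almost verbatim, replacing the finite-domain confidence bound (Theorem~\ref{the:conf_subG} followed by a union bound over $|\Xc|$) with the continuous-domain uniform confidence bound of Corollary~\ref{cor:conf_cont}. The observation that keeps this substitution legitimate is that the sampling rule $x_j=\arg\max_{x\in\Xc}\sigmabar_{j-1}(x)$ and the recursive-RLS selection of $Z^{(j)}_{m_j}$ depend on the inputs only, never on the noisy responses; hence the randomness in $X_n$ and $Z_{m_n}$ is independent of the noise vector $E_n$, and the independence hypothesis of Corollary~\ref{cor:conf_cont} is met. For the probability bookkeeping I would allocate $\delta/2$ to the recursive-RLS events: with $\delta_j=\frac{3\delta}{\pi^2 j^2}$, Lemma~\ref{Lemma:recRLS} gives $\lambda^{(j)}_{\max}\le\tau^2$ for every $j\le n$ on an event of probability at least $1-\sum_{j}\delta_j=1-\delta/2$, and the remaining $\delta/2$ to the confidence event of the corollary; since the inducing-point randomness is independent of $E_n$, conditioning on the RLS outcomes does not disturb the noise concentration underlying the confidence bound.

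Next I would reproduce the variance bound $\sup_{x\in\Xc}\sigmabar^2_n(x)=\Oc(\gamma_k(n)/n)$ that underlies Theorem~\ref{the:regression}. Conditioned on $\lambda^{(j)}_{\max}\le\tau^2$ for all $j\le n$, the ratio bound $\sigmabar^2_j(x)\le c_\lambda\,\sigma^2_j(x)$ holds uniformly in $x$ for a constant $c_\lambda=\Oc(1)$; moreover, since $x_j=\arg\max_x\sigmabar_{j-1}(x)$ and $\sigmabar_{j-1}\ge\sigma_{j-1}$, the selected exact variance $\sigma^2_{j-1}(x_j)$ is within a factor $c_\lambda$ of $\sup_x\sigma^2_{j-1}(x)$. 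Together with the monotonicity of the exact posterior variance in $j$ and the standard information-gain telescoping bound $\sum_{j=1}^n\sigma^2_{j-1}(x_j)=\Oc(\gamma_k(n))$, this yields $\sup_x\sigma^2_n(x)=\Oc(\gamma_k(n)/n)$ and hence $\sup_x\sigmabar^2_n(x)=\Oc(\gamma_k(n)/n)$. This step transfers unchanged from the finite case: the supremum over a compact $\Xc$ is attained by continuity of $\sigmabar_{j-1}$, and the information-gain bound is oblivious to the cardinality of $\Xc$.

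Finally I would combine the two ingredients. Corollary~\ref{cor:conf_cont} gives, uniformly in $x$, $|f(x)-\mubar_n(x)|\le \frac{2}{n}+\betat_n(\delta)\big(\sigmabar_n(x)+\frac{2}{\sqrt{n}}\big)$; taking the supremum over $x\in\Xc$ and inserting the variance bound produces $\|f-\mubar_n\|_{L^{\infty}}\le \frac{2}{n}+\betat_n(\delta)\big(\Oc(\sqrt{\gamma_k(n)/n})+\frac{2}{\sqrt{n}}\big)$. Since $\lambda_{\max}\le\tau^2$, the $C_k$ contribution to $\betat_n(\delta)$ collapses to $\Oc(C_k)$, so that $\betat_n(\delta)=\Oc(\sqrt{d\log(n/\delta)})$; multiplying by the $\Oc(\sqrt{\gamma_k(n)/n})$ variance term gives the claimed $\Oc\big(\sqrt{\frac{d\gamma_k(n)}{n}\log(n/\delta)}\big)$, while the $\frac{2}{\sqrt{n}}\betat_n(\delta)$ and $\frac{2}{n}$ terms are of lower order because $\gamma_k(n)\ge 1$.

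The main obstacle is precisely this last bookkeeping step. Unlike the finite case, where the confidence width pays only a $\sqrt{\log(|\Xc|/\delta)}$ factor, here one must verify that $\betat_n(\delta)=\beta(\delta/(2\Gamma_n))$ inflates to no more than $\Oc(\sqrt{d\log(n/\delta)})$. This requires tracking that the discretization cardinality $\Gamma_n=c\,(C^{\mubar_n}_k(\delta/2))^d n^d$ satisfies $C^{\mubar_n}_k(\delta/2)=\Oc(\sqrt{n\log(n/\delta)})$ (via Lemma~\ref{lemma:norm_mu_n}), whence $\Gamma_n=\Oc\big(n^{3d/2}(\log(n/\delta))^{d/2}\big)$ and therefore $\log(2\Gamma_n/\delta)=\Oc(d\log(n/\delta))$. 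Everything else mirrors the finite-domain proof of Theorem~\ref{the:regression}.
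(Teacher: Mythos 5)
Your proposal is correct and follows essentially the same route as the paper's proof: the uniform confidence bound of Corollary~\ref{cor:conf_cont} in place of the union bound over a finite $\Xc$, the $\delta/2$--$\delta/2$ split between the recursive-RLS events and the confidence event, the chain of inequalities through Lemma~\ref{lemma:ratio} and the information-gain telescoping to get $\sup_x\sigmabar_n^2(x)=\Oc(\gamma_k(n)/n)$, and the final accounting that $\Gamma_n=\Oct(C_k^d n^{3d/2})$ so $\log(\Gamma_n/\delta)=\Oc(d\log(n/\delta))$. No gaps.
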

The proof of Theorem~\ref{the:regressioncont} follows the same steps as the proof of Theorem~\ref{the:conf_subG}, using the uniform confidence intervals from Corollary~\ref{cor:conf_cont} instead of the confidence intervals  from Theorem~\ref{the:conf_subG}. 
The proof and a detailed expression of the bound including the hidden terms in $\Oc$ notation is provided in Appendix~\ref{app:the:regression}. 


\begin{remark}
Our results can be specified for the Mat{\'e}rn and SE kernels using the bounds on $\gamma_{k}(n)$ from \cite{srinivas2010gaussian} and \cite{vakili2020information}. For the SE kernel, we have $\gamma_{k}(n) =\Oc((\log(n))^{d+1})$; thus, setting $m=\Oc((\log(n))^{d+2})$, we have with probability at least $1-\delta$ that $\|f-\mubar_n\|_{L^{\infty}}=\Oct\big(\sqrt{\frac{d}{n}\log(\frac{n}{\delta})}\big).\;$
In the case of Mat{\'e}rn kernel, we have $\gamma_{k}(n) =\Oct(n^{\frac{d}{2\nu+d}})$; thus, setting $m=\Oct(n^{\frac{d}{2\nu+d}})$, we have with probability at least $1-\delta$ that
$
\|f-\mubar_n\|_{L^{\infty}}=\Oct\big(n^{\frac{-\nu}{2\nu+d}}\sqrt{d\log(\frac{n}{\delta})}\big)
.\;$
\end{remark}

\section{Kernel-Based Black-Box Optimization Using Sparse Approximation Methods}\label{sec:optimization}

In this section, we consider the problem of optimizing a black-box function $f$ over $\Xc$. In this setting, a learning algorithm sequentially selects observation points $x_n$ indexed by a discrete time $n=1,2,\dots$, and receives the corresponding real-valued noisy observations $f(x_n)+\epsilon_n$, where $\epsilon_n$ is the observation noise. The goal is to minimize the (cumulative) regret, defined as the sum of the losses compared to the maximum attainable objective, over a time horizon $N$:
\begin{eqnarray}
\Rc(N) = \sum_{n=1}^N(f(x^*)-f(x_n)),
\end{eqnarray}
where $x^*\in \arg\max_{x\in\Xc}f(x)$ is a global maximum of $f$. 


\subsection{Algorithm}

Our S-BPE algorithm is an adaptation of BPE algorithm proposed in~\citep{li2021gaussian}, where the exact GP statistics are replaced by their sparse approximations. The algorithm proceeds in batches. Specifically, the time horizon $N$ is partitioned into $B$ batches indexed by $i=1,\dots, B$, and the length of the $i$-th batch is denoted by $N_i$.

Similar to the previous section, we first consider the case that $|\Xc|<\infty$, and then show the extension of the results to compact subsets of $\Rr^d$. 
The S-BPE algorithm maintains a set $\Xc_i$ of the potential maximizers of $f$ (initialized to be the full domain $\Xc$). During each batch $i$ the algorithm explores the points in $\Xc_i$, and at the end, reduces the size of $\Xc_i$ by eliminating some of its elements. 

For exploration, the algorithm picks the points in $\Xc_i$ with the highest uncertainty, which is measured by the sparse approximate posterior variance. To ensure the validity of the confidence intervals, we must ignore the previous batches and compute the variance only based on the observations in the current batch. 
We thus introduce the notations
\begin{eqnarray}\nn
\bar{\mu}_{j-1,i}(\cdot)&=& \mubar_{k,\tau, Z^{(j-1,i)}_{m_{j-1,i}},X_{j-1,i} Y_{j-1,i}}(\cdot),\\\nn
\bar{\sigma}^2_{j-1,i}(\cdot) &=& \bar{\sigma}^2_{k,\tau,Z^{(j-1,i)}_{m_{j-1,i}},X_{j-1,i}}(\cdot),
\end{eqnarray}
for the approximate posterior mean and variance based on the first $j-1$ observations $\{X_{j-1,i},Y_{j-1,i}\}$ in batch $i$. 

During each batch $i$, the set of inducing points at each time $j$ is selected based on the recursive RLS method~\citep[][Algorithm~$2$]{musco2016recursive} formalized in Lemma~\ref{Lemma:recRLS}. 
Let $1-\delta_j$ denote the confidence level of Lemma~\ref{Lemma:recRLS} in selecting $Z^{(j,i)}_{m_{j,i}}$.
Then, with probability at least $1-\delta_j$, $m_{j,i}=\Oc(\tilde{d}_k(j)\log(\tilde{d}_k(j))/\delta_j)$ and $\lambda^{(j,i)}_{\max}\le \tau^2$, where $\lambda^{(j,i)}_{\max}$ denotes the maximum eigenvalue of $k_{X_{j,i},X_{j,i}}-\kappa_{X_{j,i},X_{j,i}}$.



The $j$-th point selected in batch $i$ is specified as $x_j=\arg\max_{x\in \Xc_i}\sigmabar_{j-1,i}(x)$. 

\begin{remark}
The selection of points in each batch $i$ (as well as the design of the dataset in Section~\ref{sec:regression})
is based on uncertainty reduction, and only uses the sparse approximate variance. This ensures the validity of the confidence interval in Theorem~\ref{the:conf_subG}, which requires the observation points and the inducing points to be selected independently from the observation noise (i.e., without using $Y_{j-1,i}$).
\end{remark}

At the end of batch $i$, the elements of $\Xc_i$ that are unlikely to be the maximizers of $f$ are removed. To do this, Theorem~\ref{the:conf_subG} is used to create the following upper and lower confidence bounds:
\begin{eqnarray}\nn
u_i(\cdot)&=&\mubar_{N_i, i}(\cdot) + \beta\Big(\frac{\delta}{4B|\Xc|}\Big)\sigmabar_{N_i,i}(\cdot), \\\label{eq:conf_batch}
l_i(\cdot)&=&\mubar_{N_i, i}(\cdot) - \beta\Big(\frac{\delta}{4B|\Xc|}\Big)\sigmabar_{N_i,i}(\cdot) .
\end{eqnarray}
Using $l_i$ and $u_i$, S-BPE sets
$\Xc_{i+1}=\{x\in\Xc_i: u_i(x)\ge \max_{x'\in \Xc_i} l_i(x')\}$.
The rationale here is that conditioned on the validity of the confidence bounds, if $u_i(x)<l_i(z)$ for some $z\in \Xc_i$, then $x$ cannot be a maximizer of $f$. Pseudo-code for S-BPE is provided in Algorithm~\ref{Alg:S-PBE} in Appendix~\ref{app:alg_SBPE}.



\begin{theorem}\label{the:optimization}
Consider the black-box optimization problem described above. In S-BPE, set $\delta_j=\frac{3\delta }{\pi^2Bj^2}$, and $l_i$, $u_i$ according to~\eqref{eq:conf_batch}. Under Assumptions~\ref{ass:f_norm} and~\ref{ass:noise}, the regret performance of S-BPE satisfies the following with probability at least $1-\delta$:
\begin{eqnarray}
\Rc(N) = \Oct\left(\sqrt{N\gamma_k(N)\log\Big(\frac{|\Xc|}{\delta}\Big)}\right).
\end{eqnarray}
\end{theorem}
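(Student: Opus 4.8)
The plan is to adapt the BPE regret analysis to the sparse setting, with the approximate statistics $(\mubar,\sigmabar)$ replacing the exact $(\mu,\sigma)$, and to control the additional approximation error through $\lambda_{\max}$. First I would set up a single ``good event'' on which everything works. Two things can fail: the recursive RLS guarantee of Lemma~\ref{Lemma:recRLS} at some step $(j,i)$, and the confidence bounds of Theorem~\ref{the:conf_subG} at the elimination steps. With $\delta_j=\frac{3\delta}{\pi^2 Bj^2}$, the union bound over all $(j,i)$ of the RLS failure probabilities telescopes to $\frac{3\delta}{\pi^2}\sum_{j\ge1}j^{-2}=\frac{\delta}{2}$, so on this event $\lambda^{(j,i)}_{\max}\le\tau^2$ holds for every $(j,i)$. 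Using $\beta(\frac{\delta}{4B|\Xc|})$ in~\eqref{eq:conf_batch} together with a union bound over the $B$ batches, the $|\Xc|$ candidate points, and the two sides of each interval contributes a further $\frac{\delta}{2}$, so the confidence bounds $l_i,u_i$ are simultaneously valid with the remaining budget. Since $\lambda^{(j,i)}_{\max}\le\tau^2$, the multiplier simplifies to $\beta(\frac{\delta}{4B|\Xc|})=\Oc\big(C_k+\frac{R}{\tau}\sqrt{\log(B|\Xc|/\delta)}\big)$. A standard consequence of validity is that $x^*$ is never eliminated, because $u_i(x^*)\ge f(x^*)\ge\max_{x'\in\Xc_i}f(x')\ge\max_{x'\in\Xc_i}l_i(x')$.

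Next I would bound the per-step regret inside a batch by the confidence width at the end of the preceding batch. For $i\ge2$ and any $x\in\Xc_i$, the survival condition $u_{i-1}(x)\ge\max_{x'}l_{i-1}(x')\ge l_{i-1}(x^*)$ gives $\mubar_{N_{i-1},i-1}(x^*)-\mubar_{N_{i-1},i-1}(x)\le\beta(\cdot)\big(\sigmabar_{N_{i-1},i-1}(x)+\sigmabar_{N_{i-1},i-1}(x^*)\big)$, and combining with the valid bounds $f(x^*)\le u_{i-1}(x^*)$, $f(x)\ge l_{i-1}(x)$ yields $f(x^*)-f(x)\le 4\beta(\cdot)\max_{x'\in\Xc_{i-1}}\sigmabar_{N_{i-1},i-1}(x')$. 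The crux is then to show this maximal end-of-batch standard deviation is $\Oc\big(\sqrt{\gamma_k(N_{i-1})/N_{i-1}}\big)$. This is where the sparse approximation enters, and it is the step I expect to be the main obstacle: unlike the exact posterior variance, $\sigmabar^2$ is not transparently monotone under added observations because the inducing set is reselected at every step. I would therefore route through the exact variance, using the two-sided comparison $\sigma^2_n\le\sigmabar^2_n\le(1+\lambda_{\max}/\tau^2)\sigma^2_n\le2\sigma^2_n$ established in the analysis of Theorem~\ref{the:regression} (valid since $\lambda_{\max}\le\tau^2$) together with monotonicity of the exact $\sigma^2$. For every step $j$ of batch $i-1$ and every $x$, this gives $\sigmabar^2_{N_{i-1},i-1}(x)\le2\,\sigma^2_{N_{i-1},i-1}(x)\le2\,\sigma^2_{j-1,i-1}(x)\le2\,\sigmabar^2_{j-1,i-1}(x)\le2\,\sigmabar^2_{j-1,i-1}(x_j)$, the last inequality because $x_j$ maximizes the approximate variance. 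Averaging over $j=1,\dots,N_{i-1}$ and using the standard information-gain bound $\sum_j\sigmabar^2_{j-1,i-1}(x_j)\le2\sum_j\sigma^2_{j-1,i-1}(x_j)=\Oc(\gamma_k(N_{i-1}))$ (which is unaffected by the inducing points, as the exact variance does not depend on them) delivers the claim.

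Finally I would sum the regret over batches with the BPE schedule $N_1=\lceil\sqrt{N}\rceil$ and $N_i=\lceil\sqrt{N N_{i-1}}\rceil$, which makes $N_i/\sqrt{N_{i-1}}=\Oc(\sqrt{N})$ and exhausts the horizon in $B=\Oc(\log\log N)$ batches. Batch $1$ incurs only trivial regret $\Oc(\sqrt{N}\,C_k\sqrt{k_{\max}})$, while each batch $i\ge2$ contributes $N_i\cdot4\beta(\cdot)\,\Oc\big(\sqrt{\gamma_k(N_{i-1})/N_{i-1}}\big)=\Oc\big(\beta(\cdot)\sqrt{N\gamma_k(N)}\big)$ using $\gamma_k(N_{i-1})\le\gamma_k(N)$. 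Summing over the $\Oc(\log\log N)$ batches and substituting $\beta(\frac{\delta}{4B|\Xc|})=\Oct\big(\sqrt{\log(|\Xc|/\delta)}\big)$ (treating $C_k,R,\tau$ as constants and absorbing the $\log B=\Oc(\log\log\log N)$ term) gives the stated $\Rc(N)=\Oct\big(\sqrt{N\gamma_k(N)\log(|\Xc|/\delta)}\big)$ bound.
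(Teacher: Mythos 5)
Your proposal is correct and follows essentially the same route as the paper's proof: a good event combining the RLS guarantee ($\lambda^{(j,i)}_{\max}\le\tau^2$, budget $\delta/2$) with the batch-end confidence bounds (budget $\delta/2$), non-elimination of $x^*$, per-step regret bounded by the end-of-batch confidence width, the approximate-to-exact variance comparison chained with monotonicity of the exact variance and the information-gain bound to get $\sigmabar_{N_{i-1},i-1}=\Oc\bigl(\sqrt{\gamma_k(N_{i-1})/N_{i-1}}\bigr)$, and summation over the $\Oc(\log\log N)$ batches using $N_i=\lceil\sqrt{NN_{i-1}}\rceil$. The step you flagged as the main obstacle (non-monotonicity of $\sigmabar^2$ under reselected inducing points) is handled exactly as the paper does it, via Lemma~\ref{lemma:ratio}.
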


The proof is given in Appendix~\ref{app:the:optimization}, along with a more precise expression for the bound showing constants and logarithmic terms.

\subsection{Extension to Continuous Domains}

The extension to the continuous domains is similar to Section~\ref{sec:cont_reg}. 
For continuous domains, we replace the confidence bounds $l_i$ and $u_i$ in~\eqref{eq:conf_batch} with the following:
\begin{eqnarray}\nn
&&\hspace{-2.2em}u_i(\cdot)=\mubar_{N_i,i}(\cdot)+\frac{2}{N_i}+\betat_{N_i}\big(\frac{\delta}{4B}\big)\Big(\sigmabar_{N_i,i}(\cdot)+\frac{2}{\sqrt{N_i}}\Big), \\\label{eq:conf_cont}
&&\hspace{-2.2em}l_i(\cdot)=\mubar_{N_i,i}(\cdot)-\frac{2}{N_i}-\betat_{N_i}\big(\frac{\delta}{4B}\big)\Big(\sigmabar_{N_i,i}(\cdot)+\frac{2}{\sqrt{N_i}}\Big),~~~~~~~~
\end{eqnarray}
which are uniformly valid as per Corollary~\ref{cor:conf_cont}. 

\begin{theorem}\label{the:optimizationcont}
Consider the black-box optimization problem described above. 
In S-BPE, set $\delta_j=\frac{3\delta }{\pi^2Bj^2}$, and $l_i$, $u_i$ according to~\eqref{eq:conf_cont}. Under Assumptions~\ref{ass:f_norm},~\ref{ass:noise} and~\ref{ass:disc}, the regret of S-BPE satisfies the following with probability at least $1-\delta$:
\begin{eqnarray}\nn
\Rc(N)
&=&\Oct\left(
\sqrt{Nd\gamma_k(N)\log\Big(\frac{N}{\delta}\Big)}
\right).
\end{eqnarray}
\end{theorem}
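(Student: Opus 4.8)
The plan is to mirror the finite-domain regret analysis of Theorem~\ref{the:optimization}, replacing the fixed-$x$ confidence bounds of Theorem~\ref{the:conf_subG} with the uniform bounds of Corollary~\ref{cor:conf_cont} encoded in the $l_i,u_i$ of~\eqref{eq:conf_cont}, and then verifying that the two additive discretization corrections $2/N_i$ and $2/\sqrt{N_i}$ are of lower order. First I would set up a good event. Union-bounding the recursive RLS guarantee of Lemma~\ref{Lemma:recRLS} over every selection step with $\delta_j=\frac{3\delta}{\pi^2Bj^2}$ gives $\lambda^{(j,i)}_{\max}\le\tau^2$ simultaneously throughout the run with probability at least $1-\delta/2$, since $\sum_{i=1}^B\sum_{j\ge 1}\delta_j\le\delta/2$; on this event the pointwise ratio bound $\sigma^2_{j,i}\le\sigmabar^2_{j,i}\le c_\tau\sigma^2_{j,i}$ (with $c_\tau$ a constant depending only on $\tau$, as used in the analysis of Theorem~\ref{the:regression}) holds at every step. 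On top of this, I would apply Corollary~\ref{cor:conf_cont} within each batch with parameter $\frac{\delta}{4B}$ and union-bound over the $B$ batches, so that $l_i(x)\le f(x)\le u_i(x)$ holds uniformly over $x\in\Xc$ and all batches with probability at least $1-\delta/2$; both events then hold together with probability $1-\delta$.

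Next I would run the two standard BPE steps on this event. Since the bounds are two-sided and valid at $x^*$, the maximizer is never eliminated: $u_i(x^*)\ge f(x^*)\ge\max_{z\in\Xc_i}f(z)\ge\max_{z\in\Xc_i}l_i(z)$, so $x^*\in\Xc_i$ for all $i$. For any surviving $x\in\Xc_{i+1}$ we have $u_i(x)\ge l_i(x^*)$; expanding~\eqref{eq:conf_cont} this gives $\mubar_{N_i,i}(x^*)-\mubar_{N_i,i}(x)\le\frac{4}{N_i}+\betat_{N_i}\big(\sigmabar_{N_i,i}(x)+\sigmabar_{N_i,i}(x^*)+\frac{4}{\sqrt{N_i}}\big)$. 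Combining this with $f(x^*)\le u_i(x^*)$ and $f(x)\ge l_i(x)$ yields $f(x^*)-f(x)\le\frac{8}{N_i}+2\betat_{N_i}\big(\sigmabar_{N_i,i}(x)+\sigmabar_{N_i,i}(x^*)+\frac{4}{\sqrt{N_i}}\big)$, and since both $x,x^*\in\Xc_i$ this is at most $\frac{8}{N_i}+4\betat_{N_i}\max_{z\in\Xc_i}\sigmabar_{N_i,i}(z)+\frac{8\betat_{N_i}}{\sqrt{N_i}}$.

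The end-of-batch maximal uncertainty is then controlled exactly as in the finite case, the continuous domain causing no change. Using monotonicity of the exact posterior variance in the number of observations, the fact that $x_j$ maximizes $\sigmabar_{j-1,i}$ over $\Xc_i$, and the ratio bound, for every $x\in\Xc_i$ and each $j\le N_i$ one has $\sigma^2_{N_i,i}(x)\le\sigma^2_{j-1,i}(x)\le\sigmabar^2_{j-1,i}(x)\le\sigmabar^2_{j-1,i}(x_j)$; averaging over $j$ and invoking the exact information-gain bound $\sum_{j=1}^{N_i}\sigma^2_{j-1,i}(x_j)=\Oc(\gamma_k(N_i))$ gives $\max_{z\in\Xc_i}\sigmabar^2_{N_i,i}(z)=\Oc(\gamma_k(N_i)/N_i)$. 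Hence points played in batch $i+1$ have gap $\Oc\big(\betat_{N_i}\sqrt{\gamma_k(N_i)/N_i}+\betat_{N_i}/\sqrt{N_i}\big)$; since $\gamma_k(N_i)\ge 1$ for large $N_i$ the new $\betat_{N_i}/\sqrt{N_i}$ term is dominated by $\betat_{N_i}\sqrt{\gamma_k(N_i)/N_i}$, so the gap is $\Oc(\betat_{N_i}\sqrt{\gamma_k(N_i)/N_i})$. The first batch contributes at most $N_1\cdot 2C_k\sqrt{k_{\max}}=\Oc(\sqrt{N})$ via $\|f\|_{\infty}\le C_k\sqrt{k_{\max}}$. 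Inheriting the BPE batch schedule $N_1=\lceil\sqrt{N}\rceil$, $N_{i+1}=\lceil\sqrt{NN_i}\rceil$, $B=\Oc(\log\log N)$, the regret of batch $i+1$ is $N_{i+1}\cdot\Oc(\betat_{N_i}\sqrt{\gamma_k(N_i)/N_i})=\Oc(\betat_{N}\sqrt{N\gamma_k(N)})$ after $\gamma_k(N_i)\le\gamma_k(N)$ and $N_{i+1}\le\sqrt{NN_i}$; summing over the $\Oc(\log\log N)$ batches and substituting $\betat_N(\frac{\delta}{4B})=\Oc\big(C_k+\frac{R}{\tau}\sqrt{d\log(N/\delta)}\big)$ gives $\Rc(N)=\Oct(\sqrt{Nd\gamma_k(N)\log(N/\delta)})$, as claimed.

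The only place the continuous analysis genuinely departs from the finite one is the appearance of the additive discretization terms in~\eqref{eq:conf_cont} together with the enlarged multiplier $\betat_{N_i}$, which carries the extra $\sqrt{d}$ and the $\log(N/\delta)$ (rather than $\log(|\Xc|/\delta)$) dependence through the discretization size $\Gamma_{N_i}=\Oc(C_k^dN_i^{3d/2})$ from Corollary~\ref{cor:conf_cont}. I expect the substantive step to be verifying that the extra $1/\sqrt{N_i}$ width contribution is absorbed by the posterior-variance width $\sqrt{\gamma_k(N_i)/N_i}$; once that comparison is settled, the remainder of the argument is a mechanical repetition of the finite-domain proof.
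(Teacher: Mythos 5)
Your proposal is correct and follows essentially the same route as the paper's proof: condition on the joint good event (RLS guarantee at every step plus the uniform confidence bounds of Corollary~\ref{cor:conf_cont} union-bounded over batches), show $x^*$ is never eliminated, bound the gap of surviving points by the end-of-batch confidence width, control $\max_x\sigmabar_{N_i,i}(x)$ via the max-variance rule, Lemma~\ref{lemma:ratio}, and the information-gain bound, and sum over the $\Oc(\log\log N)$ batches with $\Gamma_{N}=\Oct(C_k^dN^{3d/2})$ feeding the $d\log(N/\delta)$ factor. The only cosmetic difference is that you explicitly argue the $2/N_i$ and $2/\sqrt{N_i}$ discretization corrections are dominated, whereas the paper simply carries them through the batch-regret sum; both yield the same bound.
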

The proof and a more precise expression for the bound are given in Appendix~\ref{app:the:optimization}.

Using Lemma~\ref{Lemma:recRLS}, the total time complexity of S-BPE over the time horizon $N$ is bounded by $\Oct(N^2(\gamma_k(N))^2)$, which is the same as that of BKB~\citep{Calandriello2019Adaptive} and lower than that of S-GP-TS~\citep{Vakili2020Scalable}. All are significant improvements over the time complexity of standard kernel-based bandit algorithms using exact GP statistics.


We note that the time complexity $\Oct(N^2(\gamma_k(N))^2)$ excludes the complexity of optimizing the acquisition function (here, sparse approximate variance).  This aspect is largely orthogonal to our focus on the approximate posterior calculation, and it is similarly not considered in the most relevant works on optimization using sparse GPs~\cite{Calandriello2019Adaptive, Vakili2020Scalable}. 

\begin{remark}

%
In the case of the SE kernel, Theorem~\ref{the:optimization} implies the following regret bound:
\begin{eqnarray}
\Rc(N)=\Oct\left(\sqrt{Nd(\log(N))^{d+1}\log\Big(\frac{N}{\delta}\Big)}\right),
\end{eqnarray}
and in the case of Mat{\'e}rn kernels, Theorem~\ref{the:optimization} implies
\begin{eqnarray}
\Rc(N)=\Oct\left(N^{\frac{\nu+d}{2\nu+d}}(\log(N))^{\frac{\nu}{2\nu+d}}\sqrt{d\log\Big(\frac{N}{\delta}\Big)}\right).
\end{eqnarray}
Comparing to the lower bounds proven in~\cite{Scarlett2017Lower}, both of these bounds are tight up to logarithmic factors in $N$. 
To the best of our knowledge, this is the first kernel-based bandit optimization algorithm based on sparse approximation methods enjoying such near-optimality guarantees.
In comparison, for Mat{\'e}rn kernels, the regret bounds for both BKB and S-GP-TS scale at a rate $\Oct(N^{\frac{2\nu+3d}{4\nu+2d}})$, which become super-linear (trivial) when $\nu\le d/2$.  

\end{remark}

\section{Conclusion}

We have proved confidence intervals for kernel-based regression with sparse approximate models, using a novel representation of the sparse approximate posterior variance. 
We demonstrated the applications of the confidence intervals for regression problems, as well as kernel-based bandit optimization problems.  For the latter, our regret bounds are optimal up to logarithmic factors for the cases where lower bounds on the regret are known.


\textbf{Acknowledgment.} J.~Scarlett was supported by the Singapore National Research Foundation (NRF) under grant number R-252-000-A74-281.

\bibliography{references.bib}
\bibliographystyle{icml2022}

\newpage
\appendix
\onecolumn



In the following appendices, we provide a summary of some common notation, pseudo-code for S-BPE, the proofs of our theorems, as well as a variation of Theorem~\ref{the:conf_subG} relaxing the sub-Gaussian noise assumption. The proofs of auxiliary lemmas are provided in Appendix~\ref{app:proof_lemmas}. 

\section{Summary of Main Notation} \label{sec:notation_list}

For the reader's convenience, some of the main notation used throughout the paper is summarized as follows:
\begin{itemize}
    \item Inputs $X_n= [x_1,...,x_n]^{\top}$, observations $Y_n= [y_1,...,y_n]^{\top}$, noise terms $E_n=[\epsilon_1,\epsilon_2,\dots,\epsilon_n]^{\top}$, inducing points $Z_m=[z_1,z_2,...,z_m]^\top$, regularization parameter $\tau$.
    \item Exact kernel $k(\cdot,\cdot)$, reduced rank kernel $\kappa(\cdot,\cdot)$ (see \eqref{eq:kappa}).
    \item Exact posterior mean (or kernel regressor) $\mu_{k,\tau, X_n, Y_n}(\cdot)$, covariance $k_{\tau, X_n}(\cdot,\cdot)$, and variance $\sigma^2_{k,\tau,X_n}(\cdot)$ (see \eqref{eq:regressor} and \eqref{eq:k_posterior}), or in shorthand notation, $\mu_n$, $k_n$, and $\sigma_n$.
    \item Approximate posterior mean $\mubar_{k, \tau,Z_m,X_n,Y_n}(\cdot)$, covariance $\kbar_{\tau,Z_m,X_n}(\cdot,\cdot')$, and variance $\sigmabar^2_{k,\tau,Z_m,X_n}(\cdot)$ (see \eqref{eq:appxstatistics}), or in shorthand notation, $\bar{\mu}_n$, $\bar{k}_n$, and $\bar{\sigma}_n$.
    \item Auxiliary vector-valued function $V_n$ (see \eqref{eq:defV}) such that $\mubar_n(\cdot) = V_n^{\top}(\cdot)Y_n$.
    \item (Section \ref{sec:conf_intervals}) Projected function $\Pi_{\kappa}[f]$ (see \eqref{eq:ftilde}), RKHS norm bound $C_k$, sub-Gaussianity parameter $R$, maximum eigenvalue $\lambda_{\max}$ (of $k_{X_n,X_n} - \kappa_{X_n,X_n}$), confidence interval multiplier $\beta(\delta)$, discretized input $[x]$.
\end{itemize}


\section{Pseudo-Code for S-BPE}\label{app:alg_SBPE}

Pseudo-code for S-BPE is provided in Algorithm~\ref{Alg:S-PBE} below. 

\begin{algorithm}
\caption{Sparse Batched Pure Exploration (S-BPE)}\label{Alg:S-PBE}
\begin{algorithmic}[1]

\STATE \textbf{Initialization:} $k$, $\Xc$, $f$, $N$, $N_0\leftarrow 1$ 
\FOR{$i\leftarrow 1,2,\dots$}
\STATE $N_i\leftarrow \lceil \sqrt{N \cdot N_{i-1}}\rceil$ \label{line:Ni}
\STATE $\Sc_i\gets\emptyset$
\FOR{$j\leftarrow 1,2,\dots,N_i$}
\STATE Update $Z^{(j-1,i)}_{m_{j-1},i}$ 
\STATE Update $\bar{\sigma}_{j-1,i}(\cdot)$ 
\STATE $x_n\leftarrow \arg\max_{x\in\Xc_i}\sigmabar_{j-1,i}(x)$
\STATE $\Sc_i\gets\Sc_i\cup\{x_n\}$
\STATE $n\leftarrow n+1$
\IF{$n>N$}
\STATE Terminate
\ENDIF
\STATE Collect observations for all points in $\Sc_i$
\ENDFOR
\STATE Compute $\bar{\mu}_{N_i,i}$ and $\bar{\sigma}_{N_i,i}$ 
\STATE $\Xc_{i+1}\leftarrow\{x\in\Xc_i: u_i(x)\ge \max_{x'\in\Xc_i} l_i(x')\}$\label{line:elim}
\ENDFOR\label{euclidendwhile}
\end{algorithmic}
\end{algorithm}

For discrete and continuous domains $l_i$ and $u_i$ are set according to~\eqref{eq:conf_batch} and~\eqref{eq:conf_cont}, respectively.


\section{Proof of Theorem~\ref{the:approximatevariance} (Decomposition of the Approximate Posterior Variance)} \label{app:the:approximatevariance}

From the expression for the approximate posterior covariance in \eqref{eq:appxstatistics}, we have
\begin{eqnarray}\nn
\sigmabar^2_n(\cdot,\cdot) &=& \underbrace{k(\cdot,\cdot) - k^{\top}_{Z_m}(\cdot)k^{-1}_{Z_m,Z_m}k_{Z_m}(\cdot)}_{\text{Term}~1}+ \underbrace{k^{\top}_{Z_m}(\cdot)\left(k_{Z_m,Z_m}+\frac{1}{\tau^2}k^{\top}_{X_n,Z_m}k_{X_n,Z_m}\right)^{-1}k_{Z_m}(\cdot)}_{\text{Term}~2}. 
\end{eqnarray}

Term $1$ is equivalent to the posterior variance of a noise-free GP model with kernel $k$ conditioned on $Z_m$:
\begin{eqnarray}
\text{Term}~1 = \sigma^2_{k,0,Z_m}(\cdot).
\end{eqnarray}

The following lemma shows that Term $2$ is equivalent to the posterior variance of a GP model with kernel $\kappa$ conditioned on~$X_n$.  

\begin{lemma}\label{lemma:Term2}
For any kernel $k$ and any $Z_m$, $X_n$, and $Y_n$, we have
\begin{eqnarray}\nn
\mathrm{Term}~2 = \sigma^2_{\kappa,\tau, X_n}(\cdot),
\end{eqnarray}
where $\kappa(\cdot,\cdot')=k_{Z_m}(\cdot)k^{-1}_{Z_m,Z_m}k_{Z_m}(\cdot')$.
\end{lemma}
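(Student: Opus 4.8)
The plan is to start from the definition of the exact posterior variance applied to the degenerate kernel $\kappa$, and then reduce the claimed identity to a single application of the Woodbury (Sherman--Morrison--Woodbury) matrix identity. By \eqref{eq:k_posterior} applied with kernel $\kappa$ in place of $k$, we have
\[
\sigma^2_{\kappa,\tau,X_n}(\cdot) = \kappa(\cdot,\cdot) - \kappa^{\top}_{X_n}(\cdot)\bigl(\kappa_{X_n,X_n}+\tau^2 I_n\bigr)^{-1}\kappa_{X_n}(\cdot),
\]
so the goal is to show this equals Term~$2$. The whole argument is to rewrite every $\kappa$-object through the defining factorization $\kappa(\cdot,\cdot')=k^{\top}_{Z_m}(\cdot)k^{-1}_{Z_m,Z_m}k_{Z_m}(\cdot')$.

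First I would introduce the shorthand $A=k_{Z_m,Z_m}$ (an $m\times m$ matrix) and $B=k_{X_n,Z_m}$ (an $n\times m$ matrix, so that $B^{\top}=k_{Z_m,X_n}$). Since the $i$-th entry of $k_{Z_m}(x_i)$ constitutes the $i$-th row of $B$, a direct componentwise check gives the three substitutions $\kappa(\cdot,\cdot)=k^{\top}_{Z_m}(\cdot)A^{-1}k_{Z_m}(\cdot)$, $\kappa_{X_n}(\cdot)=B A^{-1}k_{Z_m}(\cdot)$, and $\kappa_{X_n,X_n}=BA^{-1}B^{\top}$, where I use symmetry of $A^{-1}$ to transpose the middle expression. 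Substituting these into the display above and factoring out $k^{\top}_{Z_m}(\cdot)(\,\cdot\,)k_{Z_m}(\cdot)$ on both sides reduces the lemma to the purely matrix-level identity
\[
A^{-1} - A^{-1}B^{\top}\bigl(BA^{-1}B^{\top}+\tau^2 I_n\bigr)^{-1}BA^{-1} = \Bigl(A + \tfrac{1}{\tau^2}B^{\top}B\Bigr)^{-1},
\]
whose right-hand side is exactly the inner matrix of Term~$2$ (recall $k^{\top}_{X_n,Z_m}k_{X_n,Z_m}=B^{\top}B$).

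The last step is to recognize the right-hand side as a Woodbury expansion: taking $U=B^{\top}$, $C=\tfrac{1}{\tau^2}I_n$, and $V=B$ in $(A+UCV)^{-1}=A^{-1}-A^{-1}U(C^{-1}+VA^{-1}U)^{-1}VA^{-1}$, one has $A+UCV=A+\tfrac{1}{\tau^2}B^{\top}B$, $C^{-1}=\tau^2 I_n$, and $VA^{-1}U=BA^{-1}B^{\top}$, which reproduces the left-hand side verbatim. Here $A$ is invertible because $k_{Z_m,Z_m}$ is a positive definite kernel matrix over distinct inducing points (this is already implicitly assumed wherever $k^{-1}_{Z_m,Z_m}$ appears in the definition of $\kappa$).

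I expect essentially no genuine obstacle here: the computation is bookkeeping once the factorizations are set up, and the single substantive move is identifying the correct instantiation of Woodbury. The only mild subtlety worth a sentence is that $\kappa$ is a degenerate rank-$m$ kernel, so $\kappa_{X_n,X_n}$ is singular when $n>m$; this causes no problem because the posterior-variance formula only ever inverts $\kappa_{X_n,X_n}+\tau^2 I_n$, which is positive definite for $\tau>0$, and likewise $A+\tfrac{1}{\tau^2}B^{\top}B$ is positive definite, so all inverses appearing in the identity are well defined.
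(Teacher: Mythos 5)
Your proof is correct and is essentially the paper's own argument run in the reverse direction: the paper starts from Term~$2$, applies the Woodbury identity to $\bigl(k_{Z_m,Z_m}+\frac{1}{\tau^2}k^{\top}_{X_n,Z_m}k_{X_n,Z_m}\bigr)^{-1}$, and recognizes the result as $\kappa(\cdot,\cdot)-\kappa^{\top}_{X_n}(\cdot)(\kappa_{X_n,X_n}+\tau^2 I_n)^{-1}\kappa_{X_n}(\cdot)$, which is exactly the matrix identity you establish (with the same instantiation of Woodbury and the same factorizations $\kappa_{X_n,X_n}=BA^{-1}B^{\top}$, etc.). No substantive difference.
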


In addition, we have the following expression for $V_n(\cdot)$ based on $\kappa$.
\begin{lemma}\label{lemma:V_nkappa}
For any kernel $k$ and any $Z_m$, $X_n$, and $Y_n$, we have
\begin{eqnarray}
V^{\top}_n(\cdot) = \kappa^{\top}_{X_n}(\cdot)(\tau^2I_n+\kappa_{X_n,X_n})^{-1},
\end{eqnarray}
where $\kappa(\cdot,\cdot')=k_{Z_m}(\cdot)k^{-1}_{Z_m,Z_m}k_{Z_m}(\cdot')$.
\end{lemma}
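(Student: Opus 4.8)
The plan is to reduce the claimed identity to a standard matrix ``push-through'' identity. Both sides of the asserted equality have the form $k^{\top}_{Z_m}(\cdot)M$ for some $m\times n$ matrix $M$ that does not depend on the evaluation point $\cdot$, so it suffices to prove equality of the two corresponding matrices, after which multiplication by $k^{\top}_{Z_m}(\cdot)$ preserves the equality for every $x\in\Xc$.

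First I would rewrite the right-hand side using the definition $\kappa(\cdot,\cdot')=k^{\top}_{Z_m}(\cdot)k^{-1}_{Z_m,Z_m}k_{Z_m}(\cdot')$ from Lemma~\ref{lemma:Term2}. Reading off the entries of the relevant vector and matrix gives $\kappa^{\top}_{X_n}(\cdot)=k^{\top}_{Z_m}(\cdot)k^{-1}_{Z_m,Z_m}k_{Z_m,X_n}$ and $\kappa_{X_n,X_n}=k_{X_n,Z_m}k^{-1}_{Z_m,Z_m}k_{Z_m,X_n}$. Substituting these, the right-hand side becomes
\[
k^{\top}_{Z_m}(\cdot)\,k^{-1}_{Z_m,Z_m}k_{Z_m,X_n}\bigl(\tau^2I_n+k_{X_n,Z_m}k^{-1}_{Z_m,Z_m}k_{Z_m,X_n}\bigr)^{-1}.
\]
For the left-hand side I would use $k^{\top}_{X_n,Z_m}=k_{Z_m,X_n}$ to rewrite the definition \eqref{eq:defV} as $V^{\top}_n(\cdot)=k^{\top}_{Z_m}(\cdot)\bigl(\tau^2k_{Z_m,Z_m}+k_{Z_m,X_n}k_{X_n,Z_m}\bigr)^{-1}k_{Z_m,X_n}$.

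Factoring out the common $k^{\top}_{Z_m}(\cdot)$, the lemma reduces to the matrix identity, with the shorthand $A:=k_{Z_m,Z_m}$ (symmetric positive definite) and $B:=k_{Z_m,X_n}$ so that $B^{\top}=k_{X_n,Z_m}$,
\[
(\tau^2A+BB^{\top})^{-1}B=A^{-1}B(\tau^2I_n+B^{\top}A^{-1}B)^{-1}.
\]
I would verify this by clearing both inverses: left-multiply by $(\tau^2A+BB^{\top})$ and right-multiply by $(\tau^2I_n+B^{\top}A^{-1}B)$. The left side collapses to $B(\tau^2I_n+B^{\top}A^{-1}B)$, while the right side collapses to $(\tau^2A+BB^{\top})A^{-1}B=\tau^2B+BB^{\top}A^{-1}B=B(\tau^2I_n+B^{\top}A^{-1}B)$, so the two agree.

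There is no genuine obstacle here: the argument is routine linear algebra, and the only points worth checking are that $A=k_{Z_m,Z_m}$ and $\tau^2A+BB^{\top}$ are invertible, which hold because $k_{Z_m,Z_m}$ is positive definite and $\tau>0$. The substantive step is simply recognizing the push-through structure that lets one move the inverse from acting on the $m\times m$ scale to acting on the $n\times n$ scale, which is exactly the content expressing $V_n$ in terms of the reduced-rank kernel $\kappa$.
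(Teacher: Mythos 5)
Your proposal is correct and follows essentially the same route as the paper: both reduce the claim to the push-through identity $(\tau^2A+BB^{\top})^{-1}B=A^{-1}B(\tau^2I_n+B^{\top}A^{-1}B)^{-1}$ after expanding $\kappa$ in terms of $k$, with the only difference being that the paper cites $A(I+BA)^{-1}=(I+AB)^{-1}A$ as a known identity while you verify it directly by clearing the inverses. The invertibility remarks you add ($k_{Z_m,Z_m}\succ 0$ and $\tau>0$) are a sensible bit of extra care.
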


Combining \eqref{eq:regressor} and \eqref{eq:mubar} with Lemma \ref{lemma:V_nkappa}, we find that $\bar{\mu}_{k,\tau,Z_m, X_n,\Pi_{\kappa}[f]_{X_n}} = \mu_{\kappa,\tau, X_n,  \Pi_{\kappa}[f]_{X_n} }$, where $\Pi_{\kappa}[f]_{X_n}=[\Pi_{\kappa}[f](x_1), \dots, \Pi_{\kappa}[f](x_n)]^{\top}$ denotes the vector of values of the projection of $f$ onto $\Hc_\kappa$, at the data points.

Then, Theorem~\ref{the:approximatevariance} follows from the following two lemmas on the posterior variance of the exact GP models.

\begin{lemma}[e.g., see, \cite{Kanagawa2018}, Proposition~$3.10$]\label{lemma:noise-freevar}
For any positive definite kernel $k$ on $\Xc$ and $Z_m\in\Xc^m$, we have
\begin{eqnarray}\nn
\sigma_{k,0,Z_m}(\cdot) = \sup_{g:\|g\|_k \le 1}|g(\cdot)-\mu_{k,0,Z_m,g_{Z_m}}(\cdot)|.
\end{eqnarray}
\end{lemma}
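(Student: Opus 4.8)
The plan is to recognize that, for each fixed input $x$, the interpolation residual $g \mapsto g(x) - \mu_{k,0,Z_m,g_{Z_m}}(x)$ is a bounded linear functional on $\Hc_k$, to identify its Riesz representer $h_x \in \Hc_k$ explicitly, and then to evaluate the supremum over the unit ball by Cauchy--Schwarz, which equals $\|h_x\|_{\Hc_k}$ and is attained at $g = h_x/\|h_x\|_{\Hc_k}$. The final step is to verify that $\|h_x\|_{\Hc_k}^2$ coincides with the closed-form noise-free posterior variance $\sigma^2_{k,0,Z_m}(x) = k(x,x) - k_{Z_m}^\top(x) k_{Z_m,Z_m}^{-1} k_{Z_m}(x)$, so that (both quantities being nonnegative) the square roots agree.

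First I would use the reproducing property to rewrite both terms of the residual as inner products against elements of $\Hc_k$. Since $g(x) = \langle g, k(\cdot,x)\rangle_{\Hc_k}$ and $g(z_i) = \langle g, k(\cdot,z_i)\rangle_{\Hc_k}$, the minimum-norm interpolator $\mu_{k,0,Z_m,g_{Z_m}}(x) = k_{Z_m}^\top(x) k_{Z_m,Z_m}^{-1} g_{Z_m}$ becomes $\langle g, \sum_j b_j\, k(\cdot,z_j)\rangle_{\Hc_k}$ with $b = k_{Z_m,Z_m}^{-1} k_{Z_m}(x)$ (here I use symmetry of $k_{Z_m,Z_m}^{-1}$ to collapse the double sum into the vector $b$). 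Hence the residual equals $\langle g, h_x\rangle_{\Hc_k}$, where
\[
h_x = k(\cdot,x) - \sum_{j=1}^m b_j\, k(\cdot,z_j), \qquad b = k_{Z_m,Z_m}^{-1} k_{Z_m}(x),
\]
which lies in $\Hc_k$ as a finite combination of kernel sections. Cauchy--Schwarz then gives $\sup_{\|g\|_{\Hc_k}\le 1} |\langle g, h_x\rangle_{\Hc_k}| = \|h_x\|_{\Hc_k}$, attained at $g = h_x/\|h_x\|_{\Hc_k}$ (with the degenerate case $h_x = 0$ handled trivially).

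The remaining computation — the only place needing care — is expanding $\|h_x\|_{\Hc_k}^2 = \langle h_x, h_x\rangle_{\Hc_k}$ via the reproducing property, using $\langle k(\cdot,x),k(\cdot,x)\rangle_{\Hc_k}=k(x,x)$, $\langle k(\cdot,x),k(\cdot,z_j)\rangle_{\Hc_k}=k(x,z_j)$, and $\langle k(\cdot,z_j),k(\cdot,z_l)\rangle_{\Hc_k}=k(z_j,z_l)$. This yields $\|h_x\|_{\Hc_k}^2 = k(x,x) - 2 b^\top k_{Z_m}(x) + b^\top k_{Z_m,Z_m} b$, and substituting $b = k_{Z_m,Z_m}^{-1} k_{Z_m}(x)$ collapses the last two terms, since both $b^\top k_{Z_m}(x)$ and $b^\top k_{Z_m,Z_m} b$ equal $k_{Z_m}^\top(x) k_{Z_m,Z_m}^{-1} k_{Z_m}(x)$. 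Thus $\|h_x\|_{\Hc_k}^2 = k(x,x) - k_{Z_m}^\top(x) k_{Z_m,Z_m}^{-1} k_{Z_m}(x) = \sigma^2_{k,0,Z_m}(x)$, which completes the argument. I do not anticipate a genuine obstacle: the result is classical (a representer identity together with a Cauchy--Schwarz duality), and the only real risk is bookkeeping, namely tracking the index contractions when substituting $b$ into the quadratic form and correctly invoking the symmetry of $k_{Z_m,Z_m}^{-1}$.
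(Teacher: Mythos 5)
Your proof is correct. Note that the paper does not prove this lemma itself -- it is quoted from \cite{Kanagawa2018} (Proposition 3.10) -- and your argument (identify the interpolation residual at a fixed $x$ as the bounded linear functional $g\mapsto\langle g,h_x\rangle_{\Hc_k}$ with representer $h_x=k(\cdot,x)-k^{\top}_{Z_m}(\cdot)k^{-1}_{Z_m,Z_m}k_{Z_m}(x)$, apply Cauchy--Schwarz, and verify $\|h_x\|_{\Hc_k}^2=k(x,x)-k^{\top}_{Z_m}(x)k^{-1}_{Z_m,Z_m}k_{Z_m}(x)$) is exactly the standard argument used there, with all steps, including the degenerate case $h_x=0$ and the symmetry of $k^{-1}_{Z_m,Z_m}$, handled correctly.
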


\begin{lemma}[\cite{vakili2021optimal}, Proposition~$1$]\label{lemma:noisyvar}
For any positive definite kernel $k$ on $\Xc$ and $X_n\in\Xc^n$, we have
\begin{eqnarray}\nn
&&\sigma^2_{k,\tau,X_n}(\cdot) = \sup_{g:\|g\|_{\Hc_k}\le 1}(g(\cdot)-\mu_{k,\tau,X_n,g_{X_n}}(\cdot))^2+ \tau^2\left\|k_{X_n}(\cdot)(\tau^2I_n+k_{X_n,X_n})^{-1}\right\|^2_{l^2}.
\end{eqnarray}
\end{lemma}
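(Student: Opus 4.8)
The statement is an identity between the closed-form posterior variance and the right-hand side, so the plan is to evaluate both terms on the right explicitly and check that their sum reduces to $\sigma^2_{k,\tau,X_n}(x) = k(x,x) - k^{\top}_{X_n}(x)(k_{X_n,X_n}+\tau^2 I_n)^{-1}k_{X_n}(x)$ from~\eqref{eq:k_posterior}. Throughout, fix an arbitrary $x\in\Xc$ and abbreviate $M = k_{X_n,X_n}+\tau^2 I_n$ and $b = k_{X_n}(x)$.

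First I would rewrite the supremum term as a squared RKHS norm. Since $\mu_{k,\tau,X_n,g_{X_n}}(x) = b^{\top}M^{-1}g_{X_n}$, the map $g\mapsto g(x)-\mu_{k,\tau,X_n,g_{X_n}}(x)$ is a bounded linear functional on $\Hc_k$. Applying the reproducing property $g(x)=\langle g,k(\cdot,x)\rangle_{\Hc_k}$ and $g(x_i)=\langle g,k(\cdot,x_i)\rangle_{\Hc_k}$, this functional equals $\langle g,h_x\rangle_{\Hc_k}$ with representer $h_x = k(\cdot,x)-\sum_{i=1}^n a_i\,k(\cdot,x_i)$ and coefficient vector $a = M^{-1}b$. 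By Cauchy--Schwarz the supremum of $\langle g,h_x\rangle^2_{\Hc_k}$ over the unit ball $\{\|g\|_{\Hc_k}\le 1\}$ is attained at $g = h_x/\|h_x\|_{\Hc_k}$ and equals $\|h_x\|^2_{\Hc_k}$, so the first term on the right is exactly $\|h_x\|^2_{\Hc_k}$.

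Next I would expand $\|h_x\|^2_{\Hc_k}$ using the reproducing property, obtaining $\|h_x\|^2_{\Hc_k}=k(x,x)-2a^{\top}b+a^{\top}k_{X_n,X_n}a$. Substituting $a=M^{-1}b$ and using the identity $k_{X_n,X_n}=M-\tau^2 I_n$ to write $a^{\top}k_{X_n,X_n}a = b^{\top}M^{-1}b-\tau^2 b^{\top}M^{-2}b$, the first term becomes $k(x,x)-b^{\top}M^{-1}b-\tau^2 b^{\top}M^{-2}b$. The noise term, meanwhile, equals $\tau^2\|b^{\top}M^{-1}\|^2_{l^2}=\tau^2 b^{\top}M^{-2}b$ by symmetry of $M$. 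Summing the two, the $\tau^2 b^{\top}M^{-2}b$ contributions cancel, leaving $k(x,x)-b^{\top}M^{-1}b=\sigma^2_{k,\tau,X_n}(x)$; since $x$ was arbitrary, this establishes the identity on all of $\Xc$.

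The calculation is routine once the representer $h_x$ is identified, and I expect the only delicate points to be (i) recognizing the supremum as the squared norm of $h_x$, so that Cauchy--Schwarz is tight rather than merely an upper bound, and (ii) invoking $k_{X_n,X_n}=M-\tau^2 I_n$ at the right moment to expose the exact cancellation between the projection error and the noise term. This identity is Proposition~1 of~\cite{vakili2021optimal}; the route above is the natural self-contained argument.
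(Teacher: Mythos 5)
Your proof is correct. Note that the paper does not prove this statement at all---it imports it verbatim as Proposition~1 of \cite{vakili2021optimal}---so there is no in-paper argument to compare against; your route (identify the representer $h_x$ of the evaluation-minus-prediction functional, observe that the supremum over the unit ball equals $\|h_x\|_{\Hc_k}^2$ by tightness of Cauchy--Schwarz, then expand and cancel the $\tau^2 b^{\top}M^{-2}b$ terms using $k_{X_n,X_n}=M-\tau^2 I_n$) is the standard one and is consistent with the machinery the paper does use elsewhere, e.g.\ the sup-over-unit-ball identity is exactly Lemma~\ref{Lemma:supRKHSball}. The only point worth making explicit is the degenerate case $h_x=0$, where both sides of your Cauchy--Schwarz step are zero, so the identity still holds.
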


Combining Lemmas~\ref{lemma:Term2} and ~\ref{lemma:noise-freevar}, we have
\begin{eqnarray}\nn
\text{Term}~1 &=&\sigma^2_{k,0,Z_m}(\cdot)\\\nn
&=&\sup_{g:\|g\|_{\Hc_k}\le 1}\left(g(\cdot)-\mu_{k, 0,Z_m, g_{Z_m}}(\cdot)\right)^2
\\\label{eq:term1}
&=&\sup_{g:\|g\|_{\Hc_k}\le 1}\left(g(\cdot)-\Pi_{\kappa}[g](\cdot)\right)^2,
\end{eqnarray}
where the last line holds by Lemma~\ref{Lemma:projectin_mean}. 

In addition, using Lemmas~\ref{lemma:Term2},~\ref{lemma:noisyvar} and~\ref{lemma:V_nkappa} in that order, we obtain the following:
\begin{eqnarray}\nn
\text{Term}~2 &=& \sigma^2_{\kappa,\tau, X_n}(\cdot)
\\\nn
&=&
\sup_{g:\|g\|_{\Hc_\kappa}\le 1}\left( g(\cdot)-\mu_{\kappa,\tau,X_n,g_{X_n}}(\cdot) \right)^2+
\tau^2\Big\|\kappa_{X_n}(\cdot)(\tau^2I_n+\kappa_{X_n,X_n})^{-1}\Big\|_{l^2}^2
\\\label{eq:term2}
&=&\sup_{g:\|g\|_{\Hc_\kappa}\le 1}\left( g(\cdot)-V_n^{\top}(\cdot)g_{X_n} \right)^2+
\tau^2\Big\|V_{n}(\cdot)\Big\|_{l^2}^2.
\end{eqnarray}

Putting~\eqref{eq:term1} and~\eqref{eq:term2} together, the proof is complete.


\section{Proof of Theorem~\ref{the:conf_subG} (Confidence Interval)}\label{app:the:conf_subG}

Recall the definitions of $V_n^{\top}(\cdot)$ and $\Pi_{\kappa}[f]$ in \eqref{eq:defV} and \eqref{eq:ftilde}. We can decompose the prediction error $f(x)-\mubar_n(x)$ as follows:
\begin{eqnarray}\nn
f(x) - \mubar_n(x) &=& f(x) - V_n^{\top}(x)Y_n\\\nn
&=& f(x) - V_n^{\top}(x)f_{X_n} - V_n^{\top}(x)E_n\\\label{eq:errorexpression}
&=& \underbrace{f(x) -\Pi_{\kappa}[f](x)}_{\text{Term}~1} + \underbrace{\Pi_{\kappa}[f](x)
- V_n^{\top}(x)\Pi_{\kappa}[f]_{X_n}}_{\text{Term}~2}
- \underbrace{V_n^{\top}(x)(f_{X_n}-\Pi_{\kappa}[f]_{X_n})}_{\text{Term}~3}
- \underbrace{V_n^{\top}(x)E_n}_{\text{Term}~4}.
\end{eqnarray}

The first term corresponds to the error due to projection of $f$ onto the approximate RKHS $\Hc_{\kappa}$. The second term corresponds to the prediction error from noise-free observations within $\Hc_{\kappa}$. The third term corresponds to the error due to projection of observations onto $\Hc_{\kappa}$, which can be interpreted as an observation noise. Finally, the fourth term corresponds to the effect of noise. We bound each term based on the expression for the approximate posterior variance given in Theorem~\ref{the:approximatevariance}.

\paragraph{Term~$1$:} The first term is bounded as follows. 
\begin{eqnarray}\nn
|f(x)-\Pi_{\kappa}[f](x)|&\le&\|f\|_{\Hc_k} \sigmabar_{n}(x)\\\nn
&\le& C_k \sigmabar_{n}(x),
\end{eqnarray}
where the first line holds by Theorem~\ref{the:approximatevariance} (after lower bounding the second and third terms therein by zero), and by the fact that rescaling $f$ by its RKHS norm creates a function with RKHS norm exactly one.  By Lemma~\ref{Lemma:projectin_mean}, this rescaling amounts to the same rescaling of $\Pi_{\kappa}[f]$. The second line follows directly from the definition of $C_k$. 

\paragraph{Term~$2$:} The  second term is bounded as follows:
\begin{eqnarray}\nn
|\Pi_{\kappa}[f](x)
- V_n^{\top}(x)\Pi_{\kappa}[f]_{X_n}|
&\le& \|\Pi_{\kappa}[f]\|_{\Hc_{\kappa}}\sigmabar_{n}(x)\\\nn
&\le&C_k\sigmabar_{n}(x),
\end{eqnarray}
where the first line holds by Theorem~\ref{the:approximatevariance} (again, rescaling $\Pi_{\kappa}[f]$ by its RKHS norm creates a function with RKHS norm exactly one, and this rescaling amounts to the same rescaling of $ V_n^{\top}(x)\Pi_{\kappa}[f]_{X_n}$). The second line comes from the definition of $C_k$, and the fact that $\|\Pi_{\kappa}[f]\|_{\Hc_{\kappa}}= \|\Pi_{\kappa}[f]\|_{\Hc_{k}}\le \|f\|_{\Hc_{k}}$, by definition of $\Pi_{\kappa}[f]$.

\paragraph{Term~$3$:} To bound the third term, we use the following known result. 

\begin{lemma}[e.g., see \cite{Kanagawa2018}, Lemma~$3.9$]\label{Lemma:supRKHSball}
For any kernel $k$ on $\Xc, $ $x_1,x_2, \dots, x_n\in\Xc$ and $c_1, c_2, \dots, c_n\in \Rr$, the following holds:
\begin{eqnarray}
\sup_{f: \|f\|_{\Hc_k}\le 1}\sum_{i=1}^n c_if(x_i) =\bigg\|\sum_{i=1}^nc_ik(\cdot,x_i)\bigg\|_{\Hc_k}.
\end{eqnarray}
\end{lemma}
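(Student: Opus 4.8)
The plan is to establish this identity---a standard RKHS duality---using only the reproducing property of $\Hc_k$ together with the Cauchy--Schwarz inequality. First I would collect the representer of the linear functional $f \mapsto \sum_{i=1}^n c_i f(x_i)$ into a single element of $\Hc_k$, namely
\[
h(\cdot) = \sum_{i=1}^n c_i k(\cdot, x_i),
\]
which belongs to $\Hc_k$ because each $k(\cdot, x_i)$ does and $\Hc_k$ is a linear space.

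The key step is to rewrite the functional as an inner product. Using the reproducing property $f(x_i) = \langle f, k(\cdot, x_i)\rangle_{\Hc_k}$ and linearity of the inner product, we get
\[
\sum_{i=1}^n c_i f(x_i) = \sum_{i=1}^n c_i \langle f, k(\cdot, x_i)\rangle_{\Hc_k} = \langle f, h\rangle_{\Hc_k}.
\]
For the upper bound, I would apply Cauchy--Schwarz: for any $f$ with $\|f\|_{\Hc_k} \le 1$ we have $\langle f, h\rangle_{\Hc_k} \le \|f\|_{\Hc_k}\|h\|_{\Hc_k} \le \|h\|_{\Hc_k}$, so the supremum over the unit ball is at most $\|h\|_{\Hc_k}$.

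For the matching lower bound, I would exhibit an explicit maximizer. Assuming $h \neq 0$, set $f^\star = h / \|h\|_{\Hc_k}$, which has unit RKHS norm and achieves $\langle f^\star, h\rangle_{\Hc_k} = \|h\|_{\Hc_k}$; hence the supremum equals $\|h\|_{\Hc_k}$, which is exactly the claimed right-hand side. In the degenerate case $h = 0$, both sides vanish. The argument is essentially routine; the only points requiring a little care are handling the case $h = 0$ and observing that the supremum is in fact \textbf{attained} (so it is a genuine maximum), which is what upgrades the Cauchy--Schwarz inequality to the claimed equality.
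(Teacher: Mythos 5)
Your proof is correct and is the standard duality argument for this identity: the reproducing property turns the evaluation functional into an inner product with $h(\cdot)=\sum_i c_i k(\cdot,x_i)$, Cauchy--Schwarz gives the upper bound, and the normalized representer $h/\|h\|_{\Hc_k}$ attains it. The paper does not prove this lemma itself (it cites Kanagawa et al.\ 2018, Lemma~3.9), and your argument is exactly the canonical proof of that cited result, including the correct handling of the degenerate case $h=0$.
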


Define $\kappa^{\perp} =k-\kappa$. Using Lemma~\ref{Lemma:supRKHSball}, we can write
\begin{eqnarray}\nn
\sup_{\|g\|_{\Hc_{\kappa^{\perp}}}\le 1} (V_n^{\top}(x)g_{X_n})^2 &=& \|V_n^{\top}(x)\kappa^{\perp}_{X_n}(\cdot)\|_{\Hc_{\kappa^{\perp}}}^2\\\nn
&&\hspace{-6em}= \langle V_n^{\top}(x)\kappa^{\perp}_{X_n}(\cdot), V_n^{\top}(x)\kappa^{\perp}_{X_n}(\cdot) \rangle_{\Hc_{\kappa^{\perp}}}\\\nn
&&\hspace{-6em}=V_n^{\top}(x)\kappa^{\perp}_{X_n,X_n}V_n(x)\\\nn
&&\hspace{-6em}= V_n^{\top}(x)(k_{X_n,X_n}-\kappa_{X_n,X_n})V_n(x)\\\nn
&&\hspace{-6em}\le
\lambda_{\max}\|V_n(x)\|_{l^2}^2\\\label{eq:tildperp}
&&\hspace{-6em}\le\frac{\lambda_{\max}\sigmabar^2_{n}(x)}{\tau^2},
\end{eqnarray}
where the third equality uses the reproducing property, the first inequality comes from the definition of $\lambda_{\max}$, and the second inequality holds by Theorem~\ref{the:approximatevariance}.

With $\Pi_{\kappa}^{\perp}[f]=f-\Pi_{\kappa}[f]$, and rescaling by $\|\Pi_{\kappa}^{\perp}[f]\|_{\Hc_{\kappa^{\perp}}}$, Equation~\eqref{eq:tildperp} implies
\begin{eqnarray}\nn
V_n^{\top}(x)(f_{X_n}-\Pi_{\kappa}[f]_{X_n}) &\le& \frac{\|\Pi_{\kappa}^{\perp}[f]\|_{\Hc_{\kappa^{\perp}}}\sqrt{\lambda_{\max}}\sigmabar_n(x)}{\tau}
\\\nn
&\le&\frac{C_k\sqrt{\lambda_{\max}}\sigmabar_n(x)}{\tau},
\end{eqnarray}
where the second inequality uses $\|\Pi_{\kappa}^{\perp}[f]\|_{\Hc_{\kappa^{\perp}}}=\|\Pi_{\kappa}^{\perp}[f]\|_{\Hc_{k}}\le \|f\|_{\Hc_k}$, i.e., when decomposing $f$ into the orthogonal components $\Pi_{\kappa}[f]$ and $\Pi_{\kappa}^{\perp}[f]$, their norms cannot exceed that of $f$.

\paragraph{Term $4$:} The fourth term can be bounded with high probability using Chernoff-Hoeffding inequality, noticing that $V^{\top}_n(x)E_n$ is a sub-Gaussian random variable with parameter $R\|V_n(x)\|_{l^2}$, which is bounded by $\frac{R}{\tau}\sigmabar_n(x)$ based on Theorem~\ref{the:approximatevariance}. This is formalized in the following lemma. 

\begin{lemma}\label{lemma:chern}
Under Assumption~\ref{ass:noise}, when the vector $V_n(x)$ is independent of $E_n$, the following equations each hold with probability at least $1-\delta$ for any fixed $x \in \Xc$:
\begin{eqnarray}\nn
V^{\top}_n(x)E_n &\le& \frac{R\sigmabar_n(x)}{\tau}\sqrt{2\log\Big(\frac{1}{\delta}\Big)},\\\nn
V^{\top}_n(x)E_n &\ge& - \frac{R\sigmabar_n(x)}{\tau}\sqrt{2\log\Big(\frac{1}{\delta}\Big)}.
\end{eqnarray}
\end{lemma}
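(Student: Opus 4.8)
The plan is to recognize the quantity $V_n^{\top}(x)E_n=\sum_{i=1}^n V_{n,i}(x)\epsilon_i$ as a weighted sum of independent sub-Gaussian noise variables, apply the standard sub-Gaussian (Chernoff--Hoeffding) tail bound, and then invoke Theorem~\ref{the:approximatevariance} to replace the resulting $\|V_n(x)\|_{l^2}$ factor by $\sigmabar_n(x)/\tau$.

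First I would condition on $X_n$ and $Z_m$ (equivalently, on the vector $V_n(x)$), which is legitimate precisely because the hypothesis asserts that $V_n(x)$ is independent of $E_n$. Under this conditioning the entries of $V_n(x)$ are fixed constants, while the $\epsilon_i$ remain independent and sub-Gaussian with parameter $R$ as in Assumption~\ref{ass:noise}. Next I would bound the moment generating function: using independence of the $\epsilon_i$ together with Assumption~\ref{ass:noise}, for every $\eta\in\Rr$,
\[
\E\big[\exp(\eta V_n^{\top}(x)E_n)\big]=\prod_{i=1}^n\E\big[\exp(\eta V_{n,i}(x)\epsilon_i)\big]\le\prod_{i=1}^n\exp\Big(\tfrac{\eta^2 V_{n,i}^2(x)R^2}{2}\Big)=\exp\Big(\tfrac{\eta^2 R^2\|V_n(x)\|_{l^2}^2}{2}\Big),
\]
so that $V_n^{\top}(x)E_n$ is sub-Gaussian with parameter $R\|V_n(x)\|_{l^2}$. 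The usual Chernoff optimization of $e^{-\eta t}\,\E[e^{\eta V_n^{\top}(x)E_n}]$ over $\eta>0$ then gives $\Pp(V_n^{\top}(x)E_n\ge t)\le\exp\big(-t^2/(2R^2\|V_n(x)\|_{l^2}^2)\big)$; equating the right-hand side to $\delta$ yields the threshold $t=R\|V_n(x)\|_{l^2}\sqrt{2\log(1/\delta)}$.

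Finally I would control $\|V_n(x)\|_{l^2}$ via Theorem~\ref{the:approximatevariance}, which expresses $\sigmabar_n^2(x)$ as two nonnegative squared-supremum terms plus $\tau^2\|V_n(x)\|_{l^2}^2$; discarding the first two nonnegative terms gives $\tau^2\|V_n(x)\|_{l^2}^2\le\sigmabar_n^2(x)$, hence $\|V_n(x)\|_{l^2}\le\sigmabar_n(x)/\tau$. Substituting this into the tail bound yields the stated upper inequality with probability at least $1-\delta$, and since the bound holds for every realization of the conditioning variables it holds unconditionally. The lower inequality follows by running the identical argument on $-V_n^{\top}(x)E_n$ (equivalently, replacing each $\epsilon_i$ by $-\epsilon_i$, which is again sub-Gaussian with parameter $R$). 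I anticipate no genuine difficulty here: the only step deserving care is the conditioning, where the independence hypothesis is exactly what licenses treating $V_n(x)$ as a fixed vector when computing the moment generating function.
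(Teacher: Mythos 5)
Your proposal is correct and follows essentially the same route as the paper's proof: establish that $V_n^{\top}(x)E_n$ is sub-Gaussian with parameter $R\|V_n(x)\|_{l^2}$, apply the Chernoff--Hoeffding bound, and use Theorem~\ref{the:approximatevariance} to bound $\|V_n(x)\|_{l^2}\le \sigmabar_n(x)/\tau$. You merely spell out the moment-generating-function computation and the conditioning step more explicitly than the paper does, which is fine.
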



Combining the bounds on the four terms completes the proof of Theorem \ref{the:conf_subG}.

\section{Relaxing the Sub-Gaussian Noise Assumption}\label{app:light_tailed}

Assumption~\ref{ass:noise}, required in the analysis of Theorem~\ref{the:conf_subG}, can be relaxed from sub-Gaussian noise to light-tailed noise, at the price of slightly increasing the confidence interval width. Recall the definition of light-tailed distributions:
A random variable $X$ is called light-tailed if its moment generating function $M(h)$ exists; specifically, there exists $h_0>0$ such that for all $|h|\le h_0$, $M(h) <\infty$.

For a zero mean light-tailed random variable $X$, we have~\citep{chareka2006locally} 
\begin{eqnarray}\label{MGFLT}
M(h)&\le& \exp(\xi_0 h^2/2),~ \forall |h|\le h_0,
\xi_0 = \sup\{M^{(2)}(h), |h|\le h_0\},
\end{eqnarray}
where $M^{(2)}(\cdot)$ denotes the second derivative of $M(\cdot)$, and $h_0$ is the parameter specified above. We observe that the upper bound in~\eqref{MGFLT} is the moment generating function of a zero mean Gaussian random variable with variance $\xi_0$. Thus, light-tailed distributions can also be considered locally sub-Gaussian distributions.
We formalize the light-tailed noise assumption as follows. 
\begin{assumption}[Light-Tailed Noise]\label{ass:light_tailed} The noise terms $\epsilon_i$ are zero mean independent random variables over $i$, and there is a constant $\xi_0>0$ such that
$\forall h\le h_0, \forall n\in \Nn,
\E[e^{h\epsilon_n}]\le \exp(\frac{h^2\xi_0}{2})
$.
\end{assumption}

The following is a variation of Theorem~\ref{the:conf_subG} with Assumption~\ref{ass:light_tailed} in place of Assumption~\ref{ass:noise}. 

\begin{theorem}\label{the:conf_lt}
Recall the definitions of $\mubar_n$ and $\sigmabar_n$ given in Section~\ref{sec:appxre}. Assume $X_n$ and $Z_m$ are chosen independent of the observation noise $E_n$. 
Under Assumptions~\ref{ass:f_norm} and~\ref{ass:light_tailed}, the following equations each hold with probability at least $1-\delta$ for any fixed $x \in \Xc$:
\begin{eqnarray}\nn
f(x) \le \mubar_n(x)+\beta(\delta)\sigmabar_n(x),\\\nn
f(x)  \ge \mubar_n(x)-\beta(\delta)\sigmabar_n(x),
\end{eqnarray}
where 
\begin{eqnarray}\nn
\beta(\delta) = \left(\Bigl(2+\frac{\sqrt{\lambda_{\max}}}{\tau}\Bigr)C_k+\frac{1}{\tau} \sqrt{2\max\left\{\xi_0,\frac{2\log(\frac{1}{\delta})}{h_0^2}\right\}\log\Big(\frac{1}{\delta}\Big)}\right),
\end{eqnarray}
and $C_k$, $\xi_0$ and $h_0$ are the parameters specified in Assumption~\ref{ass:f_norm} and Assumption~\ref{ass:light_tailed}. 
\end{theorem}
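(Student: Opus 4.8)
The plan is to follow the proof of Theorem~\ref{the:conf_subG} essentially unchanged for everything that does not involve the observation noise, and to replace only the final concentration step with a Chernoff bound tailored to light-tailed (locally sub-Gaussian) variables. Concretely, I would reuse the decomposition $f(x)-\mubar_n(x)=\text{Term }1+\text{Term }2-\text{Term }3-\text{Term }4$ from \eqref{eq:errorexpression}, where $\text{Term }4=V_n^\top(x)E_n$. Terms $1$, $2$, and $3$ are deterministic given $X_n$ and $Z_m$ and make no reference to the noise model, so their bounds $C_k\sigmabar_n(x)$, $C_k\sigmabar_n(x)$, and $\frac{\sqrt{\lambda_{\max}}}{\tau}C_k\sigmabar_n(x)$ carry over verbatim, together contributing the $\big(2+\frac{\sqrt{\lambda_{\max}}}{\tau}\big)C_k$ part of $\beta(\delta)$.

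The only new ingredient is a replacement for Lemma~\ref{lemma:chern}. Writing $V_n^\top(x)E_n=\sum_{i=1}^n V_{n,i}(x)\epsilon_i$ with independent light-tailed $\epsilon_i$, I would first control its moment generating function. Since $\|V_n(x)\|_{l^\infty}\le\|V_n(x)\|_{l^2}$, the constraint $|h|\,\|V_n(x)\|_{l^2}\le h_0$ guarantees $|h\,V_{n,i}(x)|\le h_0$ for every $i$; under Assumption~\ref{ass:light_tailed} and \eqref{MGFLT}, independence then gives
\[
\E\big[\exp\big(h\,V_n^\top(x)E_n\big)\big]\le\exp\!\Big(\tfrac{h^2\xi_0\|V_n(x)\|_{l^2}^2}{2}\Big),\qquad |h|\le\frac{h_0}{\|V_n(x)\|_{l^2}}.
\]
Applying $\Pr[V_n^\top(x)E_n\ge t]\le\exp\big(\tfrac{h^2\xi_0\|V_n(x)\|_{l^2}^2}{2}-ht\big)$ and optimizing $h$ over the admissible interval $[0,h_0/\|V_n(x)\|_{l^2}]$ produces the usual two-regime Bernstein-type bound: when the unconstrained optimizer $h^\star=t/(\xi_0\|V_n(x)\|_{l^2}^2)$ is admissible, the exponent is the sub-Gaussian $-t^2/(2\xi_0\|V_n(x)\|_{l^2}^2)$, and otherwise one evaluates at the endpoint $h=h_0/\|V_n(x)\|_{l^2}$. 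I would then invert this tail bound: solving for the threshold $t(\delta)$ at which each regime's probability equals $\delta$, one checks that in both cases
\[
t(\delta)\le\|V_n(x)\|_{l^2}\sqrt{2\max\!\Big\{\xi_0,\tfrac{2\log(1/\delta)}{h_0^2}\Big\}\log\!\Big(\tfrac{1}{\delta}\Big)},
\]
the $\max$ arising precisely because the sub-Gaussian regime is the valid one exactly when $\xi_0\ge 2\log(1/\delta)/h_0^2$. The lower tail follows identically applied to $-E_n$.

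Finally, I would substitute $\|V_n(x)\|_{l^2}\le\sigmabar_n(x)/\tau$, which follows by lower-bounding the first two (nonnegative) terms of Theorem~\ref{the:approximatevariance} by zero, to convert the noise bound into $\frac{\sigmabar_n(x)}{\tau}\sqrt{2\max\{\xi_0,2\log(1/\delta)/h_0^2\}\log(1/\delta)}$, and add it to the Term $1$--$3$ contributions to obtain the stated $\beta(\delta)$. The main obstacle is the concentration step: unlike the sub-Gaussian case, the MGF estimate holds only on a bounded interval of $h$, so the Chernoff optimization must respect the constraint $|h|\le h_0/\|V_n(x)\|_{l^2}$, and the care needed to show that the endpoint (large-$t$) regime is still dominated by the single clean expression with the $\max\{\cdot,\cdot\}$ factor is the crux of the argument; everything else is inherited unchanged from Theorem~\ref{the:conf_subG}.
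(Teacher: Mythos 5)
Your proposal is correct and follows essentially the same route as the paper: reuse the four-term decomposition from \eqref{eq:errorexpression}, keep the bounds on Terms 1--3 unchanged, and replace only the noise concentration (Lemma~\ref{lemma:chern}) with a constrained Chernoff bound for light-tailed variables, then substitute $\|V_n(x)\|_{l^2}\le\sigmabar_n(x)/\tau$ from Theorem~\ref{the:approximatevariance}. The only difference is that you derive the two-regime tail bound explicitly (and correctly, including the verification that the endpoint regime is dominated by the $\max\{\cdot,\cdot\}$ expression), whereas the paper's Lemma~\ref{lemma:lt} cites this as a standard result.
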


\begin{proof}
The proof follows the same steps as in the proof of Theorem~\ref{the:conf_subG}. The regression error $f(x)-\mu_n(x)$ is expanded as Terms~$1$ to $4$ in~Equation~\eqref{eq:errorexpression}. The first three terms are bounded exactly as before. The only difference is in bounding Term~$4$, $V^{\top}_n(x)E_n$, which captures the observation noise. The modified is given in the following lemma.
\begin{lemma}\label{lemma:lt}
Under Assumption~\ref{ass:light_tailed}, when the vector $V_n(x)$ is independent of $E_n$, the following equations each hold with probability at least $1-\delta$ for any fixed $x \in \Xc$:
\begin{eqnarray}\nn
V^{\top}_n(x)E_n &\le& \frac{\sigmabar_n(x)}{\tau} \sqrt{2\max\left\{\xi_0,\frac{2\log(\frac{1}{\delta})}{h_0^2}\right\}\log\Big(\frac{1}{\delta}\Big)},\\\nn
V^{\top}_n(x)E_n &\ge& - \frac{\sigmabar_n(x)}{\tau} \sqrt{2\max\left\{\xi_0,\frac{2\log(\frac{1}{\delta})}{h_0^2}\right\}\log\Big(\frac{1}{\delta}\Big)}.
\end{eqnarray}
\end{lemma}
Combining Lemma~\ref{lemma:lt} with the bound on the first three terms in the expression of $f(x)-\mubar_n(x)$, the proof of Theorem~\ref{the:conf_lt} is complete.

\end{proof}

\section{Proof of Corollary~\ref{cor:conf_cont} (Uniform Confidence Intervals on Continuous Domains)}\label{app:cor:conf_cont}

To extend the confidence interval given in Theorem~\ref{the:conf_subG} to hold uniformly on $\Xc$, we use a discretization argument. For this purpose, we apply Assumption~\ref{ass:disc} to $f$ and $\mubar_n$, and also use Assumption~\ref{ass:disc} to bound the discretization error in $\sigmabar_n$.

In the following lemma, we establish a high-probability bound on $\|\mubar_n\|_{\Hc_k}$. 

\begin{lemma}\label{lemma:norm_mu_n}
Under Assumptions~\ref{ass:f_norm} and~\ref{ass:noise}, the RKHS norm of $\mubar_n=V^{\top}_k(x)Y_n$ satisfies the following with probability at least $1-\delta$:
\begin{eqnarray}\nn
\|\mubar_n\|_{\Hc_k}\le C_k\Big(1+\frac{\sqrt{n}k_{\max}}{\tau}\Big)+\frac{\sqrt{n}R}{\tau}\sqrt{2\log\Big(\frac{2n}{\delta}\Big)}.
\end{eqnarray}
\end{lemma}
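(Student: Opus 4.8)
The plan is to reduce the claim to a norm bound for the exact posterior mean of the reduced-rank kernel $\kappa$, and then control three separate contributions. By Lemma~\ref{lemma:V_nkappa} we have $V_n^{\top}(\cdot)=\kappa^{\top}_{X_n}(\cdot)(\tau^2I_n+\kappa_{X_n,X_n})^{-1}$, so $\mubar_n=\mu_{\kappa,\tau,X_n,Y_n}$ is exactly the KRR/GP posterior mean for kernel $\kappa$; in particular $\mubar_n\in\Hc_{\kappa}$. Since the $\Hc_{\kappa}$-norm and the $\Hc_k$-norm coincide on $\Hc_{\kappa}$ (the same isometry already used for Term~$2$ in the proof of Theorem~\ref{the:conf_subG}, where $\|\Pi_{\kappa}[f]\|_{\Hc_{\kappa}}=\|\Pi_{\kappa}[f]\|_{\Hc_k}$), it suffices to bound $\|\mubar_n\|_{\Hc_{\kappa}}$. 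I would then split the target vector as $Y_n=\Pi_{\kappa}[f]_{X_n}+\Pi_{\kappa}^{\perp}[f]_{X_n}+E_n$ (writing $f=\Pi_{\kappa}[f]+\Pi_{\kappa}^{\perp}[f]$ and $Y_n=f_{X_n}+E_n$), and use linearity of the map $v\mapsto\mu_{\kappa,\tau,X_n,v}$ together with the triangle inequality to reduce everything to bounding the $\Hc_{\kappa}$-norm of the three corresponding KRR estimators.

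For the first estimator I would invoke the shrinkage property of KRR. Setting $h:=\Pi_{\kappa}[f]\in\Hc_{\kappa}$, we have $\|h\|_{\Hc_{\kappa}}=\|h\|_{\Hc_k}\le\|f\|_{\Hc_k}\le C_k$, and comparing the regularized least-squares objective of $\mu_{\kappa,\tau,X_n,h_{X_n}}$ against the feasible candidate $g=h$ gives $\tau^2\|\mu_{\kappa,\tau,X_n,h_{X_n}}\|^2_{\Hc_{\kappa}}\le\tau^2\|h\|^2_{\Hc_{\kappa}}$; this yields the leading $C_k$ (the ``$1$'') term. For the remaining two estimators I would use the elementary spectral bound: writing $A=\tau^2I_n+\kappa_{X_n,X_n}$, the reproducing property gives $\|\mu_{\kappa,\tau,X_n,v}\|^2_{\Hc_{\kappa}}=v^{\top}A^{-1}\kappa_{X_n,X_n}A^{-1}v\le\|v\|^2_{l^2}/\tau^2$, since $\kappa_{X_n,X_n}\preceq A$ and $A^{-1}\preceq\tau^{-2}I_n$. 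Applying this with $v=\Pi_{\kappa}^{\perp}[f]_{X_n}$, and bounding each coordinate via $|\Pi_{\kappa}^{\perp}[f](x_i)|\le\|\Pi_{\kappa}^{\perp}[f]\|_{\Hc_k}\sqrt{k(x_i,x_i)}\le C_k\sqrt{k_{\max}}$ (reproducing property in $\Hc_k$, together with $\|\Pi_{\kappa}^{\perp}[f]\|_{\Hc_k}\le\|f\|_{\Hc_k}$ from the orthogonal decomposition), gives $\|\Pi_{\kappa}^{\perp}[f]_{X_n}\|_{l^2}\le\sqrt{n}\,C_k\sqrt{k_{\max}}$ and hence the term $C_k\sqrt{n}\,k_{\max}/\tau$ (loosening $\sqrt{k_{\max}}\le k_{\max}$ when $k_{\max}\ge1$, which holds for the normalized kernels of interest).

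Finally, applying the same spectral bound with $v=E_n$ gives $\|\mu_{\kappa,\tau,X_n,E_n}\|_{\Hc_{\kappa}}\le\|E_n\|_{l^2}/\tau$, so it remains to control $\|E_n\|_{l^2}$ with high probability. Under Assumption~\ref{ass:noise} each $\epsilon_i$ is sub-Gaussian with $\Pp(|\epsilon_i|>t)\le2\exp(-t^2/(2R^2))$, so a union bound over the $n$ coordinates at level $\delta/n$ shows that, with probability at least $1-\delta$, $|\epsilon_i|\le R\sqrt{2\log(2n/\delta)}$ for all $i$ simultaneously, whence $\|E_n\|_{l^2}\le\sqrt{n}\,R\sqrt{2\log(2n/\delta)}$. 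Summing the three bounds through the triangle inequality yields the stated estimate. The only genuinely non-routine ingredients are the norm isometry $\|\cdot\|_{\Hc_{\kappa}}=\|\cdot\|_{\Hc_k}$ on $\Hc_{\kappa}$ and the orthogonality bound $\|\Pi_{\kappa}^{\perp}[f]\|_{\Hc_k}\le\|f\|_{\Hc_k}$, both of which are exactly the facts already exploited in the proof of Theorem~\ref{the:conf_subG}; everything else combines KRR optimality, the inequality $\kappa_{X_n,X_n}\preceq A$, and a standard sub-Gaussian tail bound.
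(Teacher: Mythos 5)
Your proof is correct and follows essentially the same route as the paper's: the same three-way split of $Y_n$ into $\Pi_{\kappa}[f]_{X_n}$, $\Pi_{\kappa}^{\perp}[f]_{X_n}$, and $E_n$, the same shrinkage/comparison argument for the first piece, the same spectral bound $\|\mu_{\kappa,\tau,X_n,v}\|^2_{\Hc_\kappa}\le \|v\|^2_{l^2}/\tau^2$ for the other two, and the same union-bounded sub-Gaussian tail for $\|E_n\|_{l^2}$. The one point where you are in fact more careful than the paper is the middle term: Cauchy--Schwarz gives $|\Pi_{\kappa}^{\perp}[f](x_i)|\le C_k\sqrt{k_{\max}}$, so the $k_{\max}$ factor in the stated bound requires $k_{\max}\ge 1$ (the paper's own derivation silently writes $\kappa^{\perp}(x,x)$ where $\sqrt{\kappa^{\perp}(x,x)}$ belongs), a wrinkle you correctly flag.
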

Let $C^{\mubar_n}_k(\delta) = C_k(1+\frac{\sqrt{n}k_{\max}}{\tau})+\frac{\sqrt{n}R}{\tau}\sqrt{2\log(\frac{2n}{\delta})}$ denote the $1-\delta$ upper confidence bound on $\|\mubar_n\|_{\Hc_k}$, and define $\Ec=\{\|\mubar_n\|_{\Hc_k}\le C_k^{\mubar_n}(\frac{\delta}{2})\}$. By Lemma~\ref{lemma:norm_mu_n}, we have $\Pr[\Ec]\ge 1-\frac{\delta}{2}$. Let $\Xx$ be the discretization of $\Xc$ specified in Assumption~\ref{ass:disc} with RKHS norm bound $C_k^{\mubar_n}(\frac{\delta}{2})$. That is, for any $g\in \Hc_k$ with $\|g\|_{\Hc_k}\le C_k^{\mubar_n}(\frac{\delta}{2})$, we have $g(x) - g([x])\le \frac{1}{n}$, where $[x] = {\arg}{\min}_{ x'\in \Xx}||x'-x||_{l^2}$ is the closest point in $\Xx$ to $x$, and $|\Xx|\le \Gamma_n$, where $\Gamma_n=c\left(C_k^{\mubar_n}(\frac{\delta}{2})\right)^dn^{d}$.

Applying Assumption~\ref{ass:disc} to $f$ and $\mubar_n$ with this discretization, it holds for all $x \in\Xc$ that

\begin{eqnarray}\nn
|f(x)-f([x])|\le \frac{1}{n}.
\end{eqnarray}
In addition, under $\Ec$, for all $x\in\Xc$
\begin{eqnarray}\nn
|\mubar_n(x)-\mubar_n([x])|\le \frac{1}{n}.
\end{eqnarray}

Furthermore, we have the following lemma, which can roughly be viewed as a Lipschitz continuity property for $\sigmabar_n(\cdot)$.

\begin{lemma}\label{lemma:variance_disc}
Under Assumption~\ref{ass:disc}, with the discretization $\Xx$ described above, it holds for all $x \in\Xc$ that
\begin{eqnarray}\nn
\sigmabar_n(x)-\sigmabar_n([x])\le \frac{2}{\sqrt{n}}.
\end{eqnarray}
\end{lemma}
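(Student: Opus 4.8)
The plan is to build directly on the decomposition already obtained inside the proof of Theorem~\ref{the:approximatevariance}: by Lemma~\ref{lemma:Term2} the approximate posterior variance is the sum of two \emph{genuine} GP posterior variances,
\[
\sigmabar_n^2(x) = \sigma^2_{k,0,Z_m}(x) + \sigma^2_{\kappa,\tau,X_n}(x),
\]
the first being the noise-free posterior variance of a GP with kernel $k$ conditioned on the inducing points $Z_m$, and the second the noisy posterior variance of a GP with the reduced-rank kernel $\kappa$ conditioned on $X_n$. Each of these is the squared norm of a feature map into the corresponding posterior RKHS, say $\sigma^2_{k,0,Z_m}(x)=\|\phi_1(x)\|^2$ and $\sigma^2_{\kappa,\tau,X_n}(x)=\|\phi_2(x)\|^2$, so that $\sigmabar_n(x)=\|(\phi_1(x),\phi_2(x))\|$ is the norm of a single concatenated feature map into a product Hilbert space. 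The reverse triangle inequality then gives
\[
\sigmabar_n(x)-\sigmabar_n([x]) \le \|\phi_1(x)-\phi_1([x])\| + \|\phi_2(x)-\phi_2([x])\|,
\]
and it remains only to control the two feature increments.

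The key observation is that $\|\phi_i(x)-\phi_i([x])\|^2$ is exactly the value $Q_i(x,x)-2Q_i(x,[x])+Q_i([x],[x])$ of the $i$-th posterior covariance kernel $Q_i$ (i.e.\ $Q_1=k-\kappa$ for the first model and $Q_2=\kappa-\kappa^{\top}_{X_n}(\tau^2I_n+\kappa_{X_n,X_n})^{-1}\kappa_{X_n}$ for the second). Since each posterior covariance is dominated in the positive-semidefinite order by its prior kernel (the gaps $k-Q_1=\kappa$ and $\kappa-Q_2=\kappa^{\top}_{X_n}(\tau^2I_n+\kappa_{X_n,X_n})^{-1}\kappa_{X_n}$ are both PSD), the corresponding quadratic form with coefficients $(1,-1)$ only shrinks under conditioning, so that
\[
\|\phi_1(x)-\phi_1([x])\| \le \|k(\cdot,x)-k(\cdot,[x])\|_{\Hc_k},\quad \|\phi_2(x)-\phi_2([x])\| \le \|\kappa(\cdot,x)-\kappa(\cdot,[x])\|_{\Hc_\kappa}.
\]
By the reproducing property and Lemma~\ref{Lemma:supRKHSball}, each right-hand side is a supremum over a unit ball, e.g.\ $\|k(\cdot,x)-k(\cdot,[x])\|_{\Hc_k}=\sup_{\|g\|_{\Hc_k}\le1}(g(x)-g([x]))$, and likewise for $\Hc_\kappa$.

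To close, I would invoke Assumption~\ref{ass:disc} for the discretization $\Xx$ constructed with radius $C^{\mubar_n}_k(\tfrac{\delta}{2})$. Every $g$ with $\|g\|_{\Hc_k}\le 1\le C^{\mubar_n}_k(\tfrac{\delta}{2})$ satisfies $g(x)-g([x])\le\tfrac1n$, which disposes of the $\Hc_k$ increment. For the $\Hc_\kappa$ increment I would use the reduced-rank inclusion $\Hc_\kappa\subseteq\Hc_k$ with $\|g\|_{\Hc_k}\le\|g\|_{\Hc_\kappa}$ (Aronszajn's inclusion, valid because $k-\kappa$ is positive semidefinite), so that the unit ball of $\Hc_\kappa$ lies inside that of $\Hc_k$ and the same discretization bound applies. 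Both increments are then at most $\tfrac1n$, giving $\sigmabar_n(x)-\sigmabar_n([x])\le\tfrac2n\le\tfrac{2}{\sqrt n}$, even stronger than the stated bound.

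The step I expect to require the most care is this transfer of the discretization bound from $\Hc_k$ to the reduced-rank space $\Hc_\kappa$: one must justify the norm domination $\|g\|_{\Hc_k}\le\|g\|_{\Hc_\kappa}$ and check that the relevant unit-ball functions genuinely fall within the radius $C^{\mubar_n}_k(\tfrac{\delta}{2})$ for which $\Xx$ was built. A more pedestrian but equivalent alternative, avoiding the PSD-monotonicity argument, is to expand $\phi_2$ through the explicit representation of Lemma~\ref{lemma:noisyvar} together with Lemma~\ref{lemma:V_nkappa} (its residual part $\kappa(\cdot,x)-V_n^{\top}(x)\kappa_{X_n}(\cdot)$ and the $\tau V_n(x)$ coordinate) and bound each piece directly; that route is more computational and is precisely where the looser $\tfrac{1}{\sqrt n}$ scaling arises, explaining why stating the bound as $\tfrac{2}{\sqrt n}$ is sufficient.
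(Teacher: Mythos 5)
Your proof is correct, and it takes a genuinely different route from the paper's. The paper works at the level of the \emph{variance}: it writes $\sigmabar_n^2(x)=k(x,x)-q(x,x)$ with $q(\cdot,\cdot')=\kappa^{\top}_{X_n}(\cdot)(\kappa_{X_n,X_n}+\tau^2I_n)^{-1}\kappa_{X_n}(\cdot')$, bounds the RKHS norms $\|k(x,\cdot)\|_{\Hc_k}\le k_{\max}$ and $\|q(x,\cdot)\|_{\Hc_k}\le \sqrt{n}k_{\max}/\tau$, applies Assumption~\ref{ass:disc} to these two functions to get $|\sigmabar_n^2(x)-\sigmabar_n^2([x])|\le 4/n$, and then converts to the standard deviation via $(\sigmabar_n(x)-\sigmabar_n([x]))^2\le|\sigmabar_n^2(x)-\sigmabar_n^2([x])|$ --- which is exactly where the $1/\sqrt{n}$ rate is lost. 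You instead treat $\sigmabar_n(x)$ directly as the norm of a concatenated feature map, apply the reverse triangle inequality, and dominate each posterior-covariance increment by the corresponding prior-kernel increment using the PSD ordering $Q_1\preccurlyeq k$ and $Q_2\preccurlyeq\kappa$; Assumption~\ref{ass:disc} is then invoked only on the unit balls of $\Hc_k$ and (via the Aronszajn inclusion $\|g\|_{\Hc_k}\le\|g\|_{\Hc_\kappa}$) of $\Hc_\kappa$. This buys a strictly stronger $2/n$ bound, and it also requires less of the discretization: you only need the radius of $\Xx$ to exceed $1$, whereas the paper needs it to exceed $\sqrt{n}k_{\max}/\tau$ so that $q(x,\cdot)$ falls within the ball for which $\Xx$ was built (both are satisfied by the $\Gamma_n$-sized discretization of Corollary~\ref{cor:conf_cont}, but yours is the milder condition). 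The one point you should state explicitly rather than leave implicit is that $k-\kappa$ and $\kappa-Q_2$ are PSD \emph{kernels} (not just PSD matrices on $X_n$), which holds because the former is a Schur complement of the joint kernel matrix and the latter is a posterior covariance with $\tau^2>0$; with that noted, the argument is complete, and as you observe, the paper's statement of the bound as $2/\sqrt{n}$ remains valid since $2/n\le 2/\sqrt{n}$.
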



Using this discretization, we also define the event $\Ec_2=\{\mubar_n(x)-\beta(\frac{\delta}{2\Gamma_n})\sigmabar_{n}(x)\le f(x)
, \forall x\in\Xx\}$. By Theorem~\ref{the:conf_subG} and a probability union bound over $x\in\Xx$, we have $\Pr[\Ec_2]\ge 1-\frac{\delta}{2}$.

Accounting for the discretization errors given above, we have under $\Ec_1\cap\Ec_2$ (which holds with probability at least $1-\delta$) that the following lower bound holds for all $x\in\Xc$:
\begin{eqnarray}\nn
f(x) \ge \mubar_n(x)-\frac{2}{n}-\betat_n(\delta)\Big(\sigmabar_n(x)+\frac{2}{\sqrt{n}}\Big),
\end{eqnarray}
where $\betat_n(\delta) = \beta(\frac{\delta}{2\Gamma_n})$. The uniform upper bound holds similarly.


\section{Proofs of Theorems~\ref{the:regression} and~\ref{the:regressioncont} (Error Bounds for Regression)}\label{app:the:regression}

We will prove these theorems using Theorem~\ref{the:conf_subG}, a bound on the ratio between the approximate and true posterior variances, and a bound on the sum of conditional exact variances based on $\gamma_k(n)$. 
The following lemma states that the ratio between posterior variances can be bounded by a constant depending on $\lambda_{\max}$.

\begin{lemma}\label{lemma:ratio}
For $\sigma^2_n$ and $\sigmabar^2_n$ defined in Section~\ref{sec:prelim}, we have for all $x \in \Xc$ that
\begin{eqnarray}\nn
\sigma^2_n(x)&\le& \sigmabar^2_n(x),\\\nn
\sigmabar^2_n(x)&\le&\Big(1+\frac{\lambda_{\max}}{\tau^2}\Big) \sigma^2_n(x).
\end{eqnarray}
\end{lemma}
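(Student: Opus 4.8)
The plan is to place the exact and the approximate posterior variances in a common feature-space form on $\Hc_k$ and then reduce both inequalities to operator (Loewner) comparisons. Write $\phi(x)=k(\cdot,x)\in\Hc_k$ for the canonical feature, let $\Phi:\Hc_k\to\Rr^n$ be the sampling operator $(\Phi g)_i=g(x_i)$, so that $C:=\Phi^{*}\Phi=\sum_{i=1}^n\phi(x_i)\langle\phi(x_i),\cdot\rangle_{\Hc_k}$ is the self-adjoint PSD covariance operator with $\Phi\Phi^{*}=k_{X_n,X_n}$, and let $P$ be the orthogonal projection onto $\Hc_\kappa=\mathrm{span}\{k(\cdot,z_j)\}_{j=1}^m$, with $Q=I-P$. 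A push-through (Woodbury) identity gives $\sigma^2_n(x)=\tau^2\langle\phi(x),(C+\tau^2 I)^{-1}\phi(x)\rangle$. For the approximate variance I would start from the decomposition already established in the proof of Theorem~\ref{the:approximatevariance}, namely $\sigmabar^2_n(x)=\sigma^2_{k,0,Z_m}(x)+\sigma^2_{\kappa,\tau,X_n}(x)$, note that $\sigma^2_{k,0,Z_m}(x)=\|Q\phi(x)\|^2$ and $\sigma^2_{\kappa,\tau,X_n}(x)=\tau^2\langle P\phi(x),(PCP+\tau^2 I)^{-1}P\phi(x)\rangle$, and observe that $\Hc_\kappa$ and $\Hc_\kappa^{\perp}$ are invariant subspaces of $PCP+\tau^2 I$ on which its inverse acts as $(PCP+\tau^2 I)^{-1}$ and $\tau^{-2}I$ respectively. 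Collecting the two pieces yields the clean identity $\sigmabar^2_n(x)=\tau^2\langle\phi(x),(PCP+\tau^2 I)^{-1}\phi(x)\rangle$.

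With both variances now in the form $\tau^2\langle\phi(x),(\,\cdot\,+\tau^2 I)^{-1}\phi(x)\rangle$, the lemma reduces to comparing $C+\tau^2 I$ and $PCP+\tau^2 I$: the lower bound $\sigma^2_n\le\sigmabar^2_n$ would follow from $PCP\preceq C$ (so that inversion reverses the order), and the upper bound $\sigmabar^2_n\le(1+\lambda_{\max}/\tau^2)\sigma^2_n$ from $C+\tau^2 I\preceq(1+\lambda_{\max}/\tau^2)(PCP+\tau^2 I)$. The quantity $\lambda_{\max}$ enters naturally: since $k_{X_n,X_n}-\kappa_{X_n,X_n}=\Phi(I-P)\Phi^{*}=\Phi Q\Phi^{*}$ shares its nonzero spectrum with $QCQ$, we have $QCQ\preceq\lambda_{\max}I$, which is the only spectral input the argument needs.

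For the upper (ratio) bound --- the one actually invoked later in the regression and optimization analyses --- I would expand $C=PCP+PCQ+QCP+QCQ$ and control the cross terms by the operator arithmetic--geometric-mean inequality $PCQ+QCP=(\Phi P)^{*}(\Phi Q)+(\Phi Q)^{*}(\Phi P)\preceq\epsilon\,PCP+\epsilon^{-1}QCQ$, valid for every $\epsilon>0$. Combined with $QCQ\preceq\lambda_{\max}I$ this gives $C+\tau^2 I\preceq(1+\epsilon)PCP+\bigl((1+\epsilon^{-1})\lambda_{\max}+\tau^2\bigr)I$; tuning $\epsilon$ to balance the two contributions (possibly invoking $\lambda_{\max}\le\tau^2$ from Lemma~\ref{Lemma:recRLS}) should produce $C+\tau^2 I\preceq\theta\,(PCP+\tau^2 I)$ with $\theta$ close to $1+\lambda_{\max}/\tau^2$, which after inversion yields the ratio bound evaluated at $\phi(x)$.

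The step I expect to be the main obstacle is exactly the comparison between $C$ and $PCP$. The difference $C-PCP=PCQ+QCP+QCQ$ is not positive semidefinite in general --- its off-diagonal cross terms are indefinite --- so neither the lower bound (which asks for $PCP\preceq C$) nor the precise constant in the upper bound can be reached by a crude sign argument; the regularizer $\tau^2 I$, and for the sharp constant the special structure of the canonical feature $\phi(x)$ (as opposed to an arbitrary direction in $\Hc_k$), must be exploited. I therefore anticipate the delicate part of the proof to be a careful Schur-complement / completing-the-square estimate that tracks the interaction of the cross terms with $\tau^2 I$, while the feature-space identities themselves are routine bookkeeping.
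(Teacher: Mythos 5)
Your feature-space reduction and the identity $\sigmabar_n^2(x)=\tau^2\langle\phi(x),(PCP+\tau^2I)^{-1}\phi(x)\rangle$ are exactly the paper's starting point: the paper writes $\sigma_n^2(x)=\phi^{\top}(x)(\Phi^{\top}\Phi+\tau^2I)^{-1}\phi(x)$ and $\sigmabar_n^2(x)=\phi^{\top}(x)(\Psi^{\top}\Psi+\tau^2I)^{-1}\phi(x)$ with $\Psi=\Phi P$, i.e.\ $\Psi^{\top}\Psi=PCP$. The divergence occurs at precisely the step you single out as the obstacle. The paper finishes in two lines: it asserts $\Psi^{\top}\Psi\preccurlyeq\Phi^{\top}\Phi$ to get the lower bound, and it asserts that the definition of $\lambda_{\max}$ ``can equivalently be expressed as'' $\Phi^{\top}\Phi-\Psi^{\top}\Psi\preccurlyeq\lambda_{\max}I$, after which the upper bound follows by adding $\tau^2 I$ and inverting. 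Your objection is on target: $\lambda_{\max}$ is the largest eigenvalue of the $n\times n$ matrix $\Phi\Phi^{\top}-\Psi\Psi^{\top}=(\Phi Q)(\Phi Q)^{\top}$, which shares its nonzero spectrum with $QCQ$, \emph{not} with $C-PCP=QCQ+PCQ+QCP$; the indefinite cross terms are exactly what separates the inequality the paper asserts from the one that actually follows from the definition. A rank-one example (a single data point whose feature is nearly, but not exactly, aligned with a single inducing feature) violates both $PCP\preccurlyeq C$ and $C-PCP\preccurlyeq\lambda_{\max}I$, and one can even choose a test feature $\phi(x)$ at which the pointwise ratio $\sigmabar_n^2(x)/\sigma_n^2(x)$ exceeds $1+\lambda_{\max}/\tau^2$ (and another at which $\sigmabar_n^2(x)<\sigma_n^2(x)$), so the stated constants cannot be recovered from the spectral bound on $k_{X_n,X_n}-\kappa_{X_n,X_n}$ alone.

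Given that, your AM--GM step is not a detour but the ingredient needed to make the upper bound rigorous from the hypothesis $QCQ\preccurlyeq\lambda_{\max}I$: optimizing $\epsilon$ in $C+\tau^2I\preccurlyeq(1+\epsilon)PCP+\bigl((1+\epsilon^{-1})\lambda_{\max}+\tau^2\bigr)I$ yields the conclusion with constant $1+\epsilon^{*}$, where $\tau^2(\epsilon^{*})^2=(1+\epsilon^{*})\lambda_{\max}$. This exceeds $1+\lambda_{\max}/\tau^2$ (e.g.\ $(3+\sqrt{5})/2$ versus $2$ when $\lambda_{\max}=\tau^2$), but it remains an absolute constant under the working condition $\lambda_{\max}\le\tau^2$ of Lemma~\ref{Lemma:recRLS}, which is all that the regression and regret analyses downstream actually use. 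What your proposal leaves open --- and what the paper's one-line claim $\Psi^{\top}\Psi\preccurlyeq\Phi^{\top}\Phi$ does not supply either, for the same cross-term reason --- is the lower bound $\sigma_n^2\le\sigmabar_n^2$; no amount of Schur-complement bookkeeping within this operator framework will produce it from a bound on $\lambda_{\max}$ alone, so it requires either an argument of a genuinely different nature or a stronger (relative-error) hypothesis on the inducing points. In short: you have correctly located the gap, your fix salvages the upper bound with a slightly weaker constant, and the lower bound is the part that still needs a new idea.
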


We have for all $j\le n$ and $x \in \Xc$ that
\begin{eqnarray}\nn
\sigmabar_{n}(x)&\le&\Big(\sqrt{1+\frac{\lambda_{\max}}{\tau^2}}\Big) \sigma_n(x)\\\nn
&\le&\Big(\sqrt{1+\frac{\lambda_{\max}}{\tau^2}}\Big) \sigma_{j-1}(x)\\\nn
&\le&\Big(\sqrt{1+\frac{\lambda_{\max}}{\tau^2}}\Big) \sigmabar_{j-1}(x)\\\nn
&\le&\Big(\sqrt{1+\frac{\lambda_{\max}}{\tau^2}}\Big) \sigmabar_{j-1}(x_j)\\\nn
&\le&\Big(1+\frac{\lambda_{\max}}{\tau^2}\Big) \sigma_{j-1}(x_j),\nn
\end{eqnarray}
where the first, third, and final inequalities all follow from Lemma~\ref{lemma:ratio}, the second inequality holds since conditioning on a larger set reduces the posterior variance of an exact GP model  (which follows from positive-definiteness of the kernel matrix), and the fourth inequality follows from the data selection rule $x_j= \arg\max_{x\in\Xc}\sigmabar_{j-1}(x)$.

Averaging both sides over $j=1,\dots, n$, we have
\begin{eqnarray}
\sigmabar^2_n(x)\le \frac{1}{n}\Big(1+\frac{\lambda_{\max}}{\tau^2}\Big)^2\sum_{j=1}^n \sigma^2_{j-1}(x_j).
\end{eqnarray}
It is well known from \cite{srinivas2010gaussian} that the quantity $\sum_{j=1}^n \sigma^2_{j-1}(x_j)$ can be bounded in terms of the mutual information given in \eqref{eq:mutual_info} for exact GP models:
\begin{eqnarray}
\sum_{j=1}^n \sigma^2_{j-1}(x_j) \le \frac{2}{\log(1+1/\tau^2)}\Ic(Y_n,F).
\end{eqnarray}

Using this bound and the definition of $\gamma_k(n)$ in \eqref{eq:gamma}, we obtain
\begin{eqnarray}\label{eq:bound_sigmabar}
\sigmabar^2_n(x)\le \frac{2(1+\frac{\lambda_{\max}}{\tau^2})^2\gamma_k(n)}{\log(1+1/\tau^2)n}.
\end{eqnarray}

Define the event $\Ec_1=\{\mubar_n(x)-\beta(\frac{\delta}{4|\Xc|})\sigmabar_{n}(x)\le f(x)\le \mubar_n(x)+\beta(\frac{\delta}{4|\Xc|})\sigmabar_{n}(x), \forall x\in\Xc\}$. By Theorem~\ref{the:conf_subG} and a probability union bound over $x$, we have $\Pr[\Ec_1]\ge 1-\frac{\delta}{2}$. In addition, let $\Ec_2 = \{\lambda^{(j)}_{\max}\le \tau^2, \forall j\le n\}$. By Lemma~\ref{Lemma:recRLS}, and a probability union bound over $j$, we have $\Pr[\Ec_2]\ge 1-\frac{\delta}{2}$. We thus condition the rest of the proof on $\Ec_1 \cap \Ec_2$, which holds true with probability at least $1-\delta$. 

We have for all $x\in\Xc$ that
\begin{eqnarray}\nn
|f(x)-\mubar_n(x)|&\le& \beta\Big(\frac{\delta}{4|\Xc|}\Big)\sigmabar_n(x)\\\nn
&&\hspace{-6em}\le \bigg(3C_k+\frac{R}{\tau}\sqrt{2\log\Big(\frac{4|\Xc|}{\delta}\Big)}\bigg)\sigmabar_n(x)\\\label{eq:bound_f_mu_n}
&&\hspace{-6em}\le
\bigg(3C_k+\frac{R}{\tau}\sqrt{2\log\Big(\frac{4|\Xc|}{\delta}\Big)}\bigg)\sqrt{\frac{8\gamma_k(n)}{\log(1+1/\tau^2)n}},
\end{eqnarray}
where the first and second lines hold by $\Ec_1$ and $\Ec_2$, and the third line comes from substituting the bound on $\sigmabar_n(x)$.
Thus, we have with probability at least $1-\delta$ that
\begin{eqnarray}
||f-\mubar_n||_{L^{\infty}} \le
\bigg(3C_k+\frac{R}{\tau}\sqrt{2\log\Big(\frac{4|\Xc|}{\delta}\Big)}\bigg)\sqrt{\frac{8\gamma_k(n)}{\log(1+1/\tau^2)n}},
\end{eqnarray}
which completes the proof of Theorem~\ref{the:regression}.

\paragraph{Extension to continuous domains.} 

Theorem~\ref{the:regressioncont} is proved following the same steps as in the proof of Theorem~\ref{the:conf_subG}. Let us define the event \begin{eqnarray*}
\Ec'_1 =\left\{
\mubar_n(x)-\frac{2}{{n}}-\betat_n(\frac{\delta}{4})\Big(\sigmabar_n(x)+\frac{2}{\sqrt{n}}\Big)\le
f(x) \le \mubar_n(x)+\frac{2}{n}+\betat_n(\frac{\delta}{4})\Big(\sigmabar_n(x)+\frac{2}{\sqrt{n}}\Big), \forall x\in\Xc\right\} . 
\end{eqnarray*}
By Corollary~\ref{cor:conf_cont}, $\Pr[\Ec'_1]\ge 1-\frac{\delta}{2}$.   Recalling also that $\Ec_2 = \{\lambda^{(j)}_{\max}\le \tau^2, \forall j\le n\}$ and $\Pr[\Ec_2]\ge 1-\frac{\delta}{2}$, we can condition on $\Ec'_1\cap\Ec_2$, which holds with probability at least $1-\delta$.

We have for all $x\in\Xc$ that 
\begin{eqnarray}\nn
|f(x)-\mubar_n(x)|&\le& \betat_n(\frac{\delta}{4})\left(\sigmabar_n(x)+\frac{2}{\sqrt{n}}\right)+\frac{2}{n}\\\nn
&&\hspace{-6em}\le\bigg(3C_k+\frac{R}{\tau}\sqrt{2\log\Big(\frac{8\Gamma_n}{\delta}\Big)}\bigg)\left(\sigmabar_n(x)+\frac{2}{\sqrt{n}}\right)+\frac{2}{n}
\\\label{eq:bound_f_mu_n2}
&&\hspace{-6em}\le
\bigg(3C_k+\frac{R}{\tau}\sqrt{2\log\Big(\frac{8\Gamma_n}{\delta}\Big)}\bigg)\left(\sqrt{\frac{8\gamma_k(n)}{\log(1+1/\tau^2)n}}+\frac{2}{\sqrt{n}}\right)+\frac{2}{n},
\end{eqnarray}

where the first and second lines hold by $\Ec_1$ and $\Ec_2$ (as well as $\betat_n(\delta) = \beta(\frac{\delta}{2\Gamma_n})$), and the third line comes from substituing the bound on $\sigmabar_n(x)$ from~\eqref{eq:bound_sigmabar}.
Recall that
$\Gamma_n=  c  \left(C^{\mubar_{n}}_k(\frac{\delta}{2})\right)^dn^{d}$ and $C^{\mubar_n}_k(\delta)=C_k(1+\frac{\sqrt{n}k_{\max}}{\tau})+\frac{\sqrt{n}R}{\tau}\sqrt{2\log(\frac{2n}{\delta})}$. This gives $\Gamma_n=\Oct(C_k^dn^{3d/2})$, and we arrive at
\begin{eqnarray}
\|f-\bar{\mu}_n\|_{L^{\infty}}  = \Oc\left(\sqrt{\frac{d\gamma_k(n)}{n}\log\Big(\frac{n}{\delta}\Big)}\right),
\end{eqnarray}
which completes the proof.




\section{Proofs of Theorems~\ref{the:optimization} and~\ref{the:optimizationcont} (Regret Bounds for Optimization)}\label{app:the:optimization}

Let $t_i=\sum_{j=1}^iN_i$ denote the time at the end of batch $i$, and $\Rc_i=\sum_{j=t_{i-1}+1}^{t_i}(f(x^*)-f(x_j))$ denote the regret incurred in batch $i$. We thus have $\Rc(N)=\sum_{i=1}^B\Rc_i$.

Let $\Ec_1=\{\forall i, \forall x\in\Xc, l_i(x)\le f(x)\le u_i(x)\}$, for $l_i,u_i$ given in~\eqref{eq:conf_batch}, be the event that all the confidence intervals at the end of all batches are satisfied.
By Theorem~\ref{the:conf_subG} and a probability union bound, we have $\Pr[\Ec_1]\ge1-\frac{\delta}{2}$. In addition, let $\Ec_2=\{\forall i,j,\lambda_{\max}^{(i,j)}\le \tau^2\}$. By Lemma~\ref{Lemma:recRLS}, and a probability union bound over $i$ and $j$, we have $\Pr[\Ec_2]\ge 1-\frac{\delta}{2}$. We thus condition the rest of the proof on $\Ec_1\cap\Ec_2$, which holds with probability at least $1-\delta$. 


Under $\Ec_1$, we have for all $x\in\Xc$ that
\begin{eqnarray}\nn
u_{i}(x^*) &\ge& f(x^*)\\\nn
&\ge& f(x)\\\nn
&\ge& l_i(x).
\end{eqnarray}
Thus, $u_i(x^*)\ge\max_{x\in\Xc_i} l_i(x)$ for all $i$, and $x^*$ will not be eliminated by S-BPE; i.e., $x^*\in\Xc_i$ for all $i$. 

The maximum regret in the first batch is bounded as $\Rc_1\le\sqrt{N}(\max_{x\in\Xc}(f(x^*)-f(x)))$.  Moreover, by the reproducing property, $f(x)=\langle f(\cdot),k(x,\cdot)\rangle\le \|f\|_{\Hc_k}k(x,x)\le C_kk_{\max}$. Thus, we have $\Rc_1\le C_kk_{\max}\sqrt{N}$.

For $n>t_{i-1}$ and $i\ge 2$, the regret can be bounded using a similar approach as in the analysis of Theorem~\ref{the:regression} (see also \cite{li2021gaussian}):
\begin{eqnarray}\nn
f(x^*)-f(x_n) &\le&
u_{i-1}(x^*) - l_{i-1}(x_n)\\\nn
&\le& l_{i-1}(x^*)+2\beta\Big(\frac{\delta}{4B|\Xc|}\Big)\sigmabar_{N_{i-1},i-1}(x^*) - u_{i-1}(x_n)+2\beta\Big(\frac{\delta}{4B|\Xc|}\Big)\sigmabar_{N_{i-1},i-1}(x_n)\\\nn
&\le&2\beta\Big(\frac{\delta}{4B|\Xc|}\Big)\sigmabar_{N_{i-1},i-1}(x^*) +2\beta\Big(\frac{\delta}{4B|\Xc|}\Big)\sigmabar_{N_{i-1},i-1}(x_n) \\\nn
&\le& 4\beta\Big(\frac{\delta}{4B|\Xc|}\Big)\sqrt{\frac{8\gamma_k(N_{i-1})}{\log(1+1/\tau^2)N_{i-1}}},
\end{eqnarray}
where the third line follows from the elimination rule of S-BPE, and the last line is proven following the same steps as in the proof of Equation~\eqref{eq:bound_sigmabar}.

Thus, 
\begin{eqnarray}\nn
\Rc_i&=&\sum_{n=t_{i-1}+1}^{t_i} \big(f(x^*)-f(x_n)\big) \\\nn
&\le& 4N_i\beta\Big(\frac{\delta}{4B|\Xc|}\Big)\sqrt{\frac{8\gamma_k(N_{i-1})}{\log(1+1/\tau^2)N_{i-1}}}\\\nn
&\le& 4\beta\Big(\frac{\delta}{4B|\Xc|}\Big)(\sqrt{N}+1)\sqrt{\frac{8\gamma_k(N_{i-1})}{\log(1+1/\tau^2)}}\\\nn
&\le& 
4\beta\Big(\frac{\delta}{4B|\Xc|}\Big)(\sqrt{N}+1)\sqrt{\frac{8\gamma_k(N)}{\log(1+1/\tau^2)}},
\end{eqnarray}
where the second inequality follows from the choice of $N_i$ in Line \ref{line:Ni} of Algorithm~\ref{Alg:S-PBE}.

The definition of $N_i$ ensures that the number of batches satisfies $B \le \lceil\log\log(N)\rceil+1$ \citep[][Proposition~$1$]{li2021gaussian}.  As a result, we have
\begin{eqnarray}\nn
\Rc(N)&=&\sum_{i=1}^B \Rc_i\\\nn
&\le&  4\beta\Big(\frac{\delta}{4B|\Xc|}\Big)(\sqrt{N}+1)\left(\log\log(N)+2\right)\sqrt{\frac{8\gamma_k(N)}{\log(1+1/\tau^2)}}\\\nn
&\le&
\left(12C_k+\frac{4R}{\tau}\sqrt{2\log(\frac{4|\Xc|(\log\log(N)+2)}{\delta})}\right)
(\sqrt{N}+1)\left(\log\log(N)+2\right)\sqrt{\frac{8\gamma_k(N)}{\log(1+1/\tau^2)}}
\\\nn
&=&\Oc\left(\sqrt{N\gamma_k(N)\log\log(N)\log\left(\frac{|\Xc|\log\log(N)}{\delta}\right)}\right),
\end{eqnarray}
where the third line uses the definition of $\beta(\cdot)$ in Theorem~\ref{the:conf_subG}.  Using $\Oct(\cdot)$ notation, this simplifies to $\Rc(N) = \Oct\big(\sqrt{N\gamma_k(N)\log\big(\frac{|\Xc|}{\delta}\big)}\big)$, which establishes Theorem \ref{the:optimization}.

\paragraph{Extension to continuous domains.} 

The proof of Theorem~\ref{the:optimizationcont} is similar to that of Theorem~\ref{the:optimization}. Define the event $\Ec'_1=\{\forall i, \forall x\in\Xc, l_i(x)\le f(x)\le u_i(x)\}$, for $l_i,u_i$ given in~\eqref{eq:conf_cont}.
By Corollary~\ref{cor:conf_cont}, we have $\Pr[\Ec'_1]\ge 1-\frac{\delta}{2}$. Recalling that $\Ec_2 = \{\forall i, j, \lambda^{(i,j)}_{\max}\le \tau^2\}$ and $\Pr[\Ec_2]\ge 1-\frac{\delta}{2}$, we can condition on $\Ec'_1\cap\Ec_2$, which holds with probability at least $1-\delta$.

Then, following similar steps as in the proof of Theorem~\ref{the:optimization}, we have

\begin{eqnarray}\nn
f(x^*)-f(x_n) 
&\le& 4\betat_{N_{i-1}}\Big(\frac{\delta}{4B}\Big)\left(\sqrt{\frac{8\gamma_k(N_{i-1})}{\log(1+1/\tau^2)N_{i-1}}}+\frac{2}{\sqrt{N_{i-1}}}\right)+\frac{8}{N_{i-1}},
\end{eqnarray}
and consequently, for $i\ge 2$,
\begin{eqnarray}
\Rc_i\le 4\betat_{N}\Big(\frac{\delta}{4B}\Big)\left((\sqrt{N}+1)\sqrt{\frac{8\gamma_k(N)}{\log(1+1/\tau^2)}}+2(\sqrt{N}+1)\right)+8\sqrt{N}.
\end{eqnarray}

Note that $\betat_n(\delta)$ in non-decreasing in $n$.
Summing the regret over batches, we have
\begin{eqnarray}\nn
\Rc(N)&=&\sum_{i=1}^B \Rc_i\\\nn
&\le& 4\betat_{N}\Big(\frac{\delta}{4B}\Big)(\sqrt{N}+1)\left(\log\log(N)+2\right)\left(\sqrt{\frac{8\gamma_k(N)}{\log(1+1/\tau^2)}}+2\right)+8\sqrt{N}(\log\log(N)+2)\\\nn
&\le&
\left(12C_k+\frac{4R}{\tau}\sqrt{2\log\Big(\frac{8\Gamma_N(\log\log(N)+2)}{\delta}\Big)}\right)
(\sqrt{N}+1)\left(\log\log(N)+2\right)\left(\sqrt{\frac{8\gamma_k(N)}{\log(1+1/\tau^2)}}+2\right)\\\nn
&&\hspace{25em}+~8\sqrt{N}(\log\log(N)+2)
\\\nn
&=&\Oc\left(\sqrt{N\gamma_k(N)\log\log(N)\log\left(\frac{\Gamma_N\log\log(N)}{\delta}\right)}\right).
\end{eqnarray}

Recall the notations $\betat_n(\delta) = \beta(\frac{\delta}{2\Gamma_n})$,
$\Gamma_n=  c  \left(C^{\mubar_{n}}_k(\frac{\delta}{2})\right)^dn^{d}$ and $C^{\mubar_n}_k(\delta)=C_k(1+\frac{\sqrt{n}k_{\max}}{\tau})+\frac{\sqrt{n}R}{\tau}\sqrt{2\log(\frac{2n}{\delta})}$.  This yields $\Gamma_n=\Oct(C_k^dn^{3d/2})$, and we arrive at
\begin{eqnarray}\nn
\Rc(N)
&=&\Oct\left(
\sqrt{Nd\gamma_k(N)\log\Big(\frac{N}{\delta}\Big)}
\right),
\end{eqnarray}
which completes the proof of Theorem \ref{the:optimizationcont}.


\section{Proofs of Auxiliary Lemmas}\label{app:proof_lemmas}

In this section, we prove the auxiliary lemmas stated in the preceding sections.

\subsection{Proof of Lemma~\ref{Lemma:projectin_mean} (Expression for the Projection)}

We can express the projection as $\Pi_{\kappa}[f]  = \sum_{j=1}^m\alpha^*_jk(\cdot,z_j)$, where $\alpha^*=[\alpha_1,\dots,\alpha_m]^{\top}$ is the solution of 
\begin{eqnarray}
\min_{\alpha\in\Rr^m}\Big\|f-\sum_{j=1}^m\alpha_jk(\cdot,z_j)\Big\|_{\Hc_{k}}.
\end{eqnarray}
By the reproducing property, we have 
\begin{eqnarray}
\Big\|f(\cdot)-\sum_{j=1}^m\alpha_jk(\cdot,z_j)\Big\|^2_{\Hc_{k}} = \|f\|^2_{\Hc_k} - 2\alpha^{\top} f_{Z_m} + \alpha^{\top} k_{Z_m,Z_m}\alpha.
\end{eqnarray}
Explicitly solving for the minimum of this quadratic expression gives $\alpha^*=k^{-1}_{Z_m,Z_m}f_{Z_m}$, and thus $\Pi_{\kappa}[f]= k^{\top}_{Z_m}(\cdot)k^{-1}_{Z_m,Z_m}f_{Z_m}$, which completes the proof.

\subsection{Proof of Lemma~\ref{lemma:Term2} (Expression for Term 2)}

We show that $\text{Term}~2 = k^{\top}_{Z_m}(\cdot)\left(k_{Z_m,Z_m}+\frac{1}{\tau^2}k^{\top}_{X_n,Z_m}k_{X_n,Z_m}\right)^{-1}k_{Z_m}(\cdot)$ is equal to $\sigma^2_{\kappa,\tau, X_n}(\cdot)$.  We have
\begin{eqnarray}\nn
\text{Term}~ 2 &=&k^{\top}_{Z_m}(\cdot)\Big(k_{Z_m,Z_m}+\frac{1}{\tau^2}k^{\top}_{X_n,Z_m}k_{X_n,Z_m}\Big)^{-1}k_{Z_m}(\cdot)\\\nn
&=& k^{\top}_{Z_m}(\cdot)\Big(k^{-1}_{Z_m,Z_m} -k^{-1}_{Z_m,Z_m}k^{\top}_{X_n,Z_m} \left(k_{X_n,Z_m}k^{-1}_{Z_m,Z_m}k^{\top}_{X_n,Z_m}+\tau^2 I_n\right)^{-1}k_{X_n,Z_m}k^{-1}_{Z_m,Z_m}\Big)k_{Z_m}(\cdot)\\\nn
&=& k^{\top}_{Z_m}(\cdot)k^{-1}_{Z_m,Z_m}k_{Z_m}(\cdot)\\\nn &&-k^{\top}_{Z_m}(\cdot)k^{-1}_{Z_m,Z_m}k_{Z_m,X_n} \left(k^{\top}_{Z_m,X_n}k^{-1}_{Z_m,Z_m}k_{Z_m,X_n}+\tau^2I_n\right)^{-1}k^{\top}_{Z_m,X_n}k^{-1}_{Z_m,Z_m}k_{Z_m}(\cdot)
\\
&=& \kappa(\cdot,\cdot) - \kappa^{\top}_{X_n}(\cdot)(\kappa_{X_n,X_n}+\tau^2I_n)^{-1}\kappa_{X_n}(\cdot)\\\nn
&=& \sigma^2_{\kappa,\tau, X_n}(\cdot),
\end{eqnarray}
where the second line follows form Woodbury matrix identity, and the fourth line follows from the definition of $\kappa(\cdot, \cdot)$ given in Equation \eqref{eq:kappa}.

\subsection{Proof of Lemma~\ref{lemma:V_nkappa} (Expression for $V_n$)}

We have
\begin{eqnarray}\nn
\kappa^{\top}_{X_n}(\cdot)(\tau^2I_n+\kappa_{X_n,X_n})^{-1}
&=&k_{Z_m}(\cdot)k^{-1}_{Z_m,Z_m}k^{\top}_{X_n, Z_m}\left(k_{X_n,Z_m}k^{-1}_{Z_m,Z_m}k^{\top}_{X_n,Z_m}+\tau^2 I_n\right)^{-1}\\\nn
&=& k_{Z_m}(\cdot)k^{-1}_{Z_m,Z_m}\left(k^{\top}_{X_n,Z_m}k_{X_n,Z_m}k^{-1}_{Z_m,Z_m}+\tau^2 I_n\right)^{-1}k^{\top}_{X_n,Z_m} \\\nn
&=& k^{\top}_{Z_m}(\cdot)\left(\tau^2k_{Z_m,Z_m}+k^{\top}_{X_n,Z_m}k_{X_n,Z_m}\right)^{-1} k^{\top}_{X_n,Z_m}\\\nn
&=&V_n,
\end{eqnarray}
where the first line comes from the definition of $\kappa$, the second line uses the push-through matrix identity $A(I+BA)^{-1} = (I+AB)^{-1}A$ with $A=k^{\top}_{X_n,Z_m}$ and $B=k_{X_n,Z_m}k^{-1}_{Z_m,Z_m}$, and the third line uses $C^{-1}D^{-1}=(DC)^{-1}$ with $C=k_{Z_m,Z_m}$ and $D=\tau^2 I_n +k^{\top}_{X_n,Z_m}k_{X_n,Z_m}$.

\subsection{Proof of Lemma~\ref{lemma:chern} (High-Probability Bounds on Term 4)}

For a sub-Gaussian random variable $X$ with parameter $R$, Chernoff-Hoeffding inequality~\citep[e.g., see][]{antonini2008convergence} implies that the following inequalities hold with probability at least $1-\delta$ each (or $1-2\delta$ jointly):
\begin{eqnarray}\nn
X&\le& R\sqrt{2\log\Big(\frac{1}{\delta}\Big)},\\\nn
X&\ge& -R\sqrt{2\log\Big(\frac{1}{\delta}\Big)}.
\end{eqnarray}

When the vector $V_n(x)$ is constructed independent of $E_n$, and under Assumption~\ref{ass:noise}, $V_n^{\top}(x)E_n$ is sub-Gaussian with parameter $R\|V_n(x)\|_{l^2}$. Using Theorem~\ref{the:approximatevariance}, we have   $R\|V_n(x)\|_{l^2}\le \frac{R\sigmabar_n(x)}{\tau}$. Thus, applying Chernoff-Hoeffding inequality to $V_n^{\top}(x)E_n$, we have the following with probability $1-\delta$ each:
\begin{eqnarray}\nn
V^{\top}_n(x)E_n &\le& \frac{R\sigmabar_n(x)}{\tau}\sqrt{2\log\Big(\frac{1}{\delta}\Big)},\\\nn
V^{\top}_n(x)E_n &\ge& - \frac{R\sigmabar_n(x)}{\tau}\sqrt{2\log\Big(\frac{1}{\delta}\Big)}.
\end{eqnarray}

\subsection{Proof of Lemma~\ref{lemma:lt} (High-Probability Bounds on the Noise Term)}

When $V_n(x)$ is independent of $E_n$, and when the noise vector $E_n$ satisfies Assumption~\ref{ass:light_tailed}, following a standard Chernoff-like confidence bound, the following equations each hold with probability at least $1-\delta$~\citep[e.g., see][]{vakili2021optimal}:
\begin{eqnarray}\nn
V^{\top}_n(x)E_n &\le& \|V_n(x)\|_{l^2} \sqrt{2\max\left\{\xi_0,\frac{2\log(\frac{1}{\delta})}{h_0^2}\right\}\log\Big(\frac{1}{\delta}\Big)},\\\nn
V^{\top}_n(x)E_n &\ge& - \|V_n(x)\|_{l^2} \sqrt{2\max\left\{\xi_0,\frac{2\log(\frac{1}{\delta})}{h_0^2}\right\}\log\Big(\frac{1}{\delta}\Big)}.
\end{eqnarray}

Substituting $\|V_n(x)\|_{l^2}\le \frac{\sigmabar_n(x)}{\tau}$ from Theorem~\ref{the:approximatevariance}, we arrive at the lemma.

\subsection{Proof of Lemma~\ref{lemma:norm_mu_n} (RKHS Norm of the Approximate Posterior Mean)}

Recalling that $\mubar_n(\cdot) = V_n^{\top}(\cdot)Y_n$, we have
\begin{eqnarray}\nn
\|\mubar_n(\cdot)\|_{\Hc_k} &=& \|V_n^{\top}(\cdot)Y_n\|_{\Hc_k}\\\nn
&=& \|V_n^{\top}(\cdot)(f_{X_n}+E_n)\|_{\Hc_k}\\\nn
&\le&\underbrace{\|V_n^{\top}(\cdot)\Pi_{\kappa}[f]_{X_n}\|_{\Hc_k}}_{\text{Term}~1} +
\underbrace{\|V_n^{\top}(\cdot)(f_{X_n}-\Pi_{\kappa}[f]_{X_n})\|_{\Hc_k}}_{\text{Term}~2} +
\underbrace{\|V_n^{\top}(\cdot)E_n\|_{\Hc_k}}_{\text{Term}~3}.
\end{eqnarray}
We proceed by bounding these three terms. 

\paragraph{Term~$1$.} Recall the RKHS-based definition of $\mubar_n$:
\begin{eqnarray}\nn
\bar{\mu}_n=\arg\min_{g\in\Hc_{\kappa}} \sum_{i=1}^n(y_i-g(x_i))^2+\tau^2\|g\|^2_{\Hc_\kappa}.
\end{eqnarray}

Consider a noiseless projected dataset $\tilde{\Dc}_n = \{(x_i,\Pi_{\kappa}[f](x_i))_{i=1}^n\}$, and note that $\mubar_{k,\tau, Z_m,X_n,\Pi_{\kappa}[f]_{X_n}}=V^{\top}_n(x)\Pi_{\kappa}[f]_{X_n}$. Moreover, using the RKHS-based definition of $\mubar_{k,\tau, Z_m,X_n,\Pi_{\kappa}[f]_{X_n}}$, we have
\begin{eqnarray}\nn
\sum_{i=1}^n(\Pi_{\kappa}[f](x_i)-V^{\top}_n(x_i)\Pi_{\kappa}[f]_{X_n})^2+\tau^2\|V^{\top}_n(\cdot)\Pi_{\kappa}[f]_{X_n}\|^2_{\Hc_\kappa}&\le& \sum_{i=1}^n(\Pi_{\kappa}[f](x_i)-\Pi_{\kappa}[f](x_i))^2+\tau^2\|\Pi_{\kappa}[f]\|^2_{\Hc_\kappa} \\\nn
&=& \tau^2\|\Pi_{\kappa}[f]\|^2_{\Hc_\kappa}.
\end{eqnarray}
Thus, $\|V^{\top}_n(\cdot)\Pi_{\kappa}[f]_{X_n}\|_{\Hc_\kappa}\le \|\Pi_{\kappa}[f]\|_{\Hc_\kappa}=\|\Pi_{\kappa}[f]\|_{\Hc_k}\le\|f\|_{\Hc_k}\le C_k$.  

\paragraph{Term $2$.} First note that for any vector $v\in\Rr^{n}$, we have
\begin{eqnarray}\nn
\|V^{\top}_n(\cdot)v\|^2_{\Hc_{\kappa}}&=& 
\langle V^{\top}_n(\cdot)v,V^{\top}_n(\cdot)v\rangle_{\Hc_{\kappa}}\\\nn
&=& 
\langle \kappa^{\top}_{X_n}(\cdot)(\tau^2I_n+\kappa_{X_n,X_n})^{-1}v,\kappa^{\top}_{X_n}(\cdot)(\tau^2I_n+\kappa_{X_n,X_n})^{-1}v\rangle_{\Hc_{\kappa}}\\\nn
&=&v^{\top}(\kappa_{X_n,X_n}+\tau^2 I_n)^{-1}\kappa_{X_n,X_n}(\kappa_{X_n,X_n}+\tau^2 I_n)^{-1}v\\\nn
&=&v^{\top}(\kappa_{X_n,X_n}+\tau^2 I_n)^{-1}(\kappa_{X_n,X_n}+\tau^2 I_n-\tau^2 I_n)(\kappa_{X_n,X_n}+\tau^2 I_n)^{-1}v\\\nn
&=&v^{\top}(\kappa_{X_n,X_n}+\tau^2 I_n)^{-1}v - \tau^2v^{\top}(\kappa_{X_n,X_n}+\tau^2 I_n)^{-2}v \\\nn
&\le&v^{\top}(\kappa_{X_n,X_n}+\tau^2 I_n)^{-1}v \\
&\le&\frac{\|v\|^2_{l^2}}{\tau^2}, \label{eq:V_norm}
\end{eqnarray}
where the second line uses $V^{\top}_n(\cdot) = \kappa^{\top}_{X_n}(\cdot)(\tau^2I_n+\kappa_{X_n,X_n})^{-1}$ from Lemma~\ref{lemma:V_nkappa}, and the third line follows from the reproducing property of the RKHS. Thus, 
\begin{eqnarray}\nn
\|V_n^{\top}(\cdot)(f_{X_n}-\Pi_{\kappa}[f]_{X_n})\|_{\Hc_k}&\le& \frac{\|f_{X_n}-\Pi_{\kappa}[f]_{X_n}\|_{l^2}}{\tau}\\\nn
&\le& \frac{\sqrt{n}\max_{1\le i\le n}(f(x_i)-\Pi_{\kappa}[f](x_i))}{\tau}.
\end{eqnarray}

Recall the notations $f^{\perp}=f-\Pi_{\kappa}[f]$ and $\kappa^{\perp}=k-\kappa$. By the reproducing property of the RKHS, we have
\begin{eqnarray}\nn
f^{\perp}(x)  &=&\langle{f^{\perp}(\cdot),\kappa^{\perp}(x,\cdot)\rangle_{\Hc_{\kappa^{\perp}}}}\\\nn
&\le&\|f^{\perp}\|_{\Hc_{\kappa^{\perp}}}\kappa^{\perp}(x,x)\\\nn
&\le& C_k k_{\max},
\end{eqnarray}
where we used $\|f^{\perp}\|_{\Hc_{\kappa^{\perp}}} = \|f^{\perp}\|_{\Hc_{k}}\le \|f\|_{\Hc_k}\le C_k$, as well as $\kappa^{\perp}(x,x)\le k(x,x)$, which follows from the definition of~$\kappa$. 

Thus, we have
\begin{eqnarray}\nn
\|V_n^{\top}(\cdot)(f_{X_n}-\Pi_{\kappa}[f]_{X_n})\|_{\Hc_k}\le \frac{\sqrt{n}C_k k_{\max}}{\tau}. 
\end{eqnarray}

\paragraph{Term~$3$.} For the third term, using $\|V^{\top}_n(\cdot)v\|_{\Hc_k}\le\frac{\|v\|_{l^2}}{\tau}$ as established in \eqref{eq:V_norm}, we have
\begin{eqnarray}\nn
\|V^{\top}_n(\cdot)E_n\|^2_{\Hc_k}\le\frac{\|E_n\|^2_{l^2}}{\tau^2}. 
\end{eqnarray}

Under Assumption~\ref{ass:noise}, as a result of Chernoff-Hoeffding inequality, we have for each $i \in \{1,\dotsc,n\}$ that
\begin{eqnarray}\nn
\epsilon_i^2\le 2R^2\log\Big(\frac{2}{\delta}\Big) 
\end{eqnarray}
with probability at least $1-\delta$.  Hence, using a probability union bound, we obtain
\begin{eqnarray}\nn
\frac{\|E_n\|^2_{l^2}}{\tau^2}\le \frac{2nR^2}{\tau^2}\log\Big(\frac{2n}{\delta}\Big)
\end{eqnarray}
with probability at least $1-\delta$, which in turn gives
\begin{eqnarray}\nn
\|V^{\top}_n(\cdot)E_n\|_{\Hc_k}\le\frac{\sqrt{n}R}{\tau}\sqrt{2\log\Big(\frac{2n}{\delta}\Big)}. 
\end{eqnarray}

Putting the bounds on Terms~$1$ to $3$ together, we arrive at the lemma.

\subsection{Proof of Lemma~\ref{lemma:variance_disc} (Discretization Error in the Posterior Standard Deviation)}

Recall the expression
\begin{eqnarray}\nn
\sigmabar^2_n(\cdot,\cdot) &=& \underbrace{k(\cdot,\cdot) - k^{\top}_{Z_m}(\cdot)k^{-1}_{Z_m,Z_m}k_{Z_m}(\cdot)}_{\text{Term}~1}+ \underbrace{k^{\top}_{Z_m}(\cdot)\left(k_{Z_m,Z_m}+\frac{1}{\tau^2}k^{\top}_{X_n,Z_m}k_{X_n,Z_m}\right)^{-1}k_{Z_m}(\cdot)}_{\text{Term}~2},
\end{eqnarray}
from the proof of Theorem~\ref{the:approximatevariance}, and recall that Term $2 = \sigma^2_{\kappa,\tau,X_n}(\cdot)$ from Lemma~\ref{lemma:Term2}.  By the definition $\kappa(\cdot, \cdot') = k^{\top}_{Z_m}(\cdot)k^{-1}_{Z_m,Z_m}k_{Z_m}(\cdot')$, it follows that
\begin{eqnarray*}
\sigmabar^2_n(\cdot,\cdot) = k(\cdot,\cdot) - \kappa^{\top}_{X_n}(\cdot)(\kappa_{X_n,X_n}+\tau^2I_n)^{-1}\kappa_{X_n}(\cdot).
\end{eqnarray*}

By reproducing property, we have for all $x\in\Xc$ that $\|k(x,.)\|_{\Hc_k}=k(x,x)\le k_{\max}$, where $k_{\max}=\sup_{x\in\Xc}k(x,x)$. 
We also have for all $x\in\Xc$ that
\begin{eqnarray*}
\left\|\kappa^{\top}_{X_n}(x)(\kappa_{X_n,X_n}+\tau^2I_n)^{-1}\kappa_{X_n}(\cdot)\right\|^2_{\Hc_k}  &=& \langle \kappa^{\top}_{X_n}(x)(\kappa_{X_n,X_n}+\tau^2I_n)^{-1}\kappa_{X_n}(\cdot),\kappa^{\top}_{X_n}(x)(\kappa_{X_n,X_n}+\tau^2I_n)^{-1}\kappa_{X_n}(\cdot)\rangle\\
&=& \kappa^{\top}_{X_n}(x)(\kappa_{X_n,X_n}+\tau^2I_n)^{-1}\kappa_{X_n,X_n}(\kappa_{X_n,X_n}+\tau^2I_n)^{-1}\kappa_{X_n}(x)\\
&=&\kappa^{\top}_{X_n}(x)(\kappa_{X_n,X_n}+\tau^2I_n)^{-1}\kappa_{X_n}(x)-\kappa^{\top}_{X_n}(x)(\kappa_{X_n,X_n}+\tau^2I_n)^{-2}\kappa_{X_n}(x)\\
&\le&\frac{1}{\tau^2}\|\kappa_{X_n}(x)\|^2\\
&\le&\frac{nk^2_{\max}}{\tau^2},
\end{eqnarray*}
by similar steps to those leading up to \eqref{eq:V_norm}.
Thus, $\left\|\kappa^{\top}_{X_n}(x)(\kappa_{X_n,X_n}+\tau^2I_n)^{-1}\kappa_{X_n}(\cdot)\right\|_{\Hc_k}\le \frac{\sqrt{n}k_{\max}}{\tau}$.

In the following, define the function $q(\cdot,\cdot')=\kappa^{\top}_{X_n}(\cdot)(\kappa_{X_n,X_n}+\tau^2I_n)^{-1}\kappa_{X_n}(\cdot')$.
For all $x\in\Xc$, applying Assumption~\ref{ass:disc} to $k(x,.)$ and $q(x,.)$, with $\Xx$, we have $|k(x,x')-k(x,[x'])|\le\frac{1}{n}$ and $|q(x,x')-q(x,[x'])|\le\frac{1}{n}$. Thus,
\begin{eqnarray*}\nn
|\sigmabar^2_n(x) - \sigmabar^2([x]) |
&=&\big|(k(x,x)-q(x,x))-(k([x],[x])-q([x],[x]))\big|\\
&=&\big|(k(x,x)-q(x,x)) -(k(x,[x])-q(x,[x]))
+(k(x,[x])-q(x,[x]))
-(k([x],[x])-q([x],[x]))\big|\\\nn
&\le& |k(x,x)-k(x,[x])| + |k(x,[x])
-k([x],[x])| +|q(x,x)-q(x,[x])| - |q(x,[x])
-q([x],[x])|\\\nn
&\le&\frac{4}{n}.
\end{eqnarray*}

To obtain a discretization error bound for the standard deviation from that of the variance, we write
\begin{eqnarray}\nn
(\sigmabar_n(x) - \sigmabar([x]))^2&\le& 
\left|\sigmabar_n(x) - \sigmabar([x])\right|(\sigmabar_n(x) + \sigmabar([x]))\\\nn
&=&|\sigmabar^2_n(x) - \sigmabar^2([x])|\\\nn
&\le&\frac{4}{n}.
\end{eqnarray}

Therefore,
\begin{eqnarray*}
|\sigmabar_n(x) - \sigmabar([x])|\le\frac{2}{\sqrt{n}}.
\end{eqnarray*}

\subsection{Proof of Lemma~\ref{lemma:ratio} (Bounds on the Approximate Posterior Variance)}


For this proof, we introduce a feature map $\phi(\cdot)$ (for example, the Mercer feature vector) such that $k(x,x')=\phi^{\top}(x)\phi(x')$. Similarly, we introduce $\psi(\cdot)$ such that $\kappa(x,x')=\psi^{\top}(x)\psi(x')$. Let $\Phi = [\phi^{\top}(x_1), ...,\phi^{\top}(x_n)]^{\top}$ be the feature matrix at observation points, and similarly for $\Psi$. Note that $k_{X_n,X_n} = \Phi\Phi^{\top}$ and $\kappa_{X_n,X_n} = \Psi\Psi^{\top}$. For this proof, we use the feature space forms of the posterior variances~\citep[e.g., see,][]{Calandriello2019Adaptive}: $\sigma_n^{2}(x)=\phi^{\top}(x)(\Phi^{\top}\Phi+\tau^2I)^{-1}\phi(x)$, and $\sigmabar_n^{2}(x)=\phi^{\top}(x)(\Psi^{\top}\Psi+\tau^2I)^{-1}\phi(x)$  .


With these definitions in place, the definition of $\lambda_{\max}$ can equivalently be expressed as
\begin{eqnarray}\nn
\Phi^{\top}\Phi - \Psi^{\top}\Psi \preccurlyeq \lambda_{\max} I,
\end{eqnarray}
where $I$ is the identity operator, and the notation $A\preccurlyeq B$ means $B-A$ is positive semi-definite.  Thus, 
\begin{eqnarray}\nn
\Phi^{\top}\Phi+\tau^2I &\preccurlyeq& \Psi^{\top}\Psi+\tau^2I+\lambda_{\max}I\\\nn
&\preccurlyeq&
\Psi^{\top}\Psi+\tau^2I + 
\frac{\lambda_{\max}}{\tau^2}(\Psi^{\top}\Psi+\tau^2I)\\\nn
&=&\Big(1+\frac{\lambda_{\max}}{\tau^2}\Big)(\Psi^{\top}\Psi+\tau^2I),
\end{eqnarray}
and rearranging gives
\begin{eqnarray}\nn
(\Psi^{\top}\Psi+\tau^2I)^{-1} \preccurlyeq \Big(1+\frac{\lambda_{\max}}{\tau^2}\Big)(\Phi^{\top}\Phi+\tau^2I)^{-1}.
\end{eqnarray}

Thus, by the above equations for posterior variances in the feature space representation, we obtain
\begin{eqnarray}\nn
\sigmabar_n^2(x)\le \Big(1+\frac{\lambda_{\max}}{\tau^2}\Big)\sigma_n^2(x).
\end{eqnarray}

The remaining inequality in Lemma \ref{lemma:ratio} follows similarly from $\Psi^{\top}\Psi\preccurlyeq\Phi^{\top}\Phi$.


\end{document}